\documentclass{article}

\usepackage{iclr2026_conference,times}

\usepackage[utf8]{inputenc}
\usepackage[T1]{fontenc}
\usepackage{microtype}
\usepackage{graphicx}
\graphicspath{{figures/}}
\usepackage{wrapfig}
\usepackage{subfigure}
\usepackage{dblfloatfix}
\usepackage{booktabs}
\usepackage{url}
\usepackage{xcolor}

\usepackage{amsmath}
\usepackage{amssymb}
\usepackage{mathtools}
\usepackage{bm}
\usepackage{amsthm}
\usepackage{algorithm}
\usepackage{algorithmic}
\usepackage{array}
\usepackage{multirow}
\usepackage[hypertexnames=false,hidelinks]{hyperref}
\usepackage[capitalize,noabbrev]{cleveref}

\newcolumntype{P}[1]{>{\raggedright\arraybackslash}p{#1}}

\newcommand{\ntxt}[1]{\text{\normalfont #1}}

\newcommand{\E}{\mathbb{E}}

\newcommand{\FE}{\mathsf{FE}}

\newcommand{\vzero}{\bm{0}}
\newcommand{\vtheta}{\bm{\theta}}
\newcommand{\vu}{\bm{u}}
\newcommand{\vv}{\bm{v}}

\newcommand{\vx}{\bm{x}}

\newcommand{\vd}{\bm{d}}

\newcommand{\vg}{\bm{g}}
\newcommand{\vm}{\bm{m}}
\newcommand{\vdelta}{\bm{\delta}}

\newcommand{\rmI}{\bm{I}}

\newtheoremstyle{paperplain}
  {3pt plus 1pt minus 1pt}
  {3pt plus 1pt minus 1pt}
  {\fontfamily{LibertinusSerif-TLF}\selectfont\itshape}
  {}
  {\fontfamily{LibertinusSerif-TLF}\selectfont\bfseries}
  {.}
  {0.5em}
  {}
\theoremstyle{paperplain}
\newtheorem{theorem}{Theorem}[section]
\newtheorem{proposition}[theorem]{Proposition}
\newtheorem{lemma}[theorem]{Lemma}
\newtheorem{corollary}[theorem]{Corollary}
\theoremstyle{definition}
\newtheorem{definition}[theorem]{Definition}
\newtheorem{assumption}[theorem]{Assumption}
\theoremstyle{remark}

\title{ZoAQ: Adaptive Zeroth-Order Querying via Query-Reuse Coupling}
\author{
Yangyang Feng \qquad Yao Shu\\
The Hong Kong University of Science and Technology (Guangzhou)\\
\texttt{yfeng044@connect.hkust-gz.edu.cn} \quad
\texttt{yaoshu@hkust-gz.edu.cn}
}

\iclrfinalcopy
\begin{document}
\maketitle
\lhead{}
\begin{abstract}
Zeroth-order optimization (ZOO) estimates updates from function evaluations, making perturbation queries a primary cost. Fixed budgets spend the same number of queries at every step, while adaptive controllers may offset their savings by using additional oracle calls to test estimator reliability. We introduce ZoAQ\footnote{An implementation is available at \url{https://anonymous.4open.science/r/ZoAQ-312A}.}, an adaptive ZOO method built around query reuse. Rather than discarding past evaluations after each step, ZoAQ makes them useful for both the next update and the decision to query further. This enables adaptive query allocation without extra validation queries. Our analysis characterizes when this agreement identifies an update that supports descent and guides the controller to a sufficient query budget. On synthetic objectives, ZoAQ reduces queries by 43--48\% relative to fixed baselines using 1.2M queries. In black-box attacks, it reaches 100\% success with 320 and 625 average queries on MNIST and CIFAR-10, respectively. Across four OPT fine-tuning settings, ZoAQ saves 43--46\% forward evaluations relative to fixed $K=4$, with accuracy changes within tasks ranging from $-0.018$ to $+0.010$.
\end{abstract}

\section{Introduction}
\label{sec:introduction}

\begin{wrapfigure}{r}{0.39\linewidth}
\vspace{-0.8\baselineskip}
\centering
\includegraphics[width=\linewidth]{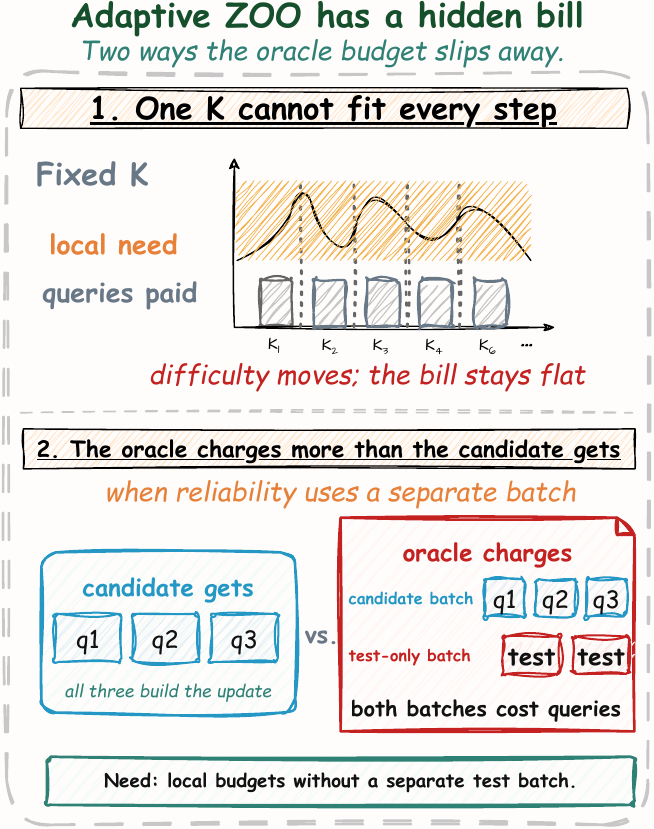}
\caption{Two hidden query costs in adaptive ZOO.}
\label{fig:zoaq_intro}
\vspace{-0.6\baselineskip}
\end{wrapfigure}

Zeroth-order optimization (ZOO) enables learning when gradients are unavailable or too costly to materialize, including black-box adversarial attacks and fine-tuning of large language models (LLMs) using forward passes only \citep{chen2017zoo,chen2019zoo,malladi2023finetuning}. Each update is estimated from function evaluations along random perturbations, so the local direction count determines both the reliability of the estimate and its cost \citep{spall1992multivariate,nesterov2017random}. A small budget may produce a noisy direction, whereas a large fixed budget wastes evaluations at iterates where fewer directions suffice. Choosing one budget for the entire trajectory therefore leaves the variation in local requirements unused.

Figure~\ref{fig:zoaq_intro} summarizes the two difficulties in allocating queries that motivate ZoAQ. Adaptive sampling can exploit this variation by adding directions only when the current estimate is unreliable. Existing rules often test estimator variance, norm, or accuracy \citep{Byrd2012,Bollapragada2023,bollapragada2024adaptive}. To run such a test, a controller may query a separate pilot or validation batch. Those evaluations consume the same oracle budget as the perturbations used to construct an update, but their responses do not improve the candidate estimate if they are used only for validation \citep{ghadimi2013stochastic}. When the candidate itself contains only a few directions, even a small validation batch can offset the savings from adaptation \citep{lin2025multi}. ZOO estimators based on variance reduction or query reuse can improve estimate quality under a prescribed sampling plan \citep{liu2018zeroth,qiu2025zoo}, but do not by themselves determine when the current iterate has received enough directions. A useful adaptive rule should instead make its decision from evidence that also enters the candidate update.

Momentum consistency is a natural candidate because it compares the current estimate with an exponential moving average (EMA) of past estimates. Momentum-based ZOO backbones already maintain this EMA \citep{chen2019zoo,shu2025refining}, so computing the cosine requires no additional function evaluations. When each step samples perturbation directions independently, however, this cosine is difficult to interpret. A low score may reflect either a change in the underlying gradient or simply a mismatch between the two direction samples.

ZoAQ makes this comparison useful by retaining past query information across consecutive steps. Each record stores its observed scalar response together with a reproducible perturbation seed, so the corresponding direction can be reconstructed without another oracle call. Retained records and newly acquired records form the same candidate estimate, whose agreement with the EMA determines whether ZoAQ accepts the update or adds more queries. After acceptance, the next step returns to the minimum budget. A difficult iterate can therefore spend more queries without making the larger budget the default at the next one. Because retained responses and every new evaluation contribute directly to the candidate update, the controller adapts the local query budget without a separate validation batch.

Our analysis explains when reuse is useful: additional historical records reduce estimation noise, but stale responses also introduce bias. We further show that the EMA error remains controllable even when consecutive estimates share records, so the consistency test can identify updates that support descent and stop expansion by a sufficient local budget.

Our contributions are summarized as follows:
\begin{itemize}
    \item We propose ZoAQ, which reuses past query information to construct the candidate update and decide whether more queries are needed, enabling adaptive local budgets without a separate validation batch.
    \item We characterize the trade-off between concentration and staleness induced by stored responses, derive pathwise EMA tracking under overlapping histories, and show that first passage yields a directionally safe update no later than a sufficient local budget.
    \item On synthetic objectives, ZoAQ reduces queries by 43--48\% relative to fixed baselines using 1.2M queries. In black-box attacks, it reaches 100\% success with 320 and 625 average queries on MNIST and CIFAR-10, respectively. Across four OPT fine-tuning settings, ZoAQ saves 43--46\% forward evaluations relative to fixed $K=4$, with accuracy changes within tasks ranging from $-0.018$ to $+0.010$.
\end{itemize}

\section{Related Work and Positioning}
\label{sec:related_work}

\paragraph{Fixed-budget zeroth-order backbones.}
A first line of work establishes how to make useful updates when gradients are unavailable. Classical zeroth-order methods estimate gradients from randomized function evaluations \citep{spall1992multivariate,nesterov2017random}, and the broader derivative-free optimization literature develops principled search and trust-region machinery for function-value oracles \citep{conn2000trust,conn2009introduction}. Modern ZOO methods bring this viewpoint to black-box learning and attacks through coordinate or random-direction estimators and adaptive-momentum updates \citep{chen2017zoo,chen2019zoo}, while R-AdaZO stabilizes the high-dimensional adaptive backbone used in our experiments \citep{shu2025refining}. Forward-only LLM fine-tuning gives the same query-accounting issue at model scale, with MeZO showing that language models can be tuned through forward passes and LoRA reducing the number of trainable parameters \citep{malladi2023finetuning,hu2021lora}. These methods solve the backbone problem: they specify how function evaluations become update directions. Their local query budget, however, is usually chosen outside the online accept-or-expand decision. ZoAQ keeps this backbone view and studies the missing allocation layer: when a step has enough query evidence to move. Normalized momentum provides a first-order precedent for separating directional progress from update magnitude \citep{cutkosky2020momentum,cutkosky2021high}; our normalized-update protocol follows this analytical convention to isolate directional reliability from update magnitude.

\paragraph{History reuse and variance reduction.}
A second line of work improves zeroth-order estimates by reducing variance or reusing information. ZO-SVRG adapts variance-reduction ideas to nonconvex ZOO \citep{liu2018zeroth}, and ZoAR shows that query reuse can produce more stable zeroth-order estimators \citep{qiu2025zoo}. These methods improve the estimator built from a collection of function evaluations, but they do not by themselves determine when sampling should stop at the current step. ZoAQ uses retained acquisition records to enlarge the candidate estimate and compares that candidate with an EMA of previously accepted estimates. Its distinct question is how the concentration gained from retained evidence can support local budget allocation without letting response staleness dominate.

\paragraph{Adaptive sampling.}
A third line of work chooses sample sizes online rather than fixing one budget for the whole trajectory. This idea is classical in first-order stochastic optimization \citep{Byrd2012}, and derivative-free or zeroth-order variants control stochastic approximation accuracy through norm-, variance-, or accuracy-based criteria \citep{Bollapragada2023,bollapragada2024adaptive}. Such criteria are effective when their reliability statistics can be estimated affordably. In low-query ZOO, additional oracle calls used only to assess reliability may consume a material fraction of the local budget. ZoAQ instead computes its momentum-consistency score from the candidate update and an EMA already maintained by the optimizer. Its novelty lies in combining stored-response estimation with an accept-or-expand allocation rule and characterizing when the resulting score is directionally meaningful.

\section{Problem Setup}
\label{sec:preliminaries}

\subsection{Zeroth-Order Estimation and Local Budgets}
\label{sec:zoo_problem}

The zeroth-order problem is to minimize $F(\vtheta)$ over $\vtheta\in\mathbb R^d$ using only function-value queries. For a smoothing radius $\mu>0$, define the Gaussian-smoothed objective
\begin{equation}
    F_\mu(\vtheta)
    \triangleq
    \mathbb E_{\vu\sim\mathcal N(0,\rmI_d)}
    [F(\vtheta+\mu\vu)].
\end{equation}
For an exogenous direction $\vu\sim\mathcal N(0,\rmI_d)$, two function evaluations define the scalar response and its single-direction vector estimate:
\begin{equation}
    r_\mu(\vtheta,\vu)
    \triangleq
    \frac{F(\vtheta+\mu\vu)-F(\vtheta-\mu\vu)}{2\mu},
    \qquad
    \vg_\mu(\vtheta,\vu)
    \triangleq
    r_\mu(\vtheta,\vu)\vu.
\end{equation}
For fixed $\vtheta$, the Gaussian smoothing identity gives $\mathbb E_{\vu}[\vg_\mu(\vtheta,\vu)]=\nabla F_\mu(\vtheta)$, so each direction yields an unbiased estimate of the smoothed gradient.

At step $t$, a fresh-query budget $K$ averages $K$ such estimates:
\begin{equation}
    \vg_t^{\mathrm{fresh}}(K)
    \triangleq
    \frac{1}{K}
    \sum_{k=1}^{K}
    \vg_\mu(\vtheta_t,\vu_{t,k}),
    \label{eq:grad_est_multi}
\end{equation}
where each $\vu_{t,k}$ has marginal distribution $\mathcal N(0,\rmI_d)$. The controller accepts $K_t$ from $\mathcal K=\{K_{\min},K_{\min}+\Delta K,\ldots,K_{\max}\}$, while a fixed-budget method uses the same $K$ at every step. ZoAQ combines the $K$ newly acquired records with retained records to form $\mathcal H_t(K)$ and the candidate estimate $\vg_t(K)$, with $n_t(K)=|\mathcal H_t(K)|$. Thus, $K$ counts fresh directions acquired at step $t$, while $n_t(K)$ counts the records used by the candidate estimate. Only the $K$ fresh directions incur new oracle calls at step $t$. Section~\ref{sec:framework} gives the exact construction.

\subsection{Directional Reliability}
\label{sec:norm_condition_flaw}

We assess a local budget by whether its estimated direction supports descent. To isolate this directional requirement, the analysis uses the following normalized-update protocol.
\begin{assumption}[Normalized-Update Analysis Protocol]
\label{ass:scale_invariant}
For a nonzero candidate estimate, let $\vd_t(K)=\vg_t(K)/\|\vg_t(K)\|$. The analyzed update is $\vtheta_{t+1}=\vtheta_t-\alpha_t\vd_t(K)$.
\end{assumption}
Under this protocol, a candidate budget satisfies the directional reliability condition when
\begin{equation}
    \cos\!\left(\vd_t(K),\nabla F_\mu(\vtheta_t)\right)
    \ge\delta,
    \qquad \delta>0,
    \label{eq:hidden_alignment_certificate}
\end{equation}

\begin{definition}[Oracle Alignment Budget]
\label{def:ideal_alignment_budget}
For $\delta>0$, when the following set is nonempty, define the oracle alignment budget
\begin{equation}
K_t^\circ(\delta)
\triangleq
\min\!\left\{
K\in\mathcal K:
\cos\!\left(\vd_t(K),\nabla F_\mu(\vtheta_t)\right)
\ge\delta
\right\}.
\label{eq:ideal_alignment_budget}
\end{equation}
If no grid point satisfies the condition, we set $K_t^\circ(\delta)=K_{\max}$ to keep the budget on the finite grid.
\end{definition}

The oracle budget depends on $\nabla F_\mu(\vtheta_t)$ and cannot be evaluated directly. ZoAQ instead chooses $K$ from a momentum-consistency score computed from the candidate estimate and recent accepted estimates. Every new record acquired during expansion enters the same candidate estimate.

\section{ZoAQ: Query-Reuse Adaptive Querying}
\label{sec:framework}

Figure~\ref{fig:zoaq_method_overview} summarizes how ZoAQ builds one nested candidate, expands it when needed, and commits the accepted update and controller state.

\begin{figure}[h]
\centering
\includegraphics[width=\linewidth]{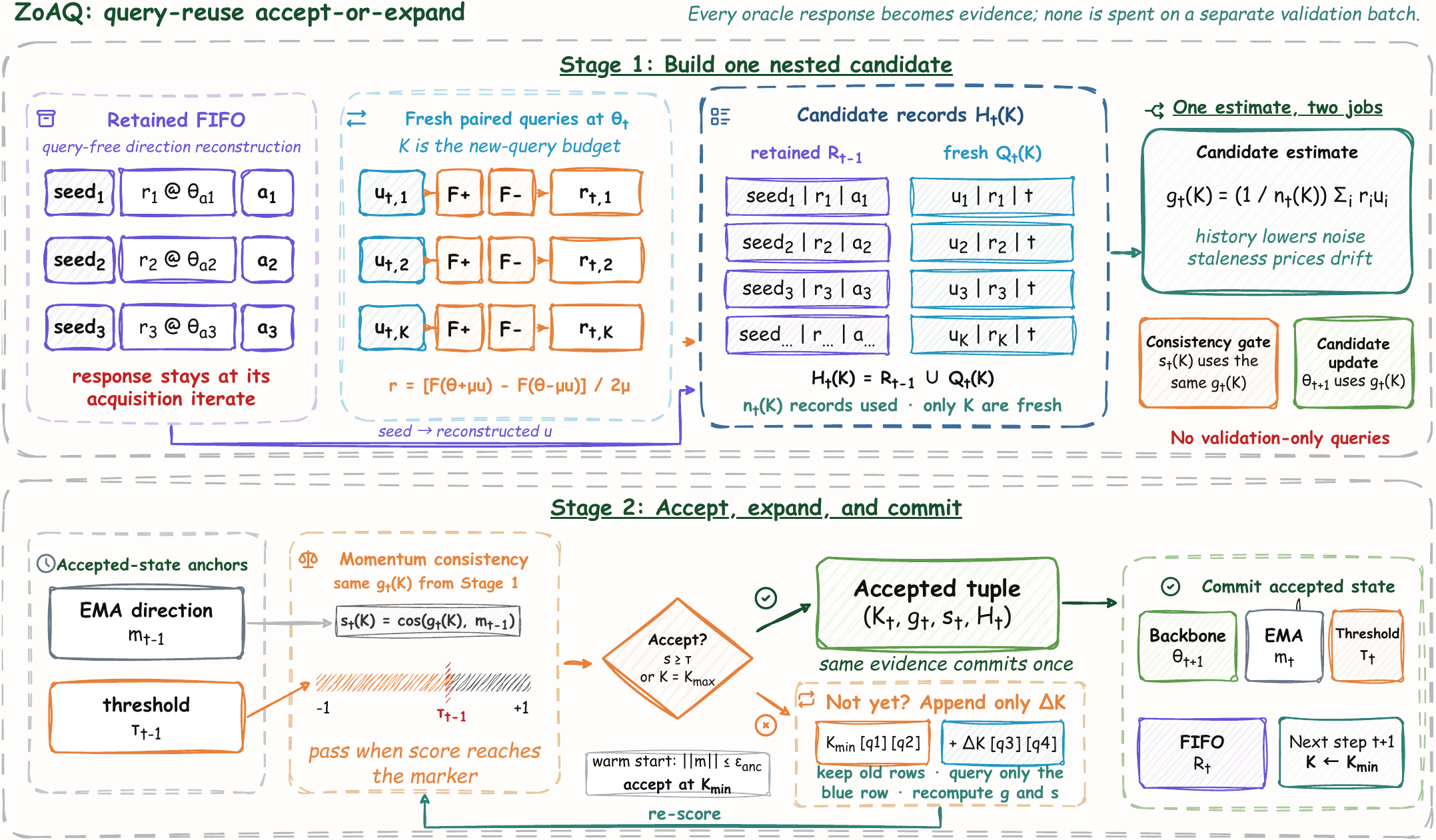}
\caption{ZoAQ at one optimizer step. Retained records and $K$ fresh directions form a nested candidate estimate. The same estimate drives both the backbone update and momentum-consistency gate; a failed test appends only $\Delta K$ fresh records, whereas acceptance commits the update, EMA, threshold, and FIFO history.}
\label{fig:zoaq_method_overview}
\end{figure}

\subsection{Query-Response Reuse}
\label{subsec:structural_correlation}

ZoAQ decides whether to expand the current query budget by comparing the candidate estimate with an exponential moving average (EMA) of previously accepted estimates. With fresh directions alone, a low consistency score can reflect either local gradient change or sampling noise. ZoAQ reduces this ambiguity by allowing retained query-response records to enter the candidate estimate.

Record $i$ stores a reproducible seed for the direction $\vu_i$, its acquisition step $a_i$, and the paired scalar response $r_i=r_\mu(\vtheta_{a_i},\vu_i)$ defined in Section~\ref{sec:zoo_problem}. The seed reconstructs $\vu_i$ without a new query, while $r_i$ remains the response observed at its acquisition iterate. Let $\mathcal R_{t-1}$ be the FIFO history and $\mathcal Q_t(K)$ the $K$ fresh records acquired at $\vtheta_t$. Their union $\mathcal H_t(K)=\mathcal R_{t-1}\cup\mathcal Q_t(K)$ defines the candidate estimate
\begin{equation}
    \vg_t(K)
    =
    \frac{1}{n_t(K)}
    \sum_{i=1}^{n_t(K)} r_i\vu_i,
    \qquad
    n_t(K)=|\mathcal H_t(K)|.
    \label{eq:history_reuse_grad_est}
\end{equation}
The sets $\mathcal Q_t(K)$ are nested, so expanding from $K$ to $K+\Delta K$ queries only the additional $\Delta K$ directions. After acceptance, FIFO retention keeps the most recent records for subsequent steps.

Query reuse and momentum average information at different stages. The candidate in Eq.~\ref{eq:history_reuse_grad_est} pools acquisition records before a budget is accepted, changing both the evidence available to the gate and the estimate sent to the backbone. The EMA instead pools previously accepted estimates after those decisions and supplies the reference at the next step. The first operation trades concentration against response staleness, whereas the second trades update noise against trajectory lag. They are complementary rather than duplicate smoothing operations; Appendix~\ref{app:aggregation_analysis} gives the corresponding decomposition.

\subsection{Momentum-Consistency Gate}

For each candidate budget, ZoAQ computes
\begin{equation}
    s_t(K)\triangleq \cos\!\left(\vg_t(K),\vm_{t-1}\right),
\end{equation}
where $\vm_{t-1}$ is the EMA of previously accepted estimates. Starting from $K_{\min}$, the controller accepts the first budget whose score reaches the current threshold:
\begin{equation}
    K_t=\min\!\left(\{K\in\mathcal K:s_t(K)\ge\tau_{t-1}\}\cup\{K_{\max}\}\right).
    \label{eq:zoaq_budget_rule}
\end{equation}
If $\|\vm_{t-1}\|\le\varepsilon_{\ntxt{anc}}$, ZoAQ takes a warm-start update at $K_{\min}$. If no score passes, it accepts the candidate at $K_{\max}$. Otherwise, every failed test adds fresh records to the same candidate estimate before the score is recomputed. The added queries therefore improve the candidate that can be accepted at the same step. After acceptance, ZoAQ passes $\vg_t$ to the ZOO backbone, updates the estimate EMA and score threshold, and restarts the next step from $K_{\min}$.

This control flow separates evidence acquisition within a step from state adaptation across steps. Within a step, rejection changes only the fresh budget: it neither advances $\vtheta_t$ nor updates the EMA, threshold, or FIFO buffer. Because the candidate sets are nested, each retest preserves all records already collected at that step and pays only for the next $\Delta K$ fresh directions. Acceptance is the commit point at which the chosen candidate updates the backbone and persistent controller state. Restarting from $K_{\min}$ then lets every new iterate test whether the cheapest budget is adequate, while the calibrated threshold carries information about the consistency of previously accepted estimates. Accordingly, $K_t$ is a local stopping outcome rather than a persistent operating level: expansion at one iterate does not force the next iterate to inherit the larger budget.

In Algorithm~\ref{alg:zoaq_main}, \textsc{Score} returns $\tau_{t-1}$ during warm start, returns $-\infty$ when the candidate estimate is zero and the EMA is nonzero, and otherwise returns the cosine score above. \textsc{ExtendHistory} carries retained records forward and adds only the new records from the current step needed to reach $K$. For a finite accepted score, \textsc{Calibrate} applies $\tau_t=\beta_\tau\tau_{t-1}+(1-\beta_\tau)s_t$; it leaves the threshold unchanged when $s_t=-\infty$.

\begin{algorithm}[t]
   \caption{ZoAQ Accept-or-Expand with Query Reuse}
   \label{alg:zoaq_main}
\begin{algorithmic}
   \STATE \textbf{Input:} oracle $F$, initial point $\vtheta_0$, horizon $T$
   \STATE \textbf{Parameters:} grid $(K_{\min},\Delta K,K_{\max})$, $N_{\ntxt{hist}}$, and $(\tau_0,\beta_1,\beta_\tau,\varepsilon_{\ntxt{anc}})$
   \STATE \textbf{Initialize:} retained buffer $\mathcal H_0\leftarrow\emptyset$ and EMA $\vm_0\leftarrow\vzero$
   \FOR{$t=1,\dots,T$}
       \STATE $K\leftarrow K_{\min}$
       \WHILE{true}
           \STATE $\mathcal H_t(K)\leftarrow \textsc{ExtendHistory}(F,\vtheta_t,\mathcal H_{t-1},K)$
           \STATE $\vg_t(K)\leftarrow \textsc{PairedEstimate}(\mathcal H_t(K))$ \hfill Eq.~\ref{eq:history_reuse_grad_est}
           \STATE $s_t(K)\leftarrow \textsc{Score}(\vg_t(K),\vm_{t-1},\tau_{t-1};\varepsilon_{\ntxt{anc}})$
           \STATE \textbf{if} $s_t(K)\ge\tau_{t-1}$ or $K=K_{\max}$ \textbf{break; else} $K\leftarrow\min\{K+\Delta K,K_{\max}\}$
       \ENDWHILE
       \STATE $(K_t,\vg_t,s_t,\mathcal H_t)\leftarrow (K,\vg_t(K),s_t(K),\mathcal H_t(K))$
       \STATE $\vtheta_{t+1}\leftarrow \textsc{BackboneStep}(\vtheta_t,\vg_t)$
       \STATE $\vm_t\leftarrow \beta_1\vm_{t-1}+(1-\beta_1)\vg_t$
       \STATE $\tau_t\leftarrow \textsc{Calibrate}(\tau_{t-1},s_t;\beta_\tau)$
       \STATE $\mathcal H_t\leftarrow \textsc{FIFO}(\mathcal H_t,N_{\ntxt{hist}})$
   \ENDFOR
   \STATE \textbf{return} $\vtheta_T$ and $\{K_t\}_{t=1}^{T}$
\end{algorithmic}
\end{algorithm}

The same candidate records determine both the update and the accept-or-expand score, so the controller requires no separate validation queries. Appendix~\ref{app:algorithmic_summary} gives the function evaluation accounting and update backbone for each protocol. Section~\ref{sec:theory} analyzes the stored-response controller and uses the normalized-update protocol in Assumption~\ref{ass:scale_invariant} for its descent result.

\section{Theory: Estimation, Consistency, and Local Budgets}
\label{sec:theory}

ZoAQ reuses scalar responses at their acquisition iterates. This increases the evidence available to a candidate without new function evaluations, but older records target earlier local gradients. We analyze this stored-response estimator and its accept-or-expand controller by quantifying the trade-off between concentration and staleness, propagating the accuracy of accepted candidates through the EMA, and characterizing safe first passage. The resulting estimation, tracking, and gate guarantees apply directly to the controller; the final descent statement uses the normalized-update protocol in Assumption~\ref{ass:scale_invariant}. Appendices~\ref{app:assumptions}--\ref{app:proof_complexity} give the full assumptions and proofs.

\subsection{Stored Responses Trade Concentration for Staleness}
\label{sec:assumptions}
\label{sec:properties}

For a candidate built from $n_t(K)$ records, define its total acquisition staleness
\begin{equation}
    A_t(K)
    \triangleq
    \sum_{j\in\mathcal J_t(K)}
    \|\vtheta_t-\vtheta_{a_j}\|.
    \label{eq:main_candidate_staleness}
\end{equation}
This quantity measures staleness by optimization movement, so an old record need not be harmful when the iterate has moved little.
The candidate differs from the current smoothed gradient $h_t=\nabla F_\mu(\vtheta_t)$ through acquisition noise and gradient movement.

\begin{lemma}[Selection-Safe Stored-Response Control]
\label{lemma:correlation_decomp}
On the simultaneous candidate event established in Appendix~\ref{app:proof_stat}, every candidate inspected by the controller satisfies
\begin{equation}
    \|\vg_t(K)-h_t\|
    \le
    e_t(K)
    \triangleq
    q(n_t(K),\delta)
    +\frac{L A_t(K)}{n_t(K)}.
    \label{eq:main_candidate_error}
\end{equation}
The statement remains valid for the random first-passing candidate.
\end{lemma}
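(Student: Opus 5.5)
The bound comes from splitting the candidate error into a noise term and a staleness term. For each record $j\in\mathcal J_t(K)$, write $r_j\vu_j = \vg_\mu(\vtheta_{a_j},\vu_j)$ and compare it with the current smoothed gradient:
\begin{equation*}
\vg_t(K)-h_t
=
\frac{1}{n_t(K)}\sum_{j\in\mathcal J_t(K)}\bigl(\vg_\mu(\vtheta_{a_j},\vu_j)-\nabla F_\mu(\vtheta_{a_j})\bigr)
+\frac{1}{n_t(K)}\sum_{j\in\mathcal J_t(K)}\bigl(\nabla F_\mu(\vtheta_{a_j})-\nabla F_\mu(\vtheta_t)\bigr).
\end{equation*}

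\textbf{Staleness term.} This is deterministic given the trajectory. Using $L$-smoothness of $F_\mu$, which is inherited from smoothness of $F$ under Gaussian smoothing and which I take from the appendix assumptions, each summand is at most $L\|\vtheta_t-\vtheta_{a_j}\|$. The triangle inequality then bounds the whole term by $LA_t(K)/n_t(K)$, with $A_t(K)$ as defined in the staleness equation above.

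\textbf{Noise term.} This is where selection matters. Each summand $\xi_j=\vg_\mu(\vtheta_{a_j},\vu_j)-\nabla F_\mu(\vtheta_{a_j})$ has zero conditional mean given $\mathcal F_{a_j-1}$, because $\vu_j$ is exogenous and drawn independently of the iterate at which it is evaluated. Ordering the records by acquisition time (and by index within a step) makes the partial sums a vector martingale. I would apply a vector martingale concentration inequality, either Freedman/Pinelis-type or a sub-Weibull/sub-exponential version, since $r_\mu\vu$ has heavy-ish Gaussian-product tails and may need truncation on a high-probability event $\|\vu_j\|^2\lesssim d+\log(1/\delta)$. For a fixed index window this gives a deviation bound of order $q(n,\delta')$.

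\textbf{Simultaneous candidate event.} The difficulty is that the controller's first-passing $K_t$ and the FIFO windows are random and data-dependent, so a bound for fixed $K$ does not apply directly. I would avoid conditioning on the selection by defining the event simultaneously over everything the controller could inspect. Every candidate inspected at step $t$ is a contiguous window of the global record stream: a FIFO window of at most $N_{\ntxt{hist}}$ retained records plus a prefix of the nested fresh sequence, with $K$ ranging over the grid $\mathcal K$. So for each $t\le T$ and $K\in\mathcal K$, the candidate index set is determined by a window whose endpoints range over a deterministic finite family of size at most $T\,|\mathcal K|$ (times $N_{\ntxt{hist}}$ if the retained length varies).

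Pre-sampling all directions $\vu_{t,k}$ for $k\le K_{\max}$, which the reproducible seeds permit, I apply the martingale bound to each window with confidence $\delta/(T|\mathcal K|N_{\ntxt{hist}})$ and take a union bound. The logarithmic factor is absorbed into $q(\cdot,\delta)$. On the resulting event, every inspected candidate, and in particular the random first-passing one, satisfies the noise bound. No optional-stopping argument is needed, because the event does not depend on which $K$ is accepted.

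\textbf{Main obstacle.} The step I expect to be hardest is making the martingale structure valid when the window boundaries depend on the past. My plan is to fix the windows by absolute record indices, where each record's index is determined by $(a_j,k)$, so each window sum is a fixed-index martingale increment sum. The randomness of $\vtheta_{a_j}$ then enters only through predictable conditioning. If a vector Freedman bound is awkward, I would fall back to the cruder route of coordinatewise or $\varepsilon$-net bounds, inflating $q$ by a $\sqrt{d}$ or $\log$ factor.
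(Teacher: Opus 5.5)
Your decomposition, the pathwise staleness bound, and the plan to make the noise bound hold simultaneously over a deterministic family of windows match the paper's argument. That simultaneity is what removes any optional-stopping issue for the first-passing candidate. Your vector-Freedman-with-truncation route is a legitimate alternative to the paper's two instantiations of $q$. The first is Markov applied to $\|\sum_{j\in I}\xi_j\|^2$, which needs only conditional second moments but pays $\sqrt{M_T/\delta}$. The second is scalar Bernstein plus a $1/2$-net, which needs the extra tail assumption and pays $d+\log(M_T/\delta)$.

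The step that fails as written is the family you union-bound over. You correctly note that every candidate is a contiguous window of the global record stream. But a pair $(t,K)$, even together with the retained length, does not name a fixed set of records. In acquisition order, the step-$t$ candidate occupies positions $S_{t-1}-|\mathcal R_{t-1}|+1,\dots,S_{t-1}+K$ with $S_{t-1}=\sum_{s<t}K_s$, so its location depends on earlier random budgets. Worse, whether record $j$ is still in the FIFO buffer at step $t$ depends on budgets chosen after $r_j$ was observed and scored, and hence on $\xi_j$ itself. So $\mathbf{1}\{j\in\mathcal J_t(K)\}$ is not $\mathcal F_{j-1}$-measurable. There is no fixed-window martingale bound to apply to ``the $(t,K)$ window'' before union-bounding over $T|\mathcal K|N_{\ntxt{hist}}$ of them.

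Your proposed repair does not yield that count either. If you pre-sample $K_{\max}$ directions per step and index by $(a_j,k)$, the unused directions $(s,k)$ with $k>K_s$ sit between used ones, so a realized candidate is not an interval in that order. A union bound then has to range over all budget patterns $(K_s)_s$ across the steps the buffer spans. That count is exponential in $\lceil N_{\ntxt{hist}}/K_{\min}\rceil$, not $T|\mathcal K|N_{\ntxt{hist}}$.

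The clean fix, which is what the paper does, is to index records by the global acquisition counter $j$. Then $\vu_j\mid\mathcal F_{j-1}\sim\mathcal N(0,\rmI_d)$, with $\vtheta_{a_j}$ already $\mathcal F_{j-1}$-measurable; pad with zero increments if fewer than $TK_{\max}$ directions are drawn. Every candidate the controller can inspect is an interval $[j_1,j_2]$ of this counter with $j_2\le TK_{\max}$. You then union-bound over all such deterministic intervals. The paper uses the crude count $M_T\le TK_{\max}(TK_{\max}+1)/2$; restricting to lengths at most $N_{\ntxt{hist}}+K_{\max}$ would give $TK_{\max}(N_{\ntxt{hist}}+K_{\max})$. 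Each fixed-interval sum is a genuine martingale-difference sum, and the logarithm of the count is absorbed into $q$. The realized first-passing candidate is one of these intervals, so the bound transfers to it with no further argument.
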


Under a common local noise scale, the radius $q$ decreases as records accumulate, whereas the second term grows with the distance traveled since acquisition. At a common fresh budget $K$, reuse improves the error certificate whenever
\begin{equation}
    \frac{L A_t(K)}{n_t(K)}
    \le
    q(K,\delta)-q(n_t(K),\delta).
    \label{eq:main_reuse_benefit}
\end{equation}
Thus history is useful when its concentration gain exceeds its staleness cost. Under a full FIFO buffer, the leading envelope has the form $a_t/\sqrt C+b_tC$, yielding the interior capacity scale
\begin{equation}
    C_{\ntxt{opt}}
    \asymp
    \left(
        \frac{a_tK_{\min}}{L\bar\ell}
    \right)^{2/3}.
    \label{eq:main_capacity_scale}
\end{equation}
More history is therefore not uniformly better: noisier problems favor larger buffers, while faster movement favors smaller ones.

\subsection{Momentum Consistency as a Directional Certificate}
\label{sec:accepted_descent}

Because cosine is invariant to positive scaling, the implemented score is unchanged if the raw EMA is replaced by its debiased form $\widetilde{\vm}_{t-1}=\vm_{t-1}/(1-\beta_1^{t-1})$. If accepted estimates satisfy $\|\vg_r-h_r\|\le\varepsilon_r$, Appendix~\ref{app:momentum_alignment} proves the pathwise bound
\begin{equation}
    \|\widetilde{\vm}_{t-1}-h_t\|
    \le
    \sum_{r<t}\bar w_{r,t}\varepsilon_r
    +LD_t^{\ntxt{ema}}.
    \label{eq:main_debiased_tracking}
\end{equation}
This avoids independence assumptions between overlapping accepted buffers. Let $\kappa_t$ denote the right-hand side divided by $\|h_t\|$.

\begin{theorem}[Soundness and Completeness of Momentum Consistency]
\label{thm:implicit_control}
Suppose $\kappa_t<1$ and a nonzero candidate passes $\cos(\vg_t(K),\vm_{t-1})\ge\tau_{t-1}$. Then
\begin{equation}
    \cos(\vg_t(K),h_t)
    \ge
    \Delta(\tau_{t-1},\kappa_t)
    \triangleq
    \tau_{t-1}\sqrt{1-\kappa_t^2}
    -\kappa_t\sqrt{1-\tau_{t-1}^2}.
    \label{eq:main_gate_soundness}
\end{equation}
Conversely, a candidate is guaranteed to pass whenever
\begin{equation}
    \|\vg_t(K)-h_t\|
    \le
    \eta_\star(\tau_{t-1},\kappa_t)\|h_t\|,
    \label{eq:main_gate_completeness}
\end{equation}
where
\begin{equation}
    \eta_\star(\tau,\kappa)
    \triangleq
    \sqrt{1-\tau^2}\sqrt{1-\kappa^2}-\tau\kappa.
\end{equation}
\end{theorem}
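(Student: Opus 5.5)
The plan is to reduce both directions to angle geometry on the unit sphere, using the debiased EMA $\widetilde{\vm}_{t-1}$ as a proxy for $h_t$. By the scale invariance of cosine noted before the theorem, $\cos(\vg_t(K),\vm_{t-1})=\cos(\vg_t(K),\widetilde{\vm}_{t-1})$, so it suffices to work with $\widetilde{\vm}_{t-1}$. The pathwise bound \eqref{eq:main_debiased_tracking} then gives $\|\widetilde{\vm}_{t-1}-h_t\|\le\kappa_t\|h_t\|$ with $\kappa_t<1$.

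The first step is an angle lemma: if $\|v-h\|\le\kappa\|h\|$ with $\kappa<1$, then $v\neq 0$ and $\sin\angle(v,h)\le\kappa$. Equivalently, $\cos\angle(v,h)\ge\sqrt{1-\kappa^2}$ and $\angle(v,h)\le\arcsin\kappa<\pi/2$. To prove it, minimize the angle over the ball of radius $\kappa\|h\|$ centered at $h$. The extremal ray is tangent to this ball, so the sine of the angle equals $\kappa\|h\|/\|h\|=\kappa$. I expect this to be a short argument by decomposing $v$ into components parallel and orthogonal to $h$.

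\emph{Soundness.} Write $\phi=\angle(\vg_t(K),\widetilde{\vm}_{t-1})$ and $\psi=\angle(\widetilde{\vm}_{t-1},h_t)$. Passing the gate gives $\cos\phi\ge\tau_{t-1}$, so $\phi\le\arccos\tau_{t-1}$; the lemma gives $\psi\le\arcsin\kappa_t$. The triangle inequality for angles (the geodesic metric on the sphere) yields $\angle(\vg_t(K),h_t)\le\phi+\psi$. If $\phi+\psi\le\pi$, monotonicity of cosine and the addition formula give
\[
\cos\angle(\vg_t(K),h_t)\ge\cos(\arccos\tau_{t-1}+\arcsin\kappa_t)=\tau_{t-1}\sqrt{1-\kappa_t^2}-\kappa_t\sqrt{1-\tau_{t-1}^2},
\]
which is $\Delta$. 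If instead the sum exceeds $\pi$, the bound $\Delta\le-1$ is trivial, so either way the claim holds. Here $\tau_{t-1}\in[-1,1]$ is assumed, which is harmless since cosine scores lie in that range.

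\emph{Completeness.} Suppose $\|\vg_t(K)-h_t\|\le\eta_\star\|h_t\|$. When $\eta_\star\ge 0$ and $\eta_\star<1$, the lemma gives $\angle(\vg_t(K),h_t)\le\arcsin\eta_\star$ and $\vg_t(K)\neq0$. Combining this with $\psi\le\arcsin\kappa_t$ gives
\[
\angle(\vg_t(K),\widetilde{\vm}_{t-1})\le\arcsin\eta_\star+\arcsin\kappa_t .
\]
Now observe that $\eta_\star(\tau,\kappa)=\sqrt{1-\tau^2}\sqrt{1-\kappa^2}-\tau\kappa=\cos(\arcsin\tau+\arcsin\kappa)$. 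Hence, for $\tau\in[-1,1]$,
\[
\arcsin\eta_\star=\tfrac{\pi}{2}-\arcsin\tau-\arcsin\kappa=\arccos\tau-\arcsin\kappa .
\]
Substituting, the total angle is at most $\arccos\tau_{t-1}$, so $\cos(\vg_t(K),\vm_{t-1})\ge\tau_{t-1}$ and the candidate passes. The score is a genuine cosine and not $-\infty$, because both vectors are nonzero. If $\eta_\star<0$, the hypothesis is vacuous since no vector has negative distance. If $\eta_\star\ge1$, which happens only in degenerate parameter regimes, I would restrict to $\eta_\star<1$ or note that the identity above already forces $\eta_\star\le1$.

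I expect the main obstacle to be this careful handling of ranges. The identity $\arcsin(\cos x)=\pi/2-x$ requires $x\in[0,\pi]$, which holds here since $\arcsin\tau+\arcsin\kappa\in[-\pi/2,\pi]$. Care is also needed with negative $\tau$ and with the warm-start case, where \textsc{Score} returns $\tau_{t-1}$. The angle lemma and the spherical triangle inequality are standard.
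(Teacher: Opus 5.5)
Your route is the paper's route: the tangent-ray bound $\angle(\widetilde{\vm}_{t-1},h_t)\le\arcsin\kappa_t$, the spherical triangle inequality, then taking cosines of $\arccos\tau_{t-1}+\arcsin\kappa_t$ for soundness and of $\arcsin\eta+\arcsin\kappa_t$ for completeness, followed by solving for $\eta$. You track ranges more carefully than the paper's one-line ``taking the cosine,'' but one of your case claims is false. If $\arccos\tau_{t-1}+\arcsin\kappa_t>\pi$, which is equivalent to $\tau_{t-1}<-\sqrt{1-\kappa_t^2}$, the angle sum lies in $(\pi,3\pi/2)$. So $\Delta(\tau_{t-1},\kappa_t)\in(-1,0)$, not $\le-1$, and the soundness inequality actually fails there. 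For example, take $\kappa=\tfrac12$, $\tau=-1$, $h=e_1$, $m=e_1+\tfrac12 e_2$ (so $\|m-h\|=\kappa\|h\|$) and $g=-e_1$. The gate passes, yet $\cos(g,h)=-1<-\tfrac{\sqrt3}{2}=\Delta(-1,\tfrac12)$. The soundness half therefore needs the extra hypothesis $\tau_{t-1}\ge-\sqrt{1-\kappa_t^2}$. Under it, $\arccos\tau_{t-1}+\arcsin\kappa_t\le\pi$ and your monotonicity step is valid. The hypothesis is implied by the band $\kappa_t<\tau_{t-1}$ where the theorem is used downstream, and the paper's proof tacitly assumes it as well.

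There are two smaller slips in the completeness half. First, $[-\pi/2,\pi)$ is not contained in $[0,\pi]$. With $x=\arcsin\tau_{t-1}+\arcsin\kappa_t$, the identity $\arcsin\eta_\star=\arccos\tau_{t-1}-\arcsin\kappa_t$ holds only when $x\ge0$, i.e.\ $\tau_{t-1}\ge-\kappa_t$. For $x<0$ you get $\arcsin\eta_\star=\pi/2-|x|<\arccos\tau_{t-1}-\arcsin\kappa_t$. That is the direction you need, so the conclusion survives once you state it as an inequality. Second, observing that $\eta_\star\le1$ is not enough. At $x=0$ one has $\eta_\star=1$, and your angle lemma, which needs a radius strictly below $1$, no longer yields $\vg_t(K)\ne0$. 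The hypothesis then admits $\vg_t(K)=0$, whose score is $-\infty$ and fails the gate. You do need to restrict to $\eta_\star<1$, or to nonzero candidates as the appendix version of the theorem does.
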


Both directions are geometrically sharp. They coexist in the nonempty band
\begin{equation}
    \kappa_t<\tau_{t-1}<\sqrt{1-\kappa_t^2},
    \qquad
    \kappa_t<\frac{1}{\sqrt2}.
    \label{eq:main_threshold_band}
\end{equation}
Within the band, every pass is descent-relevant and sufficiently accurate candidates can pass. Our experiments use Algorithm~\ref{alg:zoaq_main}'s unprojected EMA, so these guarantees apply when its threshold lies in the band; Appendix~\ref{app:adaptive_threshold_theory} gives the projected calibration.

\subsection{Safe First Passage and Query Control}
\label{sec:controller_bridge}

Let $\bar\kappa<1/\sqrt2$ uniformly bound $\kappa_t$, and choose $\bar\kappa<\underline\tau\le\tau_{t-1}\le\overline\tau<\sqrt{1-\bar\kappa^2}$. Define
\begin{equation}
    \underline\Delta
    \triangleq\Delta(\underline\tau,\bar\kappa),
    \qquad
    \underline\eta
    \triangleq\eta_\star(\overline\tau,\bar\kappa),
\end{equation}
and the first sufficient grid point
\begin{equation}
    K_t^{\ntxt{suf}}
    \triangleq
    \min\left\{
        K\in\mathcal K:
        e_t(K)\le\underline\eta\|h_t\|
    \right\}.
    \label{eq:oracle_budget_main}
\end{equation}

\begin{theorem}[Safe First-Passing Local Allocation]
\label{thm:total_complexity}
\label{thm:zoaq_local_allocation}
If $K_t^{\ntxt{suf}}\le K_{\max}$, the increasing-grid search passes no later than $K_t^{\ntxt{suf}}$. Its first passing candidate satisfies
\begin{equation}
    K_t\le K_t^{\ntxt{suf}},
    \qquad
    \cos(\vg_t(K_t),h_t)\ge\underline\Delta>0.
    \label{eq:main_first_passing}
\end{equation}
Hence capped fallback does not occur on the simultaneous event. Under the normalized-update protocol in Assumption~\ref{ass:scale_invariant},
\begin{equation}
    F_\mu(\vtheta_{t+1})-F_\mu(\vtheta_t)
    \le
    -\alpha_t\underline\Delta\|h_t\|
    +\frac{L\alpha_t^2}{2}.
    \label{eq:main_normalized_descent}
\end{equation}
\end{theorem}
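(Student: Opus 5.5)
The proof is a chain of the earlier results: Lemma~\ref{lemma:correlation_decomp} feeds Theorem~\ref{thm:implicit_control}, and a standard smoothness step gives the descent bound. Throughout we work on the simultaneous candidate event of Lemma~\ref{lemma:correlation_decomp}. On that event every inspected candidate, including the random first-passing one, satisfies $\|\vg_t(K)-h_t\|\le e_t(K)$.

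\textbf{First passage.} Take $K=K_t^{\ntxt{suf}}\le K_{\max}$. By definition $e_t(K)\le\underline\eta\|h_t\|$. Next we check that $\eta_\star(\tau,\bar\kappa)$ is nonincreasing in $\tau$ and in $\kappa$ on the band. Its $\tau$-derivative is $-\tau\sqrt{1-\kappa^2}/\sqrt{1-\tau^2}-\kappa<0$, and the $\kappa$-derivative is analogous. Hence, using $\tau_{t-1}\le\overline\tau$ and $\kappa_t\le\bar\kappa$,
\[
\underline\eta\le\eta_\star(\tau_{t-1},\kappa_t).
\]
The completeness half of Theorem~\ref{thm:implicit_control} then shows this candidate passes. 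Before applying it we need $\vg_t(K)\neq\vzero$ and $\vm_{t-1}\neq\vzero$. The first holds because $\underline\eta<1$ gives $\|\vg_t-h_t\|<\|h_t\|$. The second holds because $\kappa_t<1$ forces $\widetilde\vm_{t-1}\ne\vzero$; we assume we are past warm start. Since the grid is scanned in increasing order, the stopping index satisfies $K_t\le K_t^{\ntxt{suf}}$. The pass happens through the threshold test, so capped fallback at $K_{\max}$ with a failing score does not occur.

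\textbf{Soundness at the first passer.} The candidate $\vg_t(K_t)$ is nonzero and passes. The soundness half of Theorem~\ref{thm:implicit_control} gives $\cos(\vg_t(K_t),h_t)\ge\Delta(\tau_{t-1},\kappa_t)$. The function $\Delta$ is increasing in $\tau$ for $\tau$ in the band, since its $\tau$-derivative is $\sqrt{1-\kappa^2}+\kappa\tau/\sqrt{1-\tau^2}>0$. It is decreasing in $\kappa$. So $\Delta(\tau_{t-1},\kappa_t)\ge\Delta(\underline\tau,\bar\kappa)=\underline\Delta$.

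Positivity of $\underline\Delta$ is equivalent to $\underline\tau\sqrt{1-\bar\kappa^2}>\bar\kappa\sqrt{1-\underline\tau^2}$, i.e.\ $\underline\tau>\bar\kappa$, which holds by assumption. Likewise $\underline\eta>0$ iff $\overline\tau<\sqrt{1-\bar\kappa^2}$.

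The main subtlety is that the pass decision is data-dependent. The error bound must hold at a random stopping index, which is exactly why the simultaneous (union-over-grid) event from Lemma~\ref{lemma:correlation_decomp} is used. Everything after it is deterministic on that event.

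\textbf{Descent.} We assume $F_\mu$ is $L$-smooth, as in the appendix assumptions. Under Assumption~\ref{ass:scale_invariant}, $\vtheta_{t+1}-\vtheta_t=-\alpha_t\vd_t$ with $\|\vd_t\|=1$. The descent lemma gives
\[
F_\mu(\vtheta_{t+1})\le F_\mu(\vtheta_t)-\alpha_t\langle h_t,\vd_t\rangle+\tfrac{L\alpha_t^2}{2}.
\]
Finally, $\langle h_t,\vd_t\rangle=\|h_t\|\cos(\vg_t(K_t),h_t)\ge\underline\Delta\|h_t\|$, which yields \eqref{eq:main_normalized_descent}.
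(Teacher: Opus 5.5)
Your proof is correct and takes essentially the paper's route, namely Proposition~\ref{prop:controller_budget_overshoot} followed by Lemma~\ref{lemma:descent}. Completeness at $K_t^{\ntxt{suf}}$, using monotonicity of $\eta_\star$, forces passage by that grid point. Soundness plus monotonicity of $\Delta$ then gives $\cos(\vg_t(K_t),h_t)\ge\underline\Delta$ at whichever candidate passes first, and the normalized smoothness step yields the descent bound. You also make explicit several points the paper leaves implicit: the derivative checks, nonzero candidate and anchor, exclusion of the warm-start branch, and the role of the simultaneous event at data-dependent indices.
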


Soundness and completeness play different roles in this stopping argument. Soundness evaluates whichever candidate happens to pass, including one below $K_t^{\ntxt{suf}}$, and guarantees its alignment from the observed score rather than from its budget. Completeness supplies the stopping envelope: once the candidate error reaches the sufficient level in Eq.~\ref{eq:oracle_budget_main}, the score must pass. Their combination therefore does not require the empirical score to increase with $K$, nor does it require every earlier candidate to fail. An irregular score sequence may pass early, but any such pass remains covered by the same soundness guarantee; otherwise, the search stops by $K_t^{\ntxt{suf}}$.

Reuse enters this conclusion through the candidate error $e_t(K)$. When Eq.~\ref{eq:main_reuse_benefit} holds at the sufficient budget based on fresh directions alone, retained records meet the same accuracy requirement with no larger fresh budget. The comparison therefore holds locally under the stated reuse condition. Over a window before stationarity, the cost of fresh directions is bounded by $\sum_tK_t^{\ntxt{suf}}$ and hence by $K_{\max}$ times the window length. These guarantees cover steps satisfying the movement, tracking, threshold, and cap conditions, while the experiments measure realized allocation with the optimization backbone used in each task.

\section{Experiments: Mechanism-to-Scale Evidence}
\label{sec:experiments}

The experiments measure the oracle cost incurred by ZoAQ together with the resulting task outcome. Synthetic objectives expose how its scores and direction budgets evolve, black-box attacks compare queries at a common success rate, and OPT fine-tuning reports forward evaluations alongside task metrics. Mechanism diagnostics help interpret these results but are not used to assign all end-to-end gains to the gate alone. Appendix~\ref{app:experimental} and Table~\ref{tab:experiment_claim_map} provide the full configurations, accounting conventions, baseline standardization, and evidence-to-claim map.

\subsection{Synthetic Mechanism Tests}
\label{sec:exp_synthetic}

The synthetic regime makes local allocation directly visible. Figure~\ref{fig:synthetic_combined} shows the main effect under the protocol of 60k steps: the fixed baselines spend 1.2M queries, whereas ZoAQ uses about 0.62M--0.68M on the displayed objectives, a 43--48\% reduction. The fixed budget sweep in Table~\ref{tab:app_synthetic_sweep} controls for budget choice: changing the budget trades cost against final gap on individual tasks, but no single fixed value reproduces the adaptive balance between cost and quality across the suite. ZoAQ instead revisits $K_{\min}$ after every accepted update and acquires more evidence only when the current candidate fails the gate.

\begin{figure}[t]
\begin{center}
\centerline{\includegraphics[width=0.82\textwidth,trim=8 8 8 8,clip]{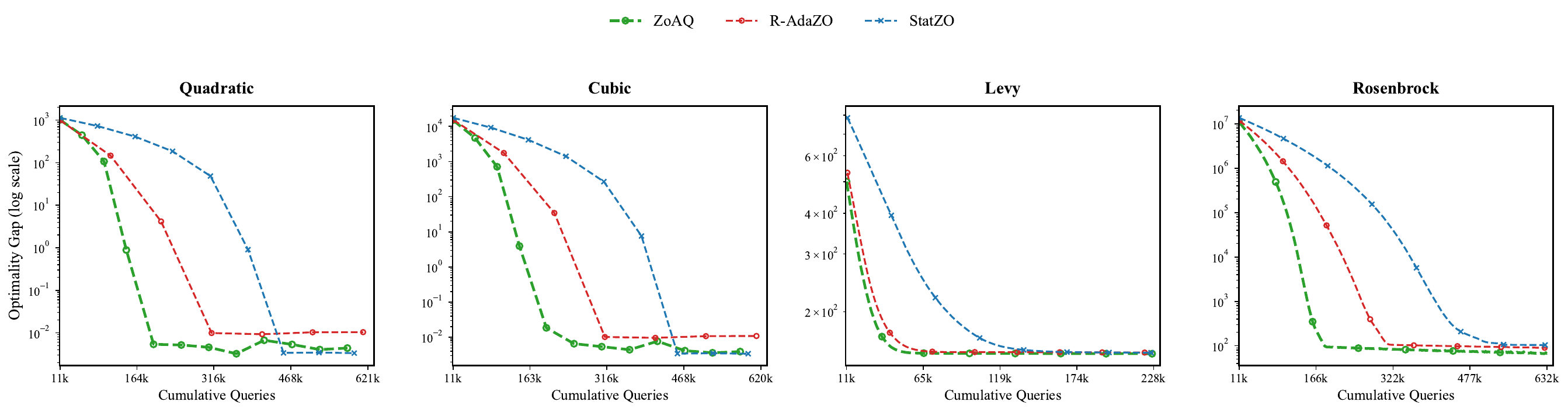}}
\caption{Synthetic convergence under the protocol of 60k steps, averaged over five runs. On the query axis, ZoAQ reaches regions of low error with fewer queries than baselines based on fixed budgets or variance.}
\label{fig:synthetic_combined}
\end{center}
\vskip -0.15in
\end{figure}

The auxiliary controls support the controller design. ZoAQ-Norm needs 28.1\% more queries to reach the same Rosenbrock target and shows a weaker descent rate than ZoAQ (1.3 vs. 1.9 loss drop per 1k queries); Appendix~\ref{app:ablation_norm_theory} and Figure~\ref{fig:ablation_norm_main} provide the corresponding diagnostic. Figure~\ref{fig:cosine_main} checks the gate statistic directly: the displayed baselines stay near zero or decay toward it, whereas ZoAQ remains more positive through most of the trajectory. Appendix~\ref{app:ablation} reports the reset diagnostics, and Appendix~\ref{app:aggregation_analysis} compares direction reuse with fresh independent sampling.

\subsection{Black-Box Attacks under Query Constraints}
\label{sec:exp_attack}

Black-box attacks test query allocation in a regime where $d \gg K_t$, local budgets are often one or two directions, and every additional oracle call changes the realized cost. Under a common stabilized R-AdaZO backbone, Table~\ref{tab:attack_results} gives 100\% success for all methods and separates them by required queries. ZoAQ reduces average queries to 320 on MNIST and 625 on CIFAR-10, corresponding to 4.77$\times$ and 1.61$\times$ speedups over fixed budget; StatZO does not recover comparable savings in this regime. Figure~\ref{fig:attack_cosine} provides a complementary diagnostic: ZoAQ maintains a more stable momentum-consistency score under the same attack regime. At the same 100\% success rate, history reuse alone does not recover the full gain: relative to ZoAR with fixed $K=2$, ZoAQ uses 56.9\% fewer queries on MNIST and 17.3\% fewer on CIFAR-10; Table~\ref{tab:attack_history_reuse_control} reports the full comparison.

\begin{table}[t]
\caption{Black-box attack query efficiency (mean$\pm$std over five runs). Full protocol details are deferred to Appendix~\ref{app:experimental}.}
\label{tab:attack_results}
\begin{center}
\begin{footnotesize}
\begin{sc}
\setlength{\tabcolsep}{2.2pt}
\renewcommand{\arraystretch}{0.92}
\resizebox{0.98\textwidth}{!}{
\begin{tabular}{lcccccccccccc}
\toprule
\multirow{2}{*}{Dataset} & \multicolumn{4}{c}{R-AdaZO ($K=2$)} & \multicolumn{4}{c}{StatZO ($K_{\max}=2$)} & \multicolumn{4}{c}{ZoAQ ($K_t\in[1,2]$)} \\
\cmidrule(lr){2-5} \cmidrule(lr){6-9} \cmidrule(lr){10-13}
 & Success & Avg & Std & Speedup & Success & Avg & Std & Speedup & Success & Avg & Std & Speedup \\
\midrule
MNIST & 100\% & 1,527 & 701 & 1.00$\times$ & 100\% & 1,996 & 1,231 & 0.77$\times$ & 100\% & 320 & 45 & 4.77$\times$ \\
CIFAR-10 & 100\% & 1,006 & 100 & 1.00$\times$ & 100\% & 964 & 28 & 1.04$\times$ & 100\% & 625 & 15 & 1.61$\times$ \\
\bottomrule
\end{tabular}
}
\end{sc}
\end{footnotesize}
\end{center}
\vskip -0.1in
\end{table}

Table~\ref{tab:attack_results} keeps the common stabilized R-AdaZO backbone explicit, with fixed $K=2$ for R-AdaZO, best swept $K_{\max}=2$ for StatZO, and $K_t\in[1,2]$ for ZoAQ. Appendix~\ref{app:ablation_nhist} justifies $N_{\ntxt{hist}}=8$, and Appendix~\ref{app:statzo_analysis} reports both the original and stabilized StatZO behavior.

\subsection{LLM Fine-Tuning with Forward Passes Only}
\label{sec:llm_finetuning}

LLM fine-tuning tests the same accounting when each function evaluation is a forward pass through a large model. We fine-tune OPT-1.3B and OPT-13B on SST-2 and COPA for 5,000 steps with LoRA under a two-sided ZOO protocol that uses forward passes only, with three seeds. Table~\ref{tab:llm_main_summary} summarizes the resulting trade-off between FE and quality.

\begin{table}[t]
\caption{Main LLM fine-tuning summary over three seeds. FE denotes forward evaluations; full metrics and the $K_t \in [1,2]$ endpoint are deferred to Table~\ref{tab:llm_results}.}
\label{tab:llm_main_summary}
\begin{center}
\begin{footnotesize}
\begin{sc}
\setlength{\tabcolsep}{2.3pt}
\renewcommand{\arraystretch}{0.82}
\resizebox{0.94\textwidth}{!}{
\begin{tabular}{lllcccc}
\toprule
Model & Task & Method & Total FE & FE Savings & Best Eval Loss & Accuracy \\
\midrule
\multirow{6}{*}{OPT-1.3B}
& \multirow{3}{*}{SST-2}
& R-AdaZO & 40000 $\pm$ 0 & 0.00 $\pm$ 0.00\% & 0.198 $\pm$ 0.002 & 0.920 $\pm$ 0.004 \\
& & StatZO & 30746 $\pm$ 412 & 23.14 $\pm$ 1.03\% & 0.206 $\pm$ 0.005 & 0.910 $\pm$ 0.007 \\
& & \textbf{ZoAQ} & 21714 $\pm$ 318 & \textbf{45.72 $\pm$ 0.80\%} & 0.195 $\pm$ 0.003 & 0.920 $\pm$ 0.005 \\
\cmidrule(lr){2-7}
& \multirow{3}{*}{COPA}
& R-AdaZO & 40000 $\pm$ 0 & 0.00 $\pm$ 0.00\% & 0.431 $\pm$ 0.004 & 0.780 $\pm$ 0.010 \\
& & StatZO & 30498 $\pm$ 505 & 23.76 $\pm$ 1.26\% & 0.461 $\pm$ 0.007 & 0.790 $\pm$ 0.014 \\
& & \textbf{ZoAQ} & 21946 $\pm$ 345 & \textbf{45.14 $\pm$ 0.86\%} & 0.485 $\pm$ 0.005 & 0.790 $\pm$ 0.012 \\
\midrule
\multirow{6}{*}{OPT-13B}
& \multirow{3}{*}{SST-2}
& R-AdaZO & 40000 $\pm$ 0 & 0.00 $\pm$ 0.00\% & 0.185 $\pm$ 0.002 & 0.938 $\pm$ 0.003 \\
& & StatZO & 31350 $\pm$ 390 & 21.63 $\pm$ 0.98\% & 0.202 $\pm$ 0.004 & 0.908 $\pm$ 0.006 \\
& & \textbf{ZoAQ} & 22245 $\pm$ 290 & \textbf{44.39 $\pm$ 0.73\%} & 0.192 $\pm$ 0.003 & 0.920 $\pm$ 0.005 \\
\cmidrule(lr){2-7}
& \multirow{3}{*}{COPA}
& R-AdaZO & 40000 $\pm$ 0 & 0.00 $\pm$ 0.00\% & 0.405 $\pm$ 0.005 & 0.798 $\pm$ 0.012 \\
& & StatZO & 31480 $\pm$ 480 & 21.30 $\pm$ 1.20\% & 0.452 $\pm$ 0.009 & 0.785 $\pm$ 0.016 \\
& & \textbf{ZoAQ} & 22780 $\pm$ 360 & \textbf{43.05 $\pm$ 0.90\%} & 0.428 $\pm$ 0.006 & 0.792 $\pm$ 0.011 \\
\bottomrule
\end{tabular}
}
\end{sc}
\end{footnotesize}
\end{center}
\vskip -0.12in
\end{table}

ZoAQ with $K_t \in [1,4]$ is the main operating point. Across the four OPT/task pairs in Table~\ref{tab:llm_main_summary}, it saves 43--46\% FE relative to fixed $K=4$. The allocation is repeated at each of the 5,000 steps rather than chosen once as a smaller fixed direction count. On OPT-1.3B SST-2, it matches R-AdaZO accuracy and slightly improves best evaluation loss; on COPA, it keeps accuracy close while accepting higher loss; on OPT-13B SST-2, it trails R-AdaZO with a fixed budget in both metrics. These results show FE savings relative to fixed $K=4$, while the quality trade-off varies by task. Reporting both quantities makes the saved computation and resulting task quality directly comparable. Appendix~\ref{app:llm_configurations} gives FE accounting, hyperparameters, full metrics, and the aggressive $K_t \in [1,2]$ endpoint; Appendix~\ref{app:statzo_analysis}, Sections~\ref{app:tau_sensitivity}--\ref{app:ablation_nhist}, and Appendix~\ref{app:ablation} provide the related stability and sensitivity checks.

\paragraph{Reuse and staleness across regimes.}
Sweeps over the history window expose the balance predicted by the analysis of stored responses. On MNIST attacks, average queries decrease from 374.4 at $N_{\ntxt{hist}}=4$ to 319.6 at $N_{\ntxt{hist}}=8$, then increase to 349.2 at $N_{\ntxt{hist}}=12$. On OPT-1.3B, enlarging the main window from 15 to 24 reduces FE by 3.00\% on SST-2 and 3.67\% on COPA, with changes in loss and accuracy that vary by task. The best intermediate window is consistent with the capacity trade-off in Eq.~\ref{eq:main_capacity_scale}. Together, these results show that the balance between reuse and staleness depends on the regime; Tables~\ref{tab:ablation_nhist} and \ref{tab:llm_history_sensitivity} report the full sweeps.

The controlled objective switch probes the complementary case in which retained records suddenly become stale. When the objective changes from Rosenbrock to Levy, the effective threshold drops with the consistency mismatch, failed tests add fresh evidence, and ZoAQ reaches a lower final loss than R-AdaZO in this test. The controller therefore refreshes its evidence instead of remaining locked to the previous history. Together, the window and switch tests show that useful reuse depends on current movement rather than a fixed history length, consistent with the staleness quantity in Eq.~\ref{eq:main_candidate_staleness}. Appendix~\ref{app:sharp_turn} gives the protocol and trajectories.

\section{Conclusion}
\label{sec:conclusion}

ZoAQ uses acquired responses as evidence for both updates and allocation. Retained records join fresh directions in each candidate, and failed EMA consistency tests trigger additional queries. Our analysis identifies when the gain in concentration exceeds staleness and when the first passing candidate supports descent under a sufficient local budget. ZoAQ reduces queries by 43--48\% on synthetic objectives, delivers 4.77$\times$ and 1.61$\times$ attack speedups at 100\% success, and saves 43--46\% FE in OPT fine-tuning with quality changes reported for each task. The history window and controlled objective switch show how the controller exploits useful records and refreshes stale ones, supporting query reuse as a practical basis for adapting local ZOO budgets.

\label{page:main-content-end}

\clearpage
\bibliography{references}
\bibliographystyle{iclr2026_conference}

\clearpage
\appendix
\section{Assumptions and Preliminaries}
\label{app:assumptions}

This appendix fixes the notation and probability interfaces used throughout the proofs. The analysis uses paired Gaussian two-sided responses stored at their acquisition iterates. Evaluation counts for each protocol are reported with the experiments rather than folded into the estimator definition.

\subsection{Proof Notation}

\begin{description}
    \item[Objective and gradients.] $F:\mathbb R^d\to\mathbb R$ is the black-box objective, $\vtheta_t$ is the iterate, and
    \begin{equation}
        F_\mu(\vtheta)
        \triangleq
        \E_{\vu\sim\mathcal N(0,\rmI_d)}[F(\vtheta+\mu\vu)]
        \label{eq:smoothed_def_app}
    \end{equation}
    is its Gaussian smoothing at radius $\mu>0$. We write $h_t\triangleq\nabla F_\mu(\vtheta_t)$. The constants $L_f$ and $L$ are gradient-Lipschitz constants for $F$ and $F_\mu$, respectively.

    \item[Acquisition records.] Directions are indexed in acquisition order. The $j$th record is sampled at step $a_j$ after conditioning on the pre-sampling information $\mathcal F_{j-1}$. It contains $\vu_j\mid\mathcal F_{j-1}\sim\mathcal N(0,\rmI_d)$ and the paired response $r_j\triangleq r_\mu(\vtheta_{a_j},\vu_j)$. Its vector contribution, conditional mean, and centered acquisition noise are
    \begin{equation}
        X_j\triangleq r_j\vu_j,
        \qquad
        h_j\triangleq\nabla F_\mu(\vtheta_{a_j}),
        \qquad
        \xi_j\triangleq X_j-h_j.
        \label{eq:record_notation_app}
    \end{equation}

    \item[Candidate estimates.] At step $t$, $K$ is the cumulative number of fresh directions acquired so far, $\mathcal H_t(K)$ is the candidate record set, and $\mathcal J_t(K)$ contains its acquisition indices. With $n_t(K)=|\mathcal J_t(K)|$,
    \begin{equation}
        \vg_t(K)
        \triangleq
        \frac{1}{n_t(K)}\sum_{j\in\mathcal J_t(K)}X_j.
        \label{eq:candidate_record_estimator_app}
    \end{equation}
    The controller searches $\mathcal K=\{K_{\min},K_{\min}+\Delta K,\ldots,K_{\max}\}$ and accepts $K_t$, after which $\vg_t\triangleq\vg_t(K_t)$.

    \item[Momentum and path.] Accepted estimates form $\vm_t=\beta_1\vm_{t-1}+(1-\beta_1)\vg_t$. Its debiased version is $\widetilde{\vm}_t\triangleq\vm_t/(1-\beta_1^t)$. The score is $s_t(K)\triangleq\cos(\vg_t(K),\vm_{t-1})$, and $\tau_{t-1}$ is the threshold used at step $t$. The realized path increment is $\ell_t\triangleq\|\vtheta_{t+1}-\vtheta_t\|$.
\end{description}

\subsection{Regularity and Smoothing}

\begin{assumption}[Smooth Objective and Surrogate]
\label{ass:smoothness_app}
The original objective $F$ is differentiable with $L_f$-Lipschitz gradient, and $F_\mu$ has an $L$-Lipschitz gradient on the region analyzed. Hence
\begin{equation}
    F_\mu(\vtheta')
    \le
    F_\mu(\vtheta)
    +\langle\nabla F_\mu(\vtheta),\vtheta'-\vtheta\rangle
    +\frac{L}{2}\|\vtheta'-\vtheta\|^2.
    \label{eq:smooth_descent_inequality_app}
\end{equation}
\end{assumption}

\begin{lemma}[Smoothing Bias]
\label{lemma:smoothing_properties_app}
Under Assumption~\ref{ass:smoothness_app},
\begin{equation}
    \|\nabla F_\mu(\vtheta)-\nabla F(\vtheta)\|
    \le
    \mu L_f\sqrt d.
    \label{eq:smoothing_bias_app}
\end{equation}
Thus an $\epsilon$-stationary point of $F_\mu$ is an $(\epsilon+\mu L_f\sqrt d)$-stationary point of $F$.
\end{lemma}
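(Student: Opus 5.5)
The plan is to write the smoothed gradient as an average of true gradients at perturbed points and then control the perturbation through the Lipschitz property of $\nabla F$. First I would justify that $\nabla F_\mu(\vtheta)=\E_{\vu\sim\mathcal N(0,\rmI_d)}[\nabla F(\vtheta+\mu\vu)]$ by differentiating Eq.~\ref{eq:smoothed_def_app} under the expectation. Since $\nabla F$ is $L_f$-Lipschitz (Assumption~\ref{ass:smoothness_app}), it has linear growth: $\|\nabla F(\vtheta+\mu\vu)\|\le\|\nabla F(\vtheta)\|+L_f\mu\|\vu\|$. The mean value theorem then bounds the difference quotients of $\vtheta\mapsto F(\vtheta+\mu\vu)$, locally uniformly in $\vtheta$, by an integrable function of $\vu$, because Gaussian moments are finite. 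Dominated convergence then permits the interchange of gradient and expectation.

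With this representation, I would write
\begin{equation*}
\nabla F_\mu(\vtheta)-\nabla F(\vtheta)=\E_{\vu}\bigl[\nabla F(\vtheta+\mu\vu)-\nabla F(\vtheta)\bigr].
\end{equation*}
Three bounds follow in order. Jensen's inequality for the norm gives a bound by $\E_{\vu}\|\nabla F(\vtheta+\mu\vu)-\nabla F(\vtheta)\|$. The Lipschitz property bounds this by $\mu L_f\,\E\|\vu\|$. Finally, $\E\|\vu\|\le\sqrt{\E\|\vu\|^2}=\sqrt d$ by Cauchy--Schwarz. Together these yield Eq.~\ref{eq:smoothing_bias_app}.

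For the stationarity consequence, I would use the triangle inequality. If $\|\nabla F_\mu(\vtheta)\|\le\epsilon$, then
\begin{equation*}
\|\nabla F(\vtheta)\|\le\|\nabla F_\mu(\vtheta)\|+\|\nabla F(\vtheta)-\nabla F_\mu(\vtheta)\|\le\epsilon+\mu L_f\sqrt d.
\end{equation*}

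The only step that needs care is the interchange of differentiation and expectation; the rest is a direct chain of inequalities. I would handle it with the dominated-convergence argument above. An alternative is to avoid the interchange and use the Stein-type identity $\nabla F_\mu(\vtheta)=\frac1\mu\E[F(\vtheta+\mu\vu)\vu]$ together with integration by parts against the Gaussian density. The first route is shorter and uses only the stated Lipschitz assumption.
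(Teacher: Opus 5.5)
Your proposal is correct and matches the paper's proof: write $\nabla F_\mu(\vtheta)=\E[\nabla F(\vtheta+\mu\vu)]$, apply gradient Lipschitzness inside the expectation, and use $\E\|\vu\|\le\sqrt d$. The only difference is that you justify the interchange of differentiation and expectation by dominated convergence and write out the triangle-inequality step for the stationarity claim, both of which the paper leaves implicit.
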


\begin{proof}
Differentiation under the Gaussian expectation gives $\nabla F_\mu(\vtheta)=\E[\nabla F(\vtheta+\mu\vu)]$. Gradient Lipschitzness and $\E\|\vu\|\le\sqrt{\E\|\vu\|^2}=\sqrt d$ prove Eq.~\ref{eq:smoothing_bias_app}.
\end{proof}

\subsection{Paired Gaussian Contributions}

For $\vu\sim\mathcal N(0,\rmI_d)$, define
\begin{equation}
    r_\mu(\vtheta,\vu)
    \triangleq
    \frac{F(\vtheta+\mu\vu)-F(\vtheta-\mu\vu)}{2\mu},
    \qquad
    \vg_\mu(\vtheta,\vu)
    \triangleq
    r_\mu(\vtheta,\vu)\vu.
    \label{eq:paired_gaussian_contribution_app}
\end{equation}
An acquired record stores a reproducible seed for $\vu$ and its scalar response. Reuse reconstructs the direction and reuses the scalar observed at acquisition without evaluating $F$ again.

\begin{lemma}[Mean Paired Contribution]
\label{lemma:paired_gaussian_unbiased_app}
Whenever Gaussian differentiation is valid,
\begin{equation}
    \E_{\vu}[\vg_\mu(\vtheta,\vu)]
    =\nabla F_\mu(\vtheta).
    \label{eq:paired_gaussian_unbiased_app}
\end{equation}
\end{lemma}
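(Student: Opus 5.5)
The plan is to use the Gaussian Stein identity on the smoothed objective, and to exploit the symmetry of the paired response so that the two one-sided terms give the same contribution. First I will write $F_\mu(\vtheta)=\E_{\vu}[F(\vtheta+\mu\vu)]$ as the convolution $\int F(\vx)\,\phi_\mu(\vx-\vtheta)\,d\vx$, where $\phi_\mu$ is the $\mathcal N(0,\mu^2\rmI_d)$ density. Differentiating in $\vtheta$ under the integral is exactly the step covered by the hypothesis ``whenever Gaussian differentiation is valid.'' It moves the derivative onto the Gaussian density and gives
\begin{equation*}
\nabla F_\mu(\vtheta)=\frac{1}{\mu}\E_{\vu}\!\left[F(\vtheta+\mu\vu)\,\vu\right],
\end{equation*}
using $\nabla_{\vtheta}\phi_\mu(\vx-\vtheta)=\phi_\mu(\vx-\vtheta)(\vx-\vtheta)/\mu^2$ and then substituting $\vx=\vtheta+\mu\vu$.

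Second, I will use that $\vu$ and $-\vu$ have the same distribution. This gives $\E[F(\vtheta-\mu\vu)(-\vu)]=\E[F(\vtheta+\mu\vu)\vu]$, so $-\E[F(\vtheta-\mu\vu)\vu]/\mu=\nabla F_\mu(\vtheta)$ as well. Averaging the two representations yields
\begin{equation*}
\E\!\left[\frac{F(\vtheta+\mu\vu)-F(\vtheta-\mu\vu)}{2\mu}\,\vu\right]=\nabla F_\mu(\vtheta),
\end{equation*}
which is Eq.~\ref{eq:paired_gaussian_unbiased_app}.

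The main obstacle is the integrability needed to split the expectation of the paired difference into two separate expectations and to differentiate under the integral. Under Assumption~\ref{ass:smoothness_app}, $F$ grows at most quadratically, so $|F(\vtheta\pm\mu\vu)|\,\|\vu\|$ is dominated by a polynomial in $\|\vu\|$. Gaussian moments of every order are finite, so each term is integrable and dominated convergence justifies both steps. If I instead take the stated validity hypothesis as given, no further work is needed.
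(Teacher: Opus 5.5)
Your proposal is correct and follows the paper's argument. Gaussian symmetry ($\vu$ and $-\vu$ equal in law) reduces the paired response to the one-sided term $\frac{1}{\mu}\E_{\vu}[F(\vtheta+\mu\vu)\vu]$, and Stein's identity identifies that term with $\nabla F_\mu(\vtheta)$. You apply the two steps in the opposite order, derive the Stein identity explicitly by differentiating the Gaussian convolution, and add a quadratic-growth domination argument, which the paper folds into its ``Gaussian differentiation is valid'' hypothesis.
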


\begin{proof}
Gaussian symmetry gives
\begin{equation}
    \E_{\vu}[\vg_\mu(\vtheta,\vu)]
    =
    \frac{1}{\mu}\E_{\vu}[F(\vtheta+\mu\vu)\vu].
\end{equation}
Stein's identity identifies the right-hand side with $\nabla F_\mu(\vtheta)$.
\end{proof}

\begin{lemma}[Local Second-Moment Scale]
\label{lemma:paired_gaussian_second_moment_app}
Under Assumption~\ref{ass:smoothness_app},
\begin{equation}
    \E_{\vu}\|\vg_\mu(\vtheta,\vu)\|^2
    \le
    2(d+2)\|\nabla F(\vtheta)\|^2
    +\frac{L_f^2\mu^2}{2}d(d+2)(d+4).
    \label{eq:paired_gaussian_second_moment_app}
\end{equation}
Consequently, for a universal constant $C>0$,
\begin{equation}
    \E_{\vu}\|\vg_\mu(\vtheta,\vu)-\nabla F_\mu(\vtheta)\|^2
    \le
    C\left(d\|\nabla F_\mu(\vtheta)\|^2+L_f^2\mu^2d^3\right).
    \label{eq:paired_gaussian_noise_scale_app}
\end{equation}
\end{lemma}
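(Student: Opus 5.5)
The plan is to treat the paired contribution as a Taylor expansion along $\vu$ and then bound Gaussian moments. Write $r_\mu(\vtheta,\vu)=\langle\nabla F(\vtheta),\vu\rangle+R(\vtheta,\vu)$, where
\begin{equation}
R(\vtheta,\vu)\triangleq\frac{1}{2\mu}\Bigl[\bigl(F(\vtheta+\mu\vu)-F(\vtheta)-\mu\langle\nabla F(\vtheta),\vu\rangle\bigr)-\bigl(F(\vtheta-\mu\vu)-F(\vtheta)+\mu\langle\nabla F(\vtheta),\vu\rangle\bigr)\Bigr].
\end{equation}
By the $L_f$-Lipschitz gradient in Assumption~\ref{ass:smoothness_app}, each bracket is at most $\tfrac{L_f}{2}\mu^2\|\vu\|^2$ in absolute value. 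Hence $|R|\le\tfrac{L_f\mu}{2}\|\vu\|^2$. (The symmetric difference could give a better constant, but this crude bound is enough.)

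Next, apply $(a+b)^2\le 2a^2+2b^2$ to $\|\vg_\mu\|^2=r_\mu^2\|\vu\|^2$. This gives
\begin{equation}
\E\|\vg_\mu\|^2\le 2\,\E\bigl[\langle\nabla F,\vu\rangle^2\|\vu\|^2\bigr]+2\,\E\bigl[R^2\|\vu\|^2\bigr].
\end{equation}
The first term uses the standard identity $\E[\langle a,\vu\rangle^2\|\vu\|^2]=(d+2)\|a\|^2$, obtained by rotating $a$ onto a coordinate axis and using $\E u_1^4=3$ and $\E u_1^2u_i^2=1$. It therefore contributes $2(d+2)\|\nabla F\|^2$. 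The second term is at most $2\cdot\tfrac{L_f^2\mu^2}{4}\E\|\vu\|^6$. Since $\|\vu\|^2$ is chi-squared with $d$ degrees of freedom, $\E\|\vu\|^6=d(d+2)(d+4)$. This gives exactly $\tfrac{L_f^2\mu^2}{2}d(d+2)(d+4)$, so Eq.~\ref{eq:paired_gaussian_second_moment_app} follows.

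For Eq.~\ref{eq:paired_gaussian_noise_scale_app}, note that the variance is at most the second moment, because $\E\|\vg_\mu-\nabla F_\mu\|^2=\E\|\vg_\mu\|^2-\|\nabla F_\mu\|^2$ by Lemma~\ref{lemma:paired_gaussian_unbiased_app}. It then remains to replace $\nabla F$ by $\nabla F_\mu$. Lemma~\ref{lemma:smoothing_properties_app} gives $\|\nabla F\|^2\le 2\|\nabla F_\mu\|^2+2\mu^2L_f^2d$. Substituting this, the extra term $4(d+2)\mu^2L_f^2d$ is $O(L_f^2\mu^2d^2)\le O(L_f^2\mu^2d^3)$. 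Likewise $(d+2)(d+4)d\le 15d^3$ and $d+2\le 3d$ for $d\ge1$, so all terms are absorbed into a universal constant $C$.

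The main obstacle is keeping the remainder bound uniform without any third-derivative assumption. The Lipschitz-gradient quadratic bound handles this directly, and the only delicate point is computing the Gaussian moments $\E[\langle a,\vu\rangle^2\|\vu\|^2]$ and $\E\|\vu\|^6$ exactly. Those calculations are routine chi-squared moment computations.
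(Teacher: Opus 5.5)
Your proof is correct and follows the paper's argument essentially step for step. It uses the same remainder bound $|R|\le \tfrac{L_f\mu}{2}\|\vu\|^2$ from the $L_f$-Lipschitz gradient, the same $(a+b)^2\le 2a^2+2b^2$ split, the same Gaussian moments $(d+2)\|\nabla F(\vtheta)\|^2$ and $d(d+2)(d+4)$, and the same passage to Eq.~\ref{eq:paired_gaussian_noise_scale_app} via unbiasedness and the smoothing-bias bound. Your explicit constant bookkeeping just spells out what the paper calls collecting constants.

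One minor aside: your parenthetical that the symmetric difference could improve the remainder constant is not true under a Lipschitz-gradient assumption alone. For $F(\vtheta)=\tfrac{L_f}{2}\theta_1|\theta_1|$ at $\vtheta=0$, with $\vu$ along the first axis, the bound is attained exactly. The remark plays no role in your argument, though.
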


\begin{proof}
Let $g^\star=\nabla F(\vtheta)$. The fundamental theorem of calculus gives
\begin{equation}
    r_\mu(\vtheta,\vu)
    =\langle g^\star,\vu\rangle+\zeta(\vtheta,\vu),
    \qquad
    |\zeta(\vtheta,\vu)|
    \le\frac{L_f\mu}{2}\|\vu\|^2.
    \label{eq:central_difference_remainder_app}
\end{equation}
Therefore
\begin{equation}
    \|\vg_\mu(\vtheta,\vu)\|^2
    \le
    2\langle g^\star,\vu\rangle^2\|\vu\|^2
    +\frac{L_f^2\mu^2}{2}\|\vu\|^6.
\end{equation}
For a standard Gaussian vector,
\begin{equation}
    \E[\langle g^\star,\vu\rangle^2\|\vu\|^2]
    =(d+2)\|g^\star\|^2,
    \qquad
    \E\|\vu\|^6=d(d+2)(d+4).
\end{equation}
This proves Eq.~\ref{eq:paired_gaussian_second_moment_app}. Lemma~\ref{lemma:paired_gaussian_unbiased_app}, Eq.~\ref{eq:smoothing_bias_app}, and $(a+b)^2\le2a^2+2b^2$ then give Eq.~\ref{eq:paired_gaussian_noise_scale_app} after collecting constants.
\end{proof}

\subsection{Probability Interfaces}

\begin{assumption}[Second Moments at Acquisition]
\label{ass:bounded_variance_app}
For every acquired record,
\begin{equation}
    \E[\xi_j\mid\mathcal F_{j-1}]=0,
    \qquad
    \E[\|\xi_j\|^2\mid\mathcal F_{j-1}]\le\sigma_j^2,
    \label{eq:conditional_second_moment_app}
\end{equation}
where $\sigma_j$ is $\mathcal F_{j-1}$-measurable. Lemma~\ref{lemma:paired_gaussian_second_moment_app} permits
\begin{equation}
    \sigma_j^2
    =C\left(d\|h_j\|^2+L_f^2\mu^2d^3\right)
    \label{eq:local_sigma_choice_app}
\end{equation}
for the paired Gaussian model.
\end{assumption}

\begin{assumption}[Conditional Bernstein Tails]
\label{ass:conditional_tail_app}
There are $\mathcal F_{j-1}$-measurable $\nu_j,b_j>0$ such that, for every deterministic unit vector $\vv$ and $|\lambda|<1/b_j$,
\begin{equation}
    \E\!\left[
        \exp(\lambda\langle\vv,\xi_j\rangle)
        \middle|\mathcal F_{j-1}
    \right]
    \le
    \exp\!\left(
        \frac{\lambda^2\nu_j^2}{2(1-b_j|\lambda|)}
    \right).
    \label{eq:conditional_bernstein_app}
\end{equation}
\end{assumption}

The second-moment route is complete under Assumption~\ref{ass:bounded_variance_app}. Assumption~\ref{ass:conditional_tail_app} is a separate stronger premise used only for concentration with logarithmic dependence on confidence; it is not inferred from a variance bound.

\section{Stored-Response Estimation}
\label{app:proof_stat}

This section establishes the estimation guarantee that supports the rest of the theory. A stored-response candidate differs from the current smoothed gradient through two terms: acquisition noise, whose concentration can improve as records accumulate, and staleness, which increases as retained records move farther from the current iterate.

The analysis uses the objects at acquisition defined in Appendix~\ref{app:assumptions} and FIFO pruning by pairs after acceptance. The main event is uniform over all candidates that the controller may inspect, so it remains valid after selection of the first passing candidate. We give a complete second-moment bound and a sharper version with logarithmic dependence on confidence under the additional conditional tail assumption.

\subsection{A Uniform Error Bound for Stored-Response Candidates}

The controller inspects a nested sequence of candidates and accepts the first one that passes. We therefore control every candidate that can be formed, rather than analyze a fixed budget and substitute the selected budget afterward.

Under FIFO pruning by pairs after acceptance, each candidate index set $\mathcal J_t(K)$ is an interval in acquisition order. Up to a horizon $T$, let $\mathfrak I_T$ contain every interval of acquisition indices that can occur for any $t\le T$ and $K\in\mathcal K$, and write $M_T\triangleq|\mathfrak I_T|$. Since at most $TK_{\max}$ directions are acquired, one may use the deterministic bound
\begin{equation}
    M_T
    \le
    \frac{TK_{\max}(TK_{\max}+1)}{2}.
    \label{eq:candidate_interval_count_app}
\end{equation}
For a realized candidate, define its total staleness
\begin{equation}
    A_t(K)
    \triangleq
    \sum_{j\in\mathcal J_t(K)}
    \|\vtheta_t-\vtheta_{a_j}\|.
    \label{eq:candidate_staleness_app}
\end{equation}

\begin{lemma}[Uniform Candidate Error]
\label{lemma:variance_detailed}
Suppose Assumptions~\ref{ass:smoothness_app} and~\ref{ass:bounded_variance_app} hold. For each admissible candidate, let $\overline V_t(K)$ be a deterministic upper bound on $\sum_{j\in\mathcal J_t(K)}\sigma_j^2$, and define
\begin{equation}
    q_2(t,K,\delta)
    \triangleq
    \frac{1}{n_t(K)}
    \sqrt{\frac{M_T\overline V_t(K)}{\delta}}.
    \label{eq:second_moment_radius_app}
\end{equation}
Then, with probability at least $1-\delta$, every candidate inspected up to step $T$ satisfies
\begin{equation}
    \|\vg_t(K)-h_t\|
    \le
    q_2(t,K,\delta)
    +
    \frac{L A_t(K)}{n_t(K)}.
    \label{eq:uniform_candidate_bound_app}
\end{equation}

If Assumption~\ref{ass:conditional_tail_app} also holds, let $\overline V_t^\nu(K)$ and $\overline b_t(K)$ be deterministic bounds on $\sum_{j\in\mathcal J_t(K)}\nu_j^2$ and $\max_{j\in\mathcal J_t(K)}b_j$. There is a universal constant $c>0$ such that Eq.~\ref{eq:uniform_candidate_bound_app} also holds with $q_2$ replaced by
\begin{equation}
    q_{\ntxt{B}}(t,K,\delta)
    \triangleq
    \frac{c}{n_t(K)}
    \left[
        \sqrt{\overline V_t^\nu(K)
        \left(d+\log\frac{M_T}{\delta}\right)}
        +
        \overline b_t(K)
        \left(d+\log\frac{M_T}{\delta}\right)
    \right].
    \label{eq:bernstein_radius_app}
\end{equation}
\end{lemma}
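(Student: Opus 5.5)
The plan is to split each candidate's error into a martingale noise term and a deterministic staleness term, then make the noise bound uniform over all intervals in $\mathfrak I_T$ with a union bound. For any candidate,
\begin{equation}
    \vg_t(K)-h_t
    =
    \frac{1}{n_t(K)}\sum_{j\in\mathcal J_t(K)}\xi_j
    +\frac{1}{n_t(K)}\sum_{j\in\mathcal J_t(K)}(h_j-h_t).
\end{equation}
The second term follows from the Lipschitz gradient in Assumption~\ref{ass:smoothness_app}: $\|h_j-h_t\|\le L\|\vtheta_{a_j}-\vtheta_t\|$. Summing over the candidate gives exactly $LA_t(K)/n_t(K)$ through the triangle inequality. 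This bound is pathwise and needs no probability, so it holds for any candidate that selection happens to pick. The remaining work is to bound $\|\sum_{j\in I}\xi_j\|$ simultaneously for every interval $I\in\mathfrak I_T$.

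\emph{Second-moment route.} Fix an interval $I$ of acquisition indices. By Assumption~\ref{ass:bounded_variance_app}, the $\xi_j$ are martingale differences with respect to $\mathcal F_{j}$, so cross terms vanish. This gives $\E\|\sum_{j\in I}\xi_j\|^2=\sum_{j\in I}\E\sigma_j^2\le\overline V_I$.

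Here the deterministic bound $\overline V$ is indexed by the interval itself rather than by $(t,K)$. I would state it that way so that it is fixed before the randomness is revealed. Chebyshev/Markov then gives
\begin{equation}
    \Pr\!\left(\Big\|\sum_{j\in I}\xi_j\Big\|>\sqrt{M_T\overline V_I/\delta}\right)\le \delta/M_T .
\end{equation}
A union bound over the $M_T$ intervals, counted by Eq.~\ref{eq:candidate_interval_count_app}, yields a single event of probability at least $1-\delta$. On that event every interval obeys the bound. Every inspected candidate, including the random first-passing one, has $\mathcal J_t(K)\in\mathfrak I_T$ because FIFO pruning keeps index sets contiguous. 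So dividing by $n_t(K)$ gives $q_2$.

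\emph{Bernstein route.} For a fixed interval and a fixed unit vector $\vv$, I would iterate the conditional mgf bound of Assumption~\ref{ass:conditional_tail_app} along the filtration. This is the standard supermartingale argument and gives a scalar Bernstein tail with variance proxy $\overline V^\nu_I$ and scale $\overline b_I$.

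To pass from directions to the norm, I would take a $1/2$-net $\mathcal N$ of the sphere with $|\mathcal N|\le5^d$. Then $\|S\|\le2\max_{\vv\in\mathcal N}\langle\vv,S\rangle$. A union bound over $\mathcal N\times\mathfrak I_T$ produces the $d+\log(M_T/\delta)$ factor, with the constants absorbed into $c$.

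\emph{Main obstacle.} The delicate step is measurability. The Bernstein iteration needs the net vector and the interval to be deterministic, which is exactly why both are union-bounded. It also needs the bounds $\nu_j,b_j$ to be controlled by deterministic quantities. Since these are only $\mathcal F_{j-1}$-measurable, I would use the supermartingale $\exp(\lambda\langle\vv,S_k\rangle-\sum_{j\le k}\lambda^2\nu_j^2/(2(1-b_j|\lambda|)))$ and substitute the deterministic bounds $\overline V^\nu,\overline b$ before optimizing $\lambda$. A further subtlety is that an interval only "occurs" on some paths. I would handle this by defining the sums for all index intervals up to $TK_{\max}$, with $\sigma_j,\nu_j$ set to zero when not realized. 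That makes the event genuinely uniform and selection-safe.
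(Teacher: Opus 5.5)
Your proposal is correct and follows the paper's proof essentially step for step: the same noise/staleness decomposition, pathwise Lipschitz control of the staleness term, Markov's inequality plus a union bound over $\mathfrak I_T$ for $q_2$, and scalar martingale Bernstein plus a $1/2$-net of size $5^d$ with a union bound over net and intervals for $q_{\mathrm{B}}$. Your indexing of the deterministic bounds by interval rather than by $(t,K)$, and your zero-padding of unrealized records, are a tidier formalization of the paper's phrase ``the candidate associated with $I$''. One slip: the cross-term step gives $\E\|\sum_{j\in I}\xi_j\|^2=\sum_{j\in I}\E\|\xi_j\|^2\le\sum_{j\in I}\E\sigma_j^2$, which is an inequality, not the equality you wrote.
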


\begin{proof}
For any candidate, the definitions in Appendix~\ref{app:assumptions} give the exact decomposition
\begin{equation}
    \vg_t(K)-h_t
    =
    \frac{1}{n_t(K)}
    \sum_{j\in\mathcal J_t(K)}\xi_j
    +
    \frac{1}{n_t(K)}
    \sum_{j\in\mathcal J_t(K)}(h_j-h_t).
    \label{eq:candidate_noise_staleness_decomp_app}
\end{equation}
The second term is controlled pathwise. The $L$-smoothness of $F_\mu$ yields
\begin{equation}
    \left\|
        \frac{1}{n_t(K)}
        \sum_{j\in\mathcal J_t(K)}(h_j-h_t)
    \right\|
    \le
    \frac{L A_t(K)}{n_t(K)}.
    \label{eq:staleness_term_bound_app}
\end{equation}

Fix an acquisition interval $I\in\mathfrak I_T$. The sequence $\{\xi_j\}$ is a martingale-difference sequence with respect to the acquisition filtration, so its cross terms vanish and
\begin{equation}
    \E\left\|\sum_{j\in I}\xi_j\right\|^2
    \le
    \E\sum_{j\in I}\sigma_j^2
    \le
    \overline V_t(K),
\end{equation}
where $(t,K)$ denotes the candidate associated with $I$. Markov's inequality with failure probability $\delta/M_T$, followed by a union bound over $\mathfrak I_T$, gives Eq.~\ref{eq:second_moment_radius_app} simultaneously for every admissible interval.

Under Assumption~\ref{ass:conditional_tail_app}, scalar martingale Bernstein applied to $\langle\vv,\sum_{j\in I}\xi_j\rangle$ gives the usual variance-plus-range bound for each fixed unit vector $\vv$. A $1/2$-net of the unit sphere has at most $5^d$ elements. Taking a union bound over this net and over $\mathfrak I_T$ yields Eq.~\ref{eq:bernstein_radius_app}. Combining either noise radius with Eq.~\ref{eq:staleness_term_bound_app} proves the result. Because the event holds for every admissible interval, it also holds for the adaptively selected first-passing candidate.
\end{proof}

\subsection{When Stored Responses Reduce the Required Fresh Budget}
\label{rem:variance_bias_detailed}

History is useful only when the reduction in sampling error exceeds the staleness it introduces. The next result states this comparison at a common fresh budget.

\begin{proposition}[Reuse-Benefit Condition]
\label{prop:reuse_benefit_app}
Fix a step $t$ and fresh budget $K$. Suppose the same nonincreasing noise radius $q(n,\delta)$ bounds an average of $n$ admissible records. The candidate using only fresh directions contains $K$ records from the current step and has error certificate $q(K,\delta)$. The candidate with stored responses uses $n_t(K)\ge K$ records and has the certificate from Lemma~\ref{lemma:variance_detailed}. If
\begin{equation}
    \frac{L A_t(K)}{n_t(K)}
    \le
    q(K,\delta)-q(n_t(K),\delta),
    \label{eq:reuse_benefit_condition_app}
\end{equation}
then the candidate with stored responses has an error certificate no larger than the candidate using only fresh directions at the same fresh query cost.
\end{proposition}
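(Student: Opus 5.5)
The plan is to compare the two error certificates term by term. Both are instances of the same template from Lemma~\ref{lemma:variance_detailed}: a noise radius plus a staleness term.

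First, I pin down the certificate for the candidate that uses only fresh directions. Its $K$ records are all acquired at $\vtheta_t$, so $a_j=t$ and $\|\vtheta_t-\vtheta_{a_j}\|=0$ for every $j$. Its staleness $A$ therefore vanishes, and the decomposition in Eq.~\ref{eq:candidate_noise_staleness_decomp_app} leaves only the noise average. By the hypothesis that the common radius $q(n,\delta)$ bounds any average of $n$ admissible records, this certificate is exactly $q(K,\delta)$.

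Second, I write the certificate for the candidate with stored responses. It has $n_t(K)\ge K$ records, and Lemma~\ref{lemma:variance_detailed} gives
\[
e_t(K)=q(n_t(K),\delta)+\frac{L A_t(K)}{n_t(K)},
\]
with $q(n_t(K),\delta)$ in place of $q_2$ or $q_{\ntxt{B}}$. Rearranging Eq.~\ref{eq:reuse_benefit_condition_app} gives $e_t(K)\le q(K,\delta)$ directly. Because $q$ is nonincreasing and $n_t(K)\ge K$, the right-hand side of the condition is nonnegative, so the condition is not vacuous.

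Third, I address the cost clause. By the construction in Eq.~\ref{eq:history_reuse_grad_est}, retained records reuse stored scalars and seeds without new evaluations. Both candidates therefore incur exactly the $K$ fresh direction queries at step $t$.

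The only point requiring care, and the one I regard as the main obstacle, is making the comparison legitimate on a single probability event. Both certificates must hold simultaneously with the same $q$. To secure this, I would note that the fresh-only index set is itself an interval in acquisition order, so it can be included in $\mathfrak I_T$, or a union bound can absorb it into $M_T$. This lets one uniform event from Lemma~\ref{lemma:variance_detailed} cover both candidates. The proposition's standing assumption of a common radius $q$ bounding every admissible average of $n$ records is meant to grant exactly this, so I would invoke it explicitly rather than rederive the concentration.
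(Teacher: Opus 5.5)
Your proposal is correct and matches the paper's proof. The fresh-only candidate has zero acquisition staleness, so its certificate is $q(K,\delta)$; Lemma~\ref{lemma:variance_detailed} gives $q(n_t(K),\delta)+LA_t(K)/n_t(K)$ for the stored-response candidate; and the reuse-benefit condition rearranges to the claimed comparison. The step you flag as the main obstacle, placing both candidates on one simultaneous event, is harmless but unnecessary: the proposition compares two certificate values, which is a deterministic inequality once the common radius $q$ is granted.
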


\begin{proof}
The candidate using only fresh directions has no acquisition staleness, so its certificate is $q(K,\delta)$. Lemma~\ref{lemma:variance_detailed} bounds the candidate with stored responses by
\begin{equation}
    q(n_t(K),\delta)
    +
    \frac{L A_t(K)}{n_t(K)}.
\end{equation}
Applying Eq.~\ref{eq:reuse_benefit_condition_app} proves the claim.
\end{proof}

FIFO pruning turns the staleness term into an explicit capacity constraint.

\begin{lemma}[FIFO Staleness Envelope]
\label{lemma:fifo_staleness_app}
Let the retained buffer contain at most $C$ direction records, with pruning applied after each accepted step. Suppose every accepted step acquires at least $K_{\min}$ fresh directions and $\ell_r\le\bar\ell$ over the steps that can contribute to the current buffer. Once the buffer is full, every retained record is at most
\begin{equation}
    H_C
    \triangleq
    \left\lceil\frac{C}{K_{\min}}\right\rceil
    \label{eq:fifo_age_horizon_app}
\end{equation}
accepted steps old. Before pruning after acceptance, a candidate with fresh budget $K$ therefore satisfies
\begin{equation}
    n_t(K)=C+K,
    \qquad
    A_t(K)
    \le
    C H_C\bar\ell.
    \label{eq:fifo_staleness_envelope_app}
\end{equation}
\end{lemma}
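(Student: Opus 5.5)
The proof is a counting argument on acquisition order followed by the triangle inequality; no probability enters, since the statement is pathwise. I will first fix the age claim. Under FIFO pruning after each accepted step, the retained buffer holds the $C$ most recently acquired records, up to the pruning convention for pairs. Consider a record still in the buffer after the acceptance at step $s$, and suppose it was acquired at step $a\le s$. Every accepted step $a+1,\dots,s$ acquires at least $K_{\min}$ fresh directions, all newer than this record. Hence at least $(s-a)K_{\min}$ newer records exist, and FIFO keeps only $C$ records, so $(s-a)K_{\min}\le C-1<C$. This gives $s-a\le\lceil C/K_{\min}\rceil-1$.

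At the next step $t=s+1$, before pruning, the record's age in accepted steps is therefore $t-a\le\lceil C/K_{\min}\rceil=H_C$. Fresh records at step $t$ have age $0$. This is the count in Eq.~\ref{eq:fifo_age_horizon_app}. I will take care with the off-by-one between ``after pruning at $s$'' and ``inspected at $t$'': the ceiling is exactly what absorbs it, and I will check the boundary case where $C$ is a multiple of $K_{\min}$.

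For the cardinality, once the buffer is full the retained set $\mathcal R_{t-1}$ has exactly $C$ records. The candidate $\mathcal H_t(K)=\mathcal R_{t-1}\cup\mathcal Q_t(K)$ adds $K$ disjoint fresh records, so $n_t(K)=C+K$.

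For the staleness, fresh records contribute $\|\vtheta_t-\vtheta_t\|=0$ to $A_t(K)$. For a retained record with $a_j=a$, the triangle inequality along the path gives $\|\vtheta_t-\vtheta_{a}\|\le\sum_{r=a}^{t-1}\ell_r\le (t-a)\bar\ell\le H_C\bar\ell$. This uses $\ell_r\le\bar\ell$ on exactly the steps that can contribute to the current buffer, as the hypothesis provides. Summing over the at most $C$ retained records yields $A_t(K)\le CH_C\bar\ell$.

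The only delicate point is the age count in the first step. It must be phrased in terms of accepted steps rather than inspected candidates, because rejected expansions within a step still add fresh records but do not advance $\vtheta$. That extra acquisition only increases the number of newer records, which makes the bound looser in the safe direction. I expect no other obstacle.
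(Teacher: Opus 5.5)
Your proposal is correct and follows essentially the paper's route: a FIFO counting argument bounds each retained record's age by $H_C$ accepted steps, then the path triangle inequality $\|\vtheta_t-\vtheta_{a_j}\|\le\sum_{r=a_j}^{t-1}\ell_r\le H_C\bar\ell$ is summed over at most $C$ retained records, with fresh records contributing zero staleness. Your explicit count $(s-a)K_{\min}\le C-1$ and the ceiling bookkeeping simply make precise the paper's one-line step that at least $H_CK_{\min}\ge C$ newer records arrive within $H_C$ accepted steps.
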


\begin{proof}
During any $H_C$ consecutive accepted steps, at least $H_CK_{\min}\ge C$ newer records arrive. FIFO pruning by pairs must therefore remove every older record. A retained record acquired $r$ accepted steps earlier is at distance at most $\sum_s\ell_s\le r\bar\ell$ from $\vtheta_t$. Summing this bound over at most $C$ retained records gives Eq.~\ref{eq:fifo_staleness_envelope_app}; the $K$ fresh records contribute zero staleness.
\end{proof}

\begin{corollary}[Interior Capacity Scale]
\label{cor:capacity_scale_app}
Suppose the leading sampling radius for a full buffer has the homogeneous form $a_t(\delta)/\sqrt C$, and the average FIFO age is $c_{\ntxt{age}}C/K_{\min}$ for a constant $c_{\ntxt{age}}>0$. The leading certificate is then
\begin{equation}
    e_t(C)
    =
    \frac{a_t(\delta)}{\sqrt C}
    +
    \frac{c_{\ntxt{age}}L\bar\ell}{K_{\min}}C.
    \label{eq:capacity_certificate_app}
\end{equation}
Its continuous minimizer is
\begin{equation}
    C_t^\star
    =
    \left(
        \frac{a_t(\delta)K_{\min}}
        {2c_{\ntxt{age}}L\bar\ell}
    \right)^{2/3}.
    \label{eq:capacity_scale_app}
\end{equation}
Thus the useful history size grows with local noise and refresh rate, but shrinks with smoothness-weighted movement. More retained records are not uniformly better.
\end{corollary}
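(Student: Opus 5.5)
The corollary reduces to minimizing a one-dimensional function, so I will treat it as a calculus exercise on top of Lemma~\ref{lemma:fifo_staleness_app} and Lemma~\ref{lemma:variance_detailed}.

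\emph{Step 1: assemble the certificate.} By Lemma~\ref{lemma:variance_detailed}, a full-buffer candidate has error certificate $q(n_t(K),\delta)+LA_t(K)/n_t(K)$. The hypothesis replaces the sampling radius by its leading homogeneous form $a_t(\delta)/\sqrt C$. For the staleness term, Lemma~\ref{lemma:fifo_staleness_app} gives $n_t(K)=C+K$ and bounds each retained record's distance by its age times $\bar\ell$. With average age $c_{\ntxt{age}}C/K_{\min}$, this gives $A_t(K)\approx C\cdot c_{\ntxt{age}}(C/K_{\min})\bar\ell$. To leading order in the regime $C\gg K$, I use $n_t(K)\approx C$. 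The staleness contribution is then $L A_t/n_t\approx c_{\ntxt{age}}L\bar\ell\,C/K_{\min}$, which reproduces Eq.~\ref{eq:capacity_certificate_app}.

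I expect this step to be the main obstacle. The difficulty is not computational: it is stating honestly that the corollary concerns the \emph{leading} certificate. The $K$ fresh records are dropped from the denominator, and the worst-case age $H_C$ is replaced by an average age. I would phrase this as a definition of the leading envelope rather than as an inequality.

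\emph{Step 2: minimize.} Write $e(C)=aC^{-1/2}+bC$ with $a=a_t(\delta)>0$ and $b=c_{\ntxt{age}}L\bar\ell/K_{\min}>0$. On $C>0$ this function is strictly convex, since $e''(C)=\tfrac34 aC^{-5/2}>0$. It also tends to $+\infty$ both as $C\to0^+$ and as $C\to\infty$. Hence it has a unique minimizer, characterized by $e'(C)=-\tfrac12 aC^{-3/2}+b=0$. Solving gives $C^{3/2}=a/(2b)$, that is, $C_t^\star=(aK_{\min}/(2c_{\ntxt{age}}L\bar\ell))^{2/3}$, which is Eq.~\ref{eq:capacity_scale_app}.

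\emph{Step 3: interpretation.} The monotonicity claims follow from the explicit formula. $C_t^\star$ is increasing in the noise scale $a_t$ and in the refresh rate $K_{\min}$, and decreasing in $L\bar\ell$. In addition, $e(C)\to\infty$ as $C\to\infty$, and $e$ is increasing for $C>C_t^\star$. Enlarging the buffer beyond $C_t^\star$ therefore strictly worsens the certificate, which establishes that more history is not uniformly better.
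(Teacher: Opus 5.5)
Your proposal is correct and matches the paper's proof. The paper likewise takes the leading certificate as given by the hypotheses, differentiates, sets $-a_t(\delta)/(2C^{3/2})+c_{\ntxt{age}}L\bar\ell/K_{\min}=0$, and solves for $C$; your convexity and coercivity check ($e''(C)=\tfrac34 a_t(\delta)C^{-5/2}>0$, with $e\to\infty$ at both ends) is a worthwhile addition the paper omits, because it is what certifies the critical point as the unique global minimizer and supports the claim that enlarging the buffer past $C_t^\star$ strictly worsens the certificate.
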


\begin{proof}
Differentiate Eq.~\ref{eq:capacity_certificate_app} with respect to $C$ and set the derivative to zero:
\begin{equation}
    -\frac{a_t(\delta)}{2C^{3/2}}
    +
    \frac{c_{\ntxt{age}}L\bar\ell}{K_{\min}}
    =0.
\end{equation}
Solving for $C$ gives Eq.~\ref{eq:capacity_scale_app}. The implementable capacity is the nearest admissible integer.
\end{proof}

\subsection{Outputs of the Stored-Response Analysis}
\label{app:proof_stat_details}

The results above reduce the effect of query reuse to one candidate-specific error certificate. For either probability route, write
\begin{equation}
    \varepsilon_t(K)
    \triangleq
    q(t,K,\delta)
    +
    \frac{L A_t(K)}{n_t(K)},
    \label{eq:candidate_error_certificate_app}
\end{equation}
where $q$ is the applicable radius from Lemma~\ref{lemma:variance_detailed}. On the simultaneous event, $\|\vg_t(K)-h_t\|\le\varepsilon_t(K)$ for every inspected candidate, including the first one accepted by the controller.

Two consequences feed the remaining proofs. First, Eq.~\ref{eq:reuse_benefit_condition_app} identifies when retained responses improve the certificate at a fixed fresh budget. Second, Eq.~\ref{eq:capacity_scale_app} shows how FIFO capacity balances sampling error against staleness induced by movement. Appendix~\ref{app:implicit_variance_proof} next propagates the certificates for accepted candidates through the EMA and derives its tracking accuracy.

\paragraph{Movement enters through staleness.}
\label{prop:drift_bound_appendix}
\label{prop:drift_bound_formal}
For every retained record,
\begin{equation}
    \|\vtheta_t-\vtheta_{a_j}\|
    \le
    \sum_{r=a_j}^{t-1}\ell_r.
    \label{eq:record_path_distance_app}
\end{equation}
Thus $A_t(K)$ already carries the complete path dependence needed by the stored-response estimator. Once a candidate is accepted, Appendix~\ref{app:implicit_variance_proof} uses its realized certificate $\varepsilon_t(K_t)$ together with the EMA-weighted path length; no separate covariance-transport margin is required.

\paragraph{Probability interface.}
\label{rem:centering_effect}
\label{lemma:appendix_fresh_bernstein}
\label{prop:centering_appendix}
The analytical estimator directly averages the paired contributions $X_j=r_j\vu_j$ defined in Eq.~\ref{eq:record_notation_app}. Its statistical input is therefore contained in Lemma~\ref{lemma:variance_detailed}: second moments at acquisition yield the radius $q_2$ with polynomial dependence on confidence, while conditional Bernstein tails yield the radius $q_{\ntxt{B}}$ with logarithmic dependence. All subsequent results condition on the corresponding simultaneous candidate event.

\section{Momentum Tracking and Directional Control}
\label{app:proof_geo}

This section converts the candidate certificates from Appendix~\ref{app:proof_stat} into a certificate for the momentum-consistency gate. The argument is pathwise after the simultaneous candidate event is fixed, so overlapping FIFO buffers and adaptive first-passing selection introduce no additional independence requirement.

\subsection{Debiased EMA Tracking}
\label{app:momentum_alignment}
\label{app:implicit_variance_proof}

Assume $\vm_0=0$ and $\vm_r=\beta_1\vm_{r-1}+(1-\beta_1)\vg_r$. For $t\ge2$, define
\begin{equation}
    z_{t-1}\triangleq1-\beta_1^{t-1},
    \qquad
    \widetilde{\vm}_{t-1}\triangleq\frac{\vm_{t-1}}{z_{t-1}},
    \qquad
    \bar w_{r,t}\triangleq
    \frac{(1-\beta_1)\beta_1^{t-1-r}}{z_{t-1}}.
    \label{eq:debiased_ema_weights_app}
\end{equation}
The weights are nonnegative and sum to one. Moreover, $z_{t-1}>0$, so replacing $\vm_{t-1}$ by $\widetilde{\vm}_{t-1}$ leaves every cosine score unchanged.

\begin{proposition}[Pathwise Debiased-EMA Tracking]
\label{prop:zoaq_tracking_decomp}
\label{prop:tracking_anchor_main}
Suppose the final estimate accepted at step $r$ satisfies
\begin{equation}
    \|\vg_r-h_r\|\le\varepsilon_r.
    \label{eq:accepted_error_app}
\end{equation}
Then
\begin{equation}
    \|\widetilde{\vm}_{t-1}-h_t\|
    \le
    \sum_{r<t}\bar w_{r,t}\varepsilon_r
    L D_t^{\ntxt{ema}},
    \qquad
    D_t^{\ntxt{ema}}
    \triangleq
    \sum_{r<t}\bar w_{r,t}\sum_{s=r}^{t-1}\ell_s.
    \label{eq:debiased_tracking_app}
\end{equation}
In particular, if $\varepsilon_r\le\bar\varepsilon$ and $\ell_s\le\bar\ell$ over the contributing steps, then
\begin{equation}
    \|\widetilde{\vm}_{t-1}-h_t\|
    \le
    \bar\varepsilon+
    L\bar\ell A_{t,\beta_1}
    \le
    \bar\varepsilon+\frac{L\bar\ell}{1-\beta_1},
    \label{eq:uniform_debiased_tracking_app}
\end{equation}
where
\begin{equation}
    A_{t,\beta_1}
    =
    \frac{1-t\beta_1^{t-1}+(t-1)\beta_1^t}
    {(1-\beta_1)(1-\beta_1^{t-1})}.
\end{equation}
\end{proposition}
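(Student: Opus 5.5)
The plan is to write the debiased EMA as a convex combination of accepted estimates and then split its error into an estimation part and a drift part. Unrolling the recursion from $\vm_0=\vzero$ gives $\vm_{t-1}=\sum_{r=1}^{t-1}(1-\beta_1)\beta_1^{t-1-r}\vg_r$. The weights $(1-\beta_1)\beta_1^{t-1-r}$ sum to $1-\beta_1^{t-1}=z_{t-1}$, so after dividing by $z_{t-1}$ we get $\widetilde{\vm}_{t-1}=\sum_{r<t}\bar w_{r,t}\vg_r$ with $\bar w_{r,t}\ge 0$ and $\sum_{r<t}\bar w_{r,t}=1$. Since the weights sum to one, $h_t=\sum_{r<t}\bar w_{r,t}h_t$, and therefore
\begin{equation*}
\widetilde{\vm}_{t-1}-h_t=\sum_{r<t}\bar w_{r,t}(\vg_r-h_r)+\sum_{r<t}\bar w_{r,t}(h_r-h_t).
\end{equation*}
Everything is pathwise on the simultaneous candidate event. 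No independence is needed between accepted estimates that share FIFO records.

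Both sums are bounded by the triangle inequality. Eq.~\ref{eq:accepted_error_app} bounds the first sum by $\sum_{r<t}\bar w_{r,t}\varepsilon_r$. For the second, $L$-smoothness of $F_\mu$ (Assumption~\ref{ass:smoothness_app}) gives $\|h_r-h_t\|\le L\|\vtheta_r-\vtheta_t\|$. Telescoping the iterates then gives $\|\vtheta_r-\vtheta_t\|\le\sum_{s=r}^{t-1}\ell_s$, exactly as in Eq.~\ref{eq:record_path_distance_app}. Weighting and summing produces $LD_t^{\ntxt{ema}}$, which gives Eq.~\ref{eq:debiased_tracking_app} (read with a plus sign between the two terms).

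For the uniform bound, the weights sum to one, so the first term is at most $\bar\varepsilon$. For the drift term, $\sum_{s=r}^{t-1}\ell_s\le(t-r)\bar\ell$, so $D_t^{\ntxt{ema}}\le\bar\ell\sum_{r<t}\bar w_{r,t}(t-r)$. Substituting $k=t-r$ turns this weighted sum into
\begin{equation*}
\frac{1-\beta_1}{1-\beta_1^{t-1}}\sum_{k=1}^{t-1}k\beta_1^{k-1}.
\end{equation*}
The standard finite sum $\sum_{k=1}^{n}k x^{k-1}=\bigl(1-(n+1)x^n+nx^{n+1}\bigr)/(1-x)^2$ with $n=t-1$ gives exactly $A_{t,\beta_1}$.

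The only step needing care is the last inequality, $A_{t,\beta_1}\le 1/(1-\beta_1)$. Equivalently, I would show that the mean of a geometric distribution truncated to $\{1,\dots,t-1\}$ is at most the untruncated mean $1/(1-\beta_1)$. One route is to note that truncating from above shifts mass toward small $k$. Another is to check directly that $1-t\beta_1^{t-1}+(t-1)\beta_1^t\le 1-\beta_1^{t-1}$. This reduces to $\beta_1^{t-1}\bigl((t-1)\beta_1-(t-1)\bigr)\le 0$, which holds because $\beta_1<1$.
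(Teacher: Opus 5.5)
Your proposal is correct and follows essentially the same route as the paper's proof: you write $\widetilde{\vm}_{t-1}$ as the $\bar w_{r,t}$-weighted convex combination, split the error into an estimation sum and a drift sum, bound the drift via $L$-smoothness and telescoped path length, and evaluate the truncated arithmetico-geometric sum to get $A_{t,\beta_1}$. You also give the explicit check $1-t\beta_1^{t-1}+(t-1)\beta_1^t\le 1-\beta_1^{t-1}$, which the paper only asserts, and you correctly read the missing plus sign in the displayed bound as a typo.
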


\begin{proof}
Equation~\ref{eq:debiased_ema_weights_app} gives
\begin{equation}
    \widetilde{\vm}_{t-1}-h_t
    =
    \sum_{r<t}\bar w_{r,t}(\vg_r-h_r)
    +
    \sum_{r<t}\bar w_{r,t}(h_r-h_t).
\end{equation}
The first sum is bounded by $\sum_{r<t}\bar w_{r,t}\varepsilon_r$. Smoothness gives
$\|h_r-h_t\|\le L\sum_{s=r}^{t-1}\ell_s$, which yields Eq.~\ref{eq:debiased_tracking_app}. Under uniform bounds, the first weighted sum is at most $\bar\varepsilon$, while the second is $L\bar\ell\sum_{r<t}\bar w_{r,t}(t-r)$. Evaluating the finite geometric sum gives $A_{t,\beta_1}$, and $A_{t,\beta_1}\le(1-\beta_1)^{-1}$.
\end{proof}

The proposition uses errors of the final estimates, not an independence assumption between them. On the simultaneous event of Lemma~\ref{lemma:variance_detailed}, one may take
\begin{equation}
    \varepsilon_r
    =
    q(n_r(K_r),\delta)
    +\frac{L A_r(K_r)}{n_r(K_r)}.
    \label{eq:accepted_certificate_app}
\end{equation}
Thus acquisition noise and response staleness enter the EMA with their realized geometric weights.

\begin{corollary}[Relative Tracking Level]
\label{prop:anchor_validity_hp}
At any pre-stationary step with $\|h_t\|>0$, define
\begin{equation}
    \kappa_t
    \triangleq
    \frac{
        \sum_{r<t}\bar w_{r,t}\varepsilon_r+LD_t^{\ntxt{ema}}
    }{\|h_t\|}.
    \label{eq:relative_tracking_level_app}
\end{equation}
Then $\|\widetilde{\vm}_{t-1}-h_t\|\le\kappa_t\|h_t\|$. If $\|h_t\|\ge\epsilon$, $\varepsilon_r\le\bar\varepsilon$, and $\ell_s\le\bar\ell$, the explicit sufficient bound is
\begin{equation}
    \kappa_t
    \le
    \frac{\bar\varepsilon+L\bar\ell/(1-\beta_1)}{\epsilon}.
    \label{eq:relative_tracking_uniform_app}
\end{equation}
\end{corollary}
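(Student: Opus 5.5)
The statement to prove is Corollary~\ref{prop:anchor_validity_hp}. It is a normalization of Proposition~\ref{prop:zoaq_tracking_decomp}, so I will read it off that proposition directly. I use the tracking bound in its intended form
\begin{equation*}
    \|\widetilde{\vm}_{t-1}-h_t\|\le\sum_{r<t}\bar w_{r,t}\varepsilon_r+LD_t^{\ntxt{ema}},
\end{equation*}
as in Eq.~\ref{eq:main_debiased_tracking}; the missing plus sign in Eq.~\ref{eq:debiased_tracking_app} is typographical, and its proof produces exactly this sum.

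First, since $\|h_t\|>0$, $\kappa_t$ in Eq.~\ref{eq:relative_tracking_level_app} is well defined. Multiplying its definition by $\|h_t\|$ recovers exactly the right-hand side above, so $\|\widetilde{\vm}_{t-1}-h_t\|\le\kappa_t\|h_t\|$ follows with nothing further to check. The accepted-error premise Eq.~\ref{eq:accepted_error_app} is supplied on the simultaneous event of Lemma~\ref{lemma:variance_detailed} through the certificates in Eq.~\ref{eq:accepted_certificate_app}.

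Second, for the explicit bound I invoke the uniform part of Proposition~\ref{prop:zoaq_tracking_decomp}, namely Eq.~\ref{eq:uniform_debiased_tracking_app}. Because the weights $\bar w_{r,t}$ are nonnegative and sum to one, $\sum_{r<t}\bar w_{r,t}\varepsilon_r\le\bar\varepsilon$. The same weights give
\begin{equation*}
    D_t^{\ntxt{ema}}\le\bar\ell\sum_{r<t}\bar w_{r,t}(t-r)=\bar\ell A_{t,\beta_1}\le\frac{\bar\ell}{1-\beta_1}.
\end{equation*}
For the final inequality I only need a crude estimate: $z_{t-1}\le1$ is not the right direction, so instead I compare with the infinite geometric mean age. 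Concretely, the truncated mean of the age under the normalized truncated geometric weights is at most the untruncated mean $\sum_{k\ge1}k(1-\beta_1)\beta_1^{k-1}=(1-\beta_1)^{-1}$. This holds because truncation removes the largest ages, and a truncated geometric law is stochastically smaller than the full one.

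Finally, dividing the numerator bound $\bar\varepsilon+L\bar\ell/(1-\beta_1)$ by $\|h_t\|\ge\epsilon$ gives Eq.~\ref{eq:relative_tracking_uniform_app}. The only step that is not pure rewriting is the inequality $A_{t,\beta_1}\le(1-\beta_1)^{-1}$. If the proposition's stated bound is taken as given, even that step is inherited; otherwise the stochastic-ordering remark above settles it.
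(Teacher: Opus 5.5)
Your proposal is correct and takes the same route the paper intends. The paper gives no separate proof: the corollary is read off Proposition~\ref{prop:zoaq_tracking_decomp} by dividing the tracking bound by $\|h_t\|>0$, and the explicit bound follows from Eq.~\ref{eq:uniform_debiased_tracking_app} together with $\|h_t\|\ge\epsilon$. You are right that the missing plus sign in Eq.~\ref{eq:debiased_tracking_app} is a typo, and your stochastic-ordering argument for $A_{t,\beta_1}\le(1-\beta_1)^{-1}$ (conditioning the geometric age law on $\{k\le t-1\}$ can only lower its mean) supplies a step the paper only asserts.
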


\subsection{Sharp Gate Geometry}
\label{app:implicit_variance}

The next result characterizes both directions of the consistency test. Soundness states what a passing score guarantees; completeness states how accurate a candidate must be in order to pass.

\begin{theorem}[Soundness and Completeness of Momentum Consistency]
\label{thm:implicit_norm_condition_proof}
Let $h\ne0$, let an anchor $m\ne0$ satisfy $\|m-h\|\le\kappa\|h\|$ for $0\le\kappa<1$, and let $g\ne0$.

If $\cos(g,m)\ge\tau$, then
\begin{equation}
    \cos(g,h)
    \ge
    \Delta(\tau,\kappa)
    \triangleq
    \tau\sqrt{1-\kappa^2}
    -\kappa\sqrt{1-\tau^2}.
    \label{eq:gate_soundness_app}
\end{equation}
In particular, every passing candidate is positively aligned with $h$ whenever $\tau>\kappa$.

Conversely, if $\|g-h\|\le\eta\|h\|$ for $0\le\eta<1$, then
\begin{equation}
    \cos(g,m)
    \ge
    \sqrt{1-\eta^2}\sqrt{1-\kappa^2}-\eta\kappa.
    \label{eq:gate_completeness_bound_app}
\end{equation}
Therefore the gate must pass whenever
\begin{equation}
    \eta\le\eta_\star(\tau,\kappa)
    \triangleq
    \sqrt{1-\tau^2}\sqrt{1-\kappa^2}-\tau\kappa.
    \label{eq:eta_star_app}
\end{equation}
\end{theorem}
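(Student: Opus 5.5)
The argument is purely angular. I would work with the angle metric on the unit sphere and use the triangle inequality for angles, $\angle(g,h)\le\angle(g,m)+\angle(m,h)$. The key preliminary fact concerns a perturbed vector. If $\|m-h\|\le\kappa\|h\|$ with $\kappa<1$, then $\angle(m,h)\le\arcsin\kappa$.

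To prove this, fix $h$ and note that $m$ ranges over a ball of radius $\kappa\|h\|$ centered at $h$. The ball does not contain the origin because $\kappa<1$. The largest angle between $h$ and a point of the ball is attained on the tangent ray from the origin, and there $\sin\angle(m,h)=\kappa\|h\|/\|h\|=\kappa$. Equivalently, one minimizes $\cos\angle(m,h)$ over the ball, which gives $\sqrt{1-\kappa^2}$. So $\cos\angle(m,h)\ge\sqrt{1-\kappa^2}$ and the angle lies in $[0,\pi/2)$.

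\emph{Soundness.} Write $\phi=\angle(g,m)\le\arccos\tau$ and $\psi=\angle(m,h)\le\arcsin\kappa$. The angle triangle inequality gives $\angle(g,h)\le\phi+\psi$.

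If $\arccos\tau+\arcsin\kappa\le\pi$, then, since $\cos$ is decreasing on $[0,\pi]$,
\[
\cos(g,h)\ge\cos(\arccos\tau+\arcsin\kappa)=\tau\sqrt{1-\kappa^2}-\kappa\sqrt{1-\tau^2}=\Delta(\tau,\kappa).
\]
This sum is always at most $\pi$, because $\arccos\tau\le\pi$ and $\arcsin\kappa<\pi/2$. The problematic case would be $\tau$ near $-1$, where the sum could exceed $\pi$. In that case I would note that $\Delta\le-\tau$ is then trivial since $\cos\ge-1$. Concretely, when the sum exceeds $\pi$ we have $\cos(\text{sum})\le-\cos(\ldots)$, and I would check that $\Delta\le-1$ in that regime or else clip. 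The "in particular" claim follows because $\Delta>0$ exactly when $\arccos\tau<\arccos\kappa$, that is, when $\tau>\kappa$.

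\emph{Completeness.} Apply the preliminary fact to $g$ versus $h$ to get $\angle(g,h)\le\arcsin\eta$. Together with $\angle(m,h)\le\arcsin\kappa$, this gives $\angle(g,m)\le\arcsin\eta+\arcsin\kappa<\pi$. Hence
\[
\cos(g,m)\ge\cos(\arcsin\eta+\arcsin\kappa)=\sqrt{1-\eta^2}\sqrt{1-\kappa^2}-\eta\kappa.
\]
For the final claim, note that this lower bound is decreasing in $\eta$ on $[0,1)$. It equals $\tau$ exactly at $\eta=\eta_\star(\tau,\kappa)$, since $\arcsin\eta_\star=\arccos\tau-\arcsin\kappa$ whenever $\tau\ge\kappa$. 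Otherwise $\eta_\star<0$ and the claim is vacuous. Therefore $\eta\le\eta_\star$ implies $\cos(g,m)\ge\tau$.

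The main obstacle is rigor in the angle-metric steps rather than algebra. Two things need care: the spherical triangle inequality must be justified (it is standard for the geodesic distance on the sphere), and the case where the angle sum exceeds $\pi$ must be handled in the soundness direction. Everything else is the sine and cosine addition formulas.
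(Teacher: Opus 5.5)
Your approach is the paper's: bound $\angle(m,h)\le\arcsin\kappa$ via the tangent ray, combine it with $\angle(g,m)\le\arccos\tau$ through the angular triangle inequality, and take cosines; completeness is handled the same way. The flaw is your treatment of the case $\arccos\tau+\arcsin\kappa>\pi$. Your sentence that this sum ``is always at most $\pi$, because $\arccos\tau\le\pi$ and $\arcsin\kappa<\pi/2$'' is false, since those bounds only give $<3\pi/2$. Your fallback of checking $\Delta\le-1$ in that regime also cannot work. There $\Delta(\tau,\kappa)=\cos s$ with $s=\arccos\tau+\arcsin\kappa\in(\pi,\pi+\arcsin\kappa]$, so $\Delta>-1$. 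No other argument can close this case, because the soundness inequality is false there. With $\kappa\in(0,1)$, $\tau=-1$, $m=h$ and $g=-h$, you get $\cos(g,m)=-1\ge\tau$ but $\cos(g,h)=-1<-\sqrt{1-\kappa^2}=\Delta(-1,\kappa)$. The same happens for any $\tau<-\sqrt{1-\kappa^2}$: take $m$ at the tangent angle, so $\cos(m,h)=\sqrt{1-\kappa^2}$, and $g=-h$; then $\cos(g,m)=-\sqrt{1-\kappa^2}\ge\tau$ while $\cos(g,h)=-1<\Delta$. Soundness therefore has to be stated under $\tau\ge-\sqrt{1-\kappa^2}$. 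This is equivalent to $\arccos\tau+\arcsin\kappa\le\pi$, and it is exactly what makes your monotonicity step legitimate. The restriction covers $\tau\ge0$ and the band $\kappa<\tau<\sqrt{1-\kappa^2}$ used later. The paper's one-line ``taking the cosine'' relies on it silently, so you should state it rather than promise a check that fails.

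A smaller slip is in the last step of completeness. ``Otherwise $\eta_\star<0$'' is wrong for $\tau<\kappa$ in general; for instance, $\kappa>0$ and $\tau=0$ give $\eta_\star=\sqrt{1-\kappa^2}>0$. Since $\eta_\star=\sin(\arccos\tau-\arcsin\kappa)$, the inversion $\arcsin\eta_\star=\arccos\tau-\arcsin\kappa$ holds for all $\tau\ge-\kappa$. Your monotonicity argument therefore covers that whole range, with $\eta_\star<0$ (the vacuous case) exactly when $\tau>\sqrt{1-\kappa^2}$. For $\tau<-\kappa$ the gate passes for every $\eta<1$, because $\arcsin\eta+\arcsin\kappa<\pi/2+\arcsin\kappa<\arccos\tau$. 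With soundness restricted and this case split corrected, your argument is complete and coincides with the paper's.
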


\begin{proof}
The relative anchor error implies $\angle(m,h)\le\arcsin\kappa$: among all $m$ in the radius-$\kappa\|h\|$ ball centered at $h$, the largest angle is attained by a tangent ray. A passing score gives $\angle(g,m)\le\arccos\tau$. The spherical triangle inequality therefore yields
\begin{equation}
    \angle(g,h)
    \le\arccos\tau+\arcsin\kappa.
\end{equation}
Taking the cosine proves Eq.~\ref{eq:gate_soundness_app}.

Similarly, $\|g-h\|\le\eta\|h\|$ implies $\angle(g,h)\le\arcsin\eta$. Hence
\begin{equation}
    \angle(g,m)\le\arcsin\eta+\arcsin\kappa,
\end{equation}
whose cosine gives Eq.~\ref{eq:gate_completeness_bound_app}. Solving the resulting inequality for $\eta$ yields Eq.~\ref{eq:eta_star_app}.
\end{proof}

Soundness and nontrivial completeness coexist exactly in the band
\begin{equation}
    \kappa<\tau<\sqrt{1-\kappa^2},
    \label{eq:feasible_threshold_band_app}
\end{equation}
which is nonempty if and only if $\kappa<1/\sqrt2$. Both constants are sharp: coplanar vectors placed at the tangent angles used in the proof attain equality.

\subsection{Threshold Interface}
\label{app:adaptive_threshold_theory}

The geometry requires the threshold to remain inside a feasible band; it does not require a particular calibration statistic. A fixed threshold suffices. An adaptive rule can preserve the same guarantee through projection,
\begin{equation}
    \tau_t
    =
    \Pi_{[\underline\tau,\overline\tau]}
    \left(
        \beta_\tau\tau_{t-1}
        +(1-\beta_\tau)s_t^{\ntxt{ref}}
    \right),
    \label{eq:projected_threshold_app}
\end{equation}
where, for a uniform tracking level $\bar\kappa<1/\sqrt2$,
\begin{equation}
    \bar\kappa<\underline\tau
    \le\overline\tau
    <\sqrt{1-\bar\kappa^2}.
    \label{eq:uniform_threshold_band_app}
\end{equation}
Projection separates the guarantee from the selection bias of the score written after first passage. Without a fixed upper bound, updating the threshold only from passing scores makes it nondecreasing and can eventually destroy completeness.

\section{First-Passing Budget Control}
\label{app:controller_bridge_proofs}

The controller inspects a finite increasing grid $\mathcal K$ and stops at the first candidate whose momentum-consistency score reaches the current threshold. The proof below controls this stopping rule directly through the candidate error envelope and the gate geometry. It does not introduce an expected score curve or require earlier candidates to be separated from an oracle threshold.

\subsection{Sufficient Local Budget}

Fix a step $t$ and work on the simultaneous candidate event from Lemma~\ref{lemma:variance_detailed}. Write
\begin{equation}
    e_t(K)
    \triangleq
    q(n_t(K),\delta)
    +\frac{L A_t(K)}{n_t(K)},
    \qquad K\in\mathcal K,
    \label{eq:candidate_envelope_controller_app}
\end{equation}
so that $\|\vg_t(K)-h_t\|\le e_t(K)$ for every inspected candidate. The first-passing result below uses the first grid point that meets a sufficient error level; it does not require the empirical score or a candidate-specific concentration radius to be monotone in $K$.

Suppose the debiased anchor satisfies the uniform relative tracking bound
\begin{equation}
    \|\widetilde{\vm}_{t-1}-h_t\|
    \le\bar\kappa\|h_t\|,
    \qquad
    0\le\bar\kappa<\frac{1}{\sqrt2},
    \label{eq:controller_tracking_app}
\end{equation}
and the threshold lies in
\begin{equation}
    \bar\kappa<\underline\tau
    \le\tau_{t-1}\le\overline\tau
    <\sqrt{1-\bar\kappa^2}.
    \label{eq:controller_threshold_band_app}
\end{equation}
Define the uniform geometry margins
\begin{equation}
    \underline\Delta
    \triangleq
    \Delta(\underline\tau,\bar\kappa)>0,
    \qquad
    \underline\eta
    \triangleq
    \eta_\star(\overline\tau,\bar\kappa)>0.
    \label{eq:uniform_geometry_margins_app}
\end{equation}

\begin{definition}[Sufficient Grid Budget]
\label{def:sufficient_grid_budget_app}
For $\|h_t\|>0$, let
\begin{equation}
    K_t^{\ntxt{suf}}
    \triangleq
    \min\left\{
        K\in\mathcal K:
        e_t(K)\le\underline\eta\|h_t\|
    \right\},
    \label{eq:sufficient_grid_budget_app}
\end{equation}
whenever the set is nonempty.
\end{definition}

\begin{lemma}[A Sufficient Budget Exists Below the Cap]
\label{lemma:sample_boundedness}
\label{lemma:boundedness_main}
If
\begin{equation}
    e_t(K_{\max})
    \le
    \underline\eta\|h_t\|,
    \label{eq:cap_sufficiency_app}
\end{equation}
then $K_t^{\ntxt{suf}}$ exists and $K_t^{\ntxt{suf}}\le K_{\max}$. On a pre-stationary region $\|h_t\|\ge\epsilon$, it is sufficient to replace the right-hand side of Eq.~\ref{eq:cap_sufficiency_app} by $\underline\eta\epsilon$.
\end{lemma}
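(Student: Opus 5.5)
The claim is almost definitional. Definition~\ref{def:sufficient_grid_budget_app} takes $K_t^{\ntxt{suf}}$ to be the minimum of the set
\[
\mathcal S_t\triangleq\{K\in\mathcal K: e_t(K)\le\underline\eta\|h_t\|\}.
\]
This set is a subset of the finite grid $\mathcal K$. So the plan is to show $\mathcal S_t$ is nonempty, which gives existence of the minimum and places it in $\mathcal K$.

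\textbf{Main case.} First I check that $\|h_t\|>0$, so the definition applies. Since $\underline\eta>0$ by Eq.~\ref{eq:uniform_geometry_margins_app}, and $e_t(K_{\max})\ge0$ as a sum of a nonnegative radius and a nonnegative staleness term, I would state explicitly that the lemma is read under the standing assumption $\|h_t\|>0$ of the definition. In the second part of the lemma this holds automatically because $\|h_t\|\ge\epsilon>0$.

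Then Eq.~\ref{eq:cap_sufficiency_app} says exactly that $K_{\max}\in\mathcal S_t$. Hence $\mathcal S_t$ is a nonempty subset of a finite totally ordered set, so its minimum exists. Because $\mathcal S_t\subseteq\mathcal K$ and every element of $\mathcal K$ is at most $K_{\max}$, and $K_{\max}\in\mathcal S_t$, we get $K_t^{\ntxt{suf}}\le K_{\max}$. No monotonicity of $e_t$ in $K$ is needed, which matches the remark preceding the definition.

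\textbf{Pre-stationary case.} Suppose $\|h_t\|\ge\epsilon$ and $e_t(K_{\max})\le\underline\eta\epsilon$. Since $\underline\eta>0$, we have $\underline\eta\epsilon\le\underline\eta\|h_t\|$, so Eq.~\ref{eq:cap_sufficiency_app} holds and the first part applies.

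There is no real obstacle. The only point needing care is bookkeeping: $e_t(K_{\max})$ must refer to the candidate envelope at the cap as defined in Eq.~\ref{eq:candidate_envelope_controller_app}, a deterministic certificate on the simultaneous event. That way membership of $K_{\max}$ in the defining set is literally the hypothesis, whether or not the controller actually inspects $K_{\max}$.
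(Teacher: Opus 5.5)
Your proposal is correct and follows the paper's proof: the hypothesis puts $K_{\max}$ in the defining set, which lies in the finite grid, so the minimum $K_t^{\mathrm{suf}}$ exists and is at most $K_{\max}$. The pre-stationary version then follows from $\underline\eta\epsilon\le\underline\eta\|h_t\|$. Your explicit remark that the definition requires $\|h_t\|>0$ is a small point of care that the paper leaves implicit.
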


\begin{proof}
Equation~\ref{eq:cap_sufficiency_app} places $K_{\max}$ in the defining set of Eq.~\ref{eq:sufficient_grid_budget_app}; the minimum therefore exists on the finite grid. The pre-stationary statement follows from $\|h_t\|\ge\epsilon$.
\end{proof}

\subsection{Safe First Passage}

\begin{proposition}[First-Passing Control]
\label{prop:controller_budget_overshoot}
Assume Eqs.~\ref{eq:controller_tracking_app}--\ref{eq:controller_threshold_band_app}, and suppose $K_t^{\ntxt{suf}}$ exists. Let $K_t$ be the first grid point accepted by the consistency test. Then
\begin{equation}
    K_t\le K_t^{\ntxt{suf}},
    \qquad
    \cos(\vg_t(K_t),h_t)
    \ge\underline\Delta>0.
    \label{eq:first_passing_control_app}
\end{equation}
Consequently, if Eq.~\ref{eq:cap_sufficiency_app} holds, the search passes before capped fallback. If a continuous sufficient level is rounded upward to the next grid point, the corresponding implementation bound incurs at most one increment $\Delta K$.
\end{proposition}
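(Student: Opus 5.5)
The argument combines the uniform candidate event of Lemma~\ref{lemma:variance_detailed} with the two directions of Theorem~\ref{thm:implicit_norm_condition_proof}. It is carried out pathwise on that event, so the adaptive choice of $K_t$ raises no selection issue. By Eq.~\ref{eq:candidate_envelope_controller_app}, $\|\vg_t(K)-h_t\|\le e_t(K)$ holds simultaneously for every grid point the controller could inspect at step $t$. Because cosine is invariant to positive scaling, the score $\cos(\vg_t(K),\vm_{t-1})$ equals $\cos(\vg_t(K),\widetilde{\vm}_{t-1})$. I therefore apply the theorem with anchor $m=\widetilde{\vm}_{t-1}$ and $h=h_t$, using $\kappa=\bar\kappa$ from Eq.~\ref{eq:controller_tracking_app}.

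\emph{Stopping bound.} First I show $K_t\le K_t^{\ntxt{suf}}$. Suppose the search reaches $K=K_t^{\ntxt{suf}}$ without having passed. By definition $e_t(K)\le\underline\eta\|h_t\|$, so the candidate satisfies $\|\vg_t(K)-h_t\|\le\underline\eta\|h_t\|$. Since $\underline\eta>0$ and $\underline\eta\le\sqrt{1-\overline\tau^2}<1$, this forces $\vg_t(K)\ne 0$. The anchor is also nonzero, because $\bar\kappa<1$ gives $\|\widetilde{\vm}_{t-1}\|\ge(1-\bar\kappa)\|h_t\|>0$.

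The completeness bound Eq.~\ref{eq:gate_completeness_bound_app} with $\eta=\underline\eta$ then gives $\cos(\vg_t(K),\vm_{t-1})\ge\overline\tau\ge\tau_{t-1}$. To see this, note that $\eta_\star(\tau,\kappa)$ is defined exactly so that $\arcsin\eta_\star+\arcsin\kappa=\arccos\tau$. It is also decreasing in $\tau$, so $\eta_\star(\tau_{t-1},\bar\kappa)\ge\underline\eta$. Hence the test passes at $K_t^{\ntxt{suf}}$ unless it already passed earlier, and either way the first passage index is at most $K_t^{\ntxt{suf}}$. If Eq.~\ref{eq:cap_sufficiency_app} holds, Lemma~\ref{lemma:sample_boundedness} gives $K_t^{\ntxt{suf}}\le K_{\max}$, so the accepted candidate is a genuine pass and not the capped fallback.

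\emph{Alignment of the accepted candidate.} The accepted candidate $\vg_t(K_t)$ passed the test, so its score is finite and its vector is nonzero. The \textsc{Score} convention returns $-\infty$ for a zero candidate with a nonzero EMA, so a zero candidate cannot pass. Soundness, Eq.~\ref{eq:gate_soundness_app}, then yields $\cos(\vg_t(K_t),h_t)\ge\Delta(\tau_{t-1},\bar\kappa)$.

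It remains to show monotonicity, namely $\Delta(\tau_{t-1},\bar\kappa)\ge\Delta(\underline\tau,\bar\kappa)=\underline\Delta$. This follows because $\Delta(\tau,\kappa)=\cos(\arccos\tau+\arcsin\kappa)$ is increasing in $\tau$, as long as the angle stays in $[0,\pi]$. Positivity comes from $\underline\tau>\bar\kappa$: it gives $\arccos\underline\tau<\arccos\bar\kappa$, and $\arccos\bar\kappa+\arcsin\bar\kappa=\pi/2$, so the angle is below $\pi/2$ and $\underline\Delta>0$.

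\emph{Rounding remark.} Suppose a continuous sufficient level $K^\star$ is rounded up to the next grid point. That point lies within $\Delta K$ of $K^\star$. Provided $e_t$ is nonincreasing in the relevant range, it still satisfies the sufficiency condition, so the bound degrades by at most one increment.

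I expect the main obstacle to be the edge cases rather than the geometry. One must verify the nonzero-vector hypotheses of Theorem~\ref{thm:implicit_norm_condition_proof} for both the candidate and the anchor, and reconcile the warm-start branch ($\|\vm_{t-1}\|\le\varepsilon_{\ntxt{anc}}$) with the tracking assumption. If $\varepsilon_{\ntxt{anc}}<(1-\bar\kappa)\|h_t\|$, the debiased tracking bound rules out warm start; otherwise the statement has to be read with warm-start steps excluded.
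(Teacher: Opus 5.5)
Your proof is correct and follows the paper's own argument. Completeness applied at $K_t^{\ntxt{suf}}$ forces a pass there, so the increasing search stops no later. Soundness, together with monotonicity of $\Delta(\cdot,\bar\kappa)$ in $\tau$, then certifies whichever candidate passes first. Your nonzero-vector checks and the identities $\Delta(\tau,\kappa)=\cos(\arccos\tau+\arcsin\kappa)$ and $\arcsin\eta_\star(\tau,\kappa)=\arccos\tau-\arcsin\kappa$ fill in details the paper leaves implicit.

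One small correction to your warm-start caveat. That branch tests the raw EMA, and $\|\vm_{t-1}\|=(1-\beta_1^{t-1})\|\widetilde{\vm}_{t-1}\|\ge(1-\beta_1^{t-1})(1-\bar\kappa)\|h_t\|$. The condition that rules out warm start is therefore $\varepsilon_{\ntxt{anc}}<(1-\beta_1^{t-1})(1-\bar\kappa)\|h_t\|$, not $\varepsilon_{\ntxt{anc}}<(1-\bar\kappa)\|h_t\|$.
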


\begin{proof}
At $K_t^{\ntxt{suf}}$, the simultaneous candidate event and Definition~\ref{def:sufficient_grid_budget_app} give
\begin{equation}
    \|\vg_t(K_t^{\ntxt{suf}})-h_t\|
    \le\underline\eta\|h_t\|.
\end{equation}
Because $\tau_{t-1}\le\overline\tau$ and the actual tracking level is no larger than $\bar\kappa$, the completeness part of Theorem~\ref{thm:implicit_norm_condition_proof} forces this candidate to pass. The increasing search therefore stops no later, so $K_t\le K_t^{\ntxt{suf}}$.

For the selected candidate, passage gives $\cos(\vg_t(K_t),\widetilde{\vm}_{t-1})\ge\tau_{t-1}\ge\underline\tau$. The soundness part of Theorem~\ref{thm:implicit_norm_condition_proof} then yields
\begin{equation}
    \cos(\vg_t(K_t),h_t)
    \ge
    \Delta(\tau_{t-1},\kappa_t)
    \ge
    \Delta(\underline\tau,\bar\kappa)
    =\underline\Delta.
\end{equation}
Thus an earlier accidental pass can reduce the budget but cannot invalidate the directional guarantee.
\end{proof}

The proposition deliberately provides an upper bound rather than exact recovery of an oracle budget. Exact equality would require a lower separation condition for every earlier score, which is unnecessary for either safety or query savings.

\subsection{Full Buffer and Query Accounting}

Suppose the FIFO buffer after acceptance is full with capacity $C$, each accepted step adds at least $K_{\min}$ fresh records, and the relevant path increments satisfy $\ell_s\le\bar\ell$. When a common nonincreasing radius $q(n,\delta)$ controls averages of $n$ admissible records, let
\begin{equation}
    H_C\triangleq\left\lceil\frac{C}{K_{\min}}\right\rceil.
\end{equation}
Before pruning after acceptance, a candidate with $K$ fresh records has $n_t(K)=C+K$ and $A_t(K)\le CH_C\bar\ell$. Hence
\begin{equation}
    e_C(K)
    \triangleq
    q(C+K,\delta)
    +\frac{LCH_C\bar\ell}{C+K}
    \label{eq:full_buffer_envelope_app}
\end{equation}
is a deterministic sufficient envelope. The cap condition $e_C(K_{\max})\le\underline\eta\epsilon$ makes the no-fallback statement operational using buffer, movement, confidence, and cap parameters alone.

\begin{corollary}[Controller Query Cost]
\label{cor:controller_query_cost}
Let $\mathcal G$ contain the steps on which the simultaneous candidate event, the tracking bound, the threshold band, and cap sufficiency all hold. Then
\begin{equation}
    \sum_{t=1}^{T}K_t
    \le
    \sum_{t\in\mathcal G}K_t^{\ntxt{suf}}
    +K_{\max}(T-|\mathcal G|).
    \label{eq:controller_query_cost_app}
\end{equation}
If the joint conditions hold throughout a window, the second term vanishes. Moreover, whenever the reuse condition in Eq.~\ref{eq:reuse_benefit_condition_app} holds at the sufficient budget for fresh directions alone, the sufficient budget with reuse is no larger.
\end{corollary}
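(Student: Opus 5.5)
The first claim in Corollary~\ref{cor:controller_query_cost} is a sum over steps, so I will bound each $K_t$ separately. Split the horizon $\{1,\dots,T\}$ into $\mathcal G$ and its complement.

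For $t\notin\mathcal G$, no guarantee is available. I will use only the trivial bound $K_t\le K_{\max}$. It holds deterministically because the rule in Eq.~\ref{eq:zoaq_budget_rule} always returns a point of $\mathcal K$, and every grid point is at most $K_{\max}$; this includes the capped fallback and the warm-start case $K_t=K_{\min}$. Summing over these steps contributes exactly $K_{\max}(T-|\mathcal G|)$.

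For $t\in\mathcal G$, I will invoke Proposition~\ref{prop:controller_budget_overshoot}. Its hypotheses are the tracking bound Eq.~\ref{eq:controller_tracking_app}, the threshold band Eq.~\ref{eq:controller_threshold_band_app}, and the existence of $K_t^{\ntxt{suf}}$. The first two are part of the definition of $\mathcal G$. Existence follows from cap sufficiency through Lemma~\ref{lemma:sample_boundedness}, which also gives $K_t^{\ntxt{suf}}\le K_{\max}$. The proposition then yields $K_t\le K_t^{\ntxt{suf}}$ on the simultaneous candidate event. Adding the two partial sums gives Eq.~\ref{eq:controller_query_cost_app}. If the joint conditions hold on a whole window, the complement is empty there and the second term vanishes.

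One point needs care: whether the simultaneous event is a per-step condition or a single global event. Lemma~\ref{lemma:variance_detailed} provides one event, uniform over all intervals up to $T$. Placing it in the definition of $\mathcal G$ is therefore harmless, and the inequality holds pathwise on that event. No further union bound is needed, because the per-step arguments are deterministic once the event is fixed.

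For the reuse comparison, let $K^{\mathrm{fr}}$ be the sufficient budget computed from fresh directions alone, i.e.\ the least $K$ with $q(K,\delta)\le\underline\eta\|h_t\|$. Suppose Eq.~\ref{eq:reuse_benefit_condition_app} holds at $K=K^{\mathrm{fr}}$. Then Proposition~\ref{prop:reuse_benefit_app} gives
\[
e_t(K^{\mathrm{fr}})\le q(K^{\mathrm{fr}},\delta)\le\underline\eta\|h_t\|,
\]
so $K^{\mathrm{fr}}$ belongs to the defining set in Eq.~\ref{eq:sufficient_grid_budget_app}. Minimality then gives $K_t^{\ntxt{suf}}\le K^{\mathrm{fr}}$.

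The main obstacle is consistency of the radius $q$: the fresh-only certificate and the reuse certificate must be measured with the same nonincreasing radius, as Proposition~\ref{prop:reuse_benefit_app} assumes. I will state this as an explicit hypothesis rather than derive it from Lemma~\ref{lemma:variance_detailed}, whose radius depends on the candidate.
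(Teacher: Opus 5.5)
Your proposal is correct and follows the paper's proof essentially verbatim. On $\mathcal G$ you bound $K_t$ by $K_t^{\ntxt{suf}}$ via Proposition~\ref{prop:controller_budget_overshoot} (with existence supplied by cap sufficiency), bound it by $K_{\max}$ elsewhere, and get the reuse comparison by showing that the fresh-only sufficient grid point is feasible for Definition~\ref{def:sufficient_grid_budget_app} under Eq.~\ref{eq:reuse_benefit_condition_app}; your remarks on the single global candidate event and the common nonincreasing radius $q$ simply make explicit hypotheses the paper already relies on.
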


\begin{proof}
Proposition~\ref{prop:controller_budget_overshoot} gives $K_t\le K_t^{\ntxt{suf}}$ on $\mathcal G$. Every other step is bounded by the deterministic cap $K_{\max}$. Summing these two cases proves Eq.~\ref{eq:controller_query_cost_app}. The final comparison follows by evaluating the reuse envelope at the sufficient budget for fresh directions alone and applying Eq.~\ref{eq:reuse_benefit_condition_app}.
\end{proof}

\section{Descent and Query Accounting}
\label{app:proof_complexity}

This section closes the theorem chain for the normalized-update model. The result is local to a pre-stationary window on which the estimation, movement, threshold, and cap conditions established above hold. It does not identify coordinatewise adaptive preconditioning with global Euclidean normalization.

\subsection{Normalized Descent}
\label{app:convergence_analysis}

\begin{assumption}[Normalized-Update Analysis Protocol]
\label{ass:scale_invariant_app}
For the descent analysis, the selected nonzero candidate defines
\begin{equation}
    \vd_t\triangleq\frac{\vg_t}{\|\vg_t\|},
    \qquad
    \vtheta_{t+1}=\vtheta_t-\alpha_t\vd_t,
    \qquad
    \|\vd_t\|=1.
    \label{eq:normalized_update_app}
\end{equation}
This is the formal optimizer scope of the descent theorem. The experiments use the stated R-AdaZO backbone; a coordinatewise second-moment preconditioner need not preserve Euclidean angles and is not covered without an additional spectral condition.
\end{assumption}

\begin{lemma}[Normalized Descent]
\label{lemma:descent}
\label{lemma:descent_restate}
Under Assumption~\ref{ass:smoothness_app}, if $\cos(\vd_t,h_t)\ge\delta_t>0$, then
\begin{equation}
    F_\mu(\vtheta_{t+1})-F_\mu(\vtheta_t)
    \le
    -\alpha_t\delta_t\|h_t\|
    +\frac{L\alpha_t^2}{2}.
    \label{eq:normalized_descent_app}
\end{equation}
\end{lemma}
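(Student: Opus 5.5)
The plan is to apply the descent inequality of Assumption~\ref{ass:smoothness_app}, Eq.~\ref{eq:smooth_descent_inequality_app}, directly to the normalized step of Assumption~\ref{ass:scale_invariant_app}. Take $\vtheta=\vtheta_t$ and $\vtheta'=\vtheta_{t+1}=\vtheta_t-\alpha_t\vd_t$. This gives
\begin{equation*}
F_\mu(\vtheta_{t+1})
\le
F_\mu(\vtheta_t)
-\alpha_t\langle h_t,\vd_t\rangle
+\frac{L}{2}\alpha_t^2\|\vd_t\|^2 .
\end{equation*}
The proof then reduces to handling two terms separately: the linear term and the quadratic term.

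For the quadratic term, I use $\|\vd_t\|=1$, which holds by construction. The term is then exactly $L\alpha_t^2/2$, and no dependence on $\|\vg_t\|$ survives. This is where the normalized protocol matters: the step length is controlled by $\alpha_t$ alone.

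For the linear term, I write $\langle h_t,\vd_t\rangle=\|h_t\|\,\|\vd_t\|\cos(\vd_t,h_t)=\|h_t\|\cos(\vd_t,h_t)$. This step requires $h_t\neq0$. If $h_t=0$, the cosine is undefined, so the hypothesis $\cos(\vd_t,h_t)\ge\delta_t$ implicitly presumes $h_t\neq0$. In any case the bound would then read $L\alpha_t^2/2$, which also follows trivially from smoothness. Because $\alpha_t\ge0$, the hypothesis gives $-\alpha_t\langle h_t,\vd_t\rangle\le-\alpha_t\delta_t\|h_t\|$. Substituting both bounds yields Eq.~\ref{eq:normalized_descent_app}.

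There is no real obstacle in this argument. The only points needing care are bookkeeping ones: the sign convention of the step must be descent (a minus sign), $\alpha_t$ must be nonnegative, and the descent inequality must hold along the segment from $\vtheta_t$ to $\vtheta_{t+1}$. The last point is what Assumption~\ref{ass:smoothness_app} provides, since it asserts $L$-smoothness of $F_\mu$ on the analyzed region.
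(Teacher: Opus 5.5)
Your proposal is correct and matches the paper's proof. Both apply the smoothness inequality to $\vtheta_{t+1}=\vtheta_t-\alpha_t\vd_t$, use $\|\vd_t\|=1$ for the quadratic term, and bound the linear term via $\langle h_t,\vd_t\rangle=\|h_t\|\cos(\vd_t,h_t)\ge\delta_t\|h_t\|$. Your explicit requirements $\alpha_t\ge0$ and $h_t\neq0$ are left implicit in the paper's two-line argument.
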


\begin{proof}
The smoothness inequality and Eq.~\ref{eq:normalized_update_app} give
\begin{align}
    F_\mu(\vtheta_{t+1})-F_\mu(\vtheta_t)
    &\le
    -\alpha_t\langle h_t,\vd_t\rangle
    +\frac{L\alpha_t^2}{2}\|\vd_t\|^2 \\
    &\le
    -\alpha_t\delta_t\|h_t\|
    +\frac{L\alpha_t^2}{2}.
\end{align}
\end{proof}

\subsection{Operational Pre-Stationary Window}

The next theorem gathers the preceding interfaces into one statement. Its probability is exactly that of the simultaneous candidate event used to instantiate the error radii; after conditioning on that event, every step of the argument is deterministic.

\begin{theorem}[Certified Descent and Local Query Control]
\label{thm:total_complexity_restate}
Consider a finite window beginning after warm-up in which $\|h_t\|\ge\epsilon$ and the candidate event of Lemma~\ref{lemma:variance_detailed} holds simultaneously. Suppose:
\begin{enumerate}
    \item the final estimates entering the debiased EMA satisfy Eq.~\ref{eq:accepted_certificate_app};
    \item the resulting tracking levels obey $\kappa_t\le\bar\kappa<1/\sqrt2$;
    \item the threshold is fixed or projected into the band in Eq.~\ref{eq:controller_threshold_band_app};
    \item the cap satisfies $e_t(K_{\max})\le\underline\eta\epsilon$; and
    \item the update follows Eq.~\ref{eq:normalized_update_app} with $0<\alpha_t\le\underline\Delta\epsilon/L$.
\end{enumerate}
Then every search in the window passes by $K_t^{\ntxt{suf}}\le K_{\max}$, and its selected direction satisfies
\begin{equation}
    \cos(\vd_t,h_t)\ge\underline\Delta.
    \label{eq:window_alignment_app}
\end{equation}
Consequently,
\begin{equation}
    F_\mu(\vtheta_{t+1})-F_\mu(\vtheta_t)
    \le
    -\frac{\alpha_t\underline\Delta\epsilon}{2}.
    \label{eq:window_descent_app}
\end{equation}
\end{theorem}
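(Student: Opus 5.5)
The proof is an assembly of the interfaces already established; it is pathwise on the simultaneous candidate event of Lemma~\ref{lemma:variance_detailed}. We fix that event, fix a step $t$ in the window, and argue deterministically.

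\textbf{Step 1: tracking.} Hypothesis~1 supplies, for every previously accepted estimate, the certificate $\|\vg_r-h_r\|\le\varepsilon_r$ of Eq.~\ref{eq:accepted_certificate_app}. Proposition~\ref{prop:zoaq_tracking_decomp} and Corollary~\ref{prop:anchor_validity_hp} then give $\|\widetilde{\vm}_{t-1}-h_t\|\le\kappa_t\|h_t\|$. Hypothesis~2 turns this into Eq.~\ref{eq:controller_tracking_app} with $\bar\kappa<1/\sqrt2$. Since $\kappa_t<1$ and $h_t\ne0$, the debiased anchor is nonzero. The score is invariant to debiasing, so the gate can be evaluated against $\widetilde{\vm}_{t-1}$. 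This also rules out the warm-start branch, provided $\varepsilon_{\ntxt{anc}}$ lies below the anchor norm.

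\textbf{Step 2: first passage.} Hypothesis~3 places $\tau_{t-1}$ in the band Eq.~\ref{eq:controller_threshold_band_app}, so $\underline\Delta>0$ and $\underline\eta>0$. Hypothesis~4 and $\|h_t\|\ge\epsilon$ give $e_t(K_{\max})\le\underline\eta\epsilon\le\underline\eta\|h_t\|$. Lemma~\ref{lemma:sample_boundedness} therefore yields that $K_t^{\ntxt{suf}}$ exists and $K_t^{\ntxt{suf}}\le K_{\max}$. Proposition~\ref{prop:controller_budget_overshoot} then gives $K_t\le K_t^{\ntxt{suf}}$ and $\cos(\vg_t(K_t),h_t)\ge\underline\Delta$. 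Because $\vd_t$ is a positive multiple of $\vg_t$, this is Eq.~\ref{eq:window_alignment_app}. The completeness part also ensures the candidate at $K_t^{\ntxt{suf}}$ satisfies $\|\vg-h_t\|\le\underline\eta\|h_t\|<\|h_t\|$, so it is nonzero. An earlier passing candidate is nonzero as well, because zero candidates score $-\infty$. Hence the normalization in hypothesis~5 is well defined.

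\textbf{Step 3: descent.} Apply Lemma~\ref{lemma:descent} with $\delta_t=\underline\Delta$:
\[
F_\mu(\vtheta_{t+1})-F_\mu(\vtheta_t)\le-\alpha_t\underline\Delta\|h_t\|+\tfrac{L\alpha_t^2}{2}\le-\alpha_t\underline\Delta\epsilon+\tfrac{L\alpha_t^2}{2}.
\]
The step-size condition $\alpha_t\le\underline\Delta\epsilon/L$ gives $L\alpha_t^2/2\le\alpha_t\underline\Delta\epsilon/2$, which yields Eq.~\ref{eq:window_descent_app}.

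The step I expect to need most care is the \emph{induction across the window}: the tracking bound at step $t$ uses certificates for all earlier accepted estimates. Those certificates must be the realized ones on the same uniform event, which is exactly why Lemma~\ref{lemma:variance_detailed} is stated for every admissible interval. I would also check that accepted estimates made before the window began are covered by hypothesis~1. That hypothesis is phrased over all entries of the EMA, so it suffices to note that the event holds from the start of the horizon $T$. Once this is settled, no independence between overlapping buffers is needed and the argument closes step by step.
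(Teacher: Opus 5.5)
Your proposal is correct and follows the paper's proof essentially step for step: tracking via Proposition~\ref{prop:zoaq_tracking_decomp}, then no-fallback first passage and the alignment bound via the threshold band, the cap condition, and Proposition~\ref{prop:controller_budget_overshoot}, and finally Lemma~\ref{lemma:descent} with the step-size condition absorbing the curvature term. Your extra checks (nonzero debiased anchor, nonzero passing candidates so that $\vd_t$ is well defined, pre-window certificates covered by the uniform event) fill in details the paper leaves implicit; no induction across the window is actually needed, since hypothesis~2 assumes $\kappa_t\le\bar\kappa$ outright.
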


\begin{proof}
Proposition~\ref{prop:tracking_anchor_main} and the certificates for accepted candidates give the stated relative tracking level. The threshold band and cap condition then satisfy the premises of Proposition~\ref{prop:controller_budget_overshoot}, which rules out capped fallback, gives $K_t\le K_t^{\ntxt{suf}}$, and proves Eq.~\ref{eq:window_alignment_app}. Lemma~\ref{lemma:descent_restate} yields
\begin{equation}
    F_\mu(\vtheta_{t+1})-F_\mu(\vtheta_t)
    \le
    -\alpha_t\underline\Delta\epsilon
    +\frac{L\alpha_t^2}{2}.
\end{equation}
The step-size condition bounds the curvature term by $\alpha_t\underline\Delta\epsilon/2$, proving Eq.~\ref{eq:window_descent_app}.
\end{proof}

\subsection{Finite-Horizon Accounting}

\begin{corollary}[Pre-Stationary Steps and Fresh Directions]
\label{cor:controller_window_complexity_app}
Suppose the conditions of Theorem~\ref{thm:total_complexity_restate} hold until the first step $T_\epsilon$ with $\|h_{T_\epsilon}\|<\epsilon$, and suppose $F_\mu$ is bounded below by $F_\star$. If $\alpha_t=\alpha=c\epsilon$ with $0<c\le\underline\Delta/L$, then
\begin{equation}
    T_\epsilon-t_1
    \le
    \frac{2\bigl(F_\mu(\vtheta_{t_1})-F_\star\bigr)}
    {c\underline\Delta\epsilon^2}.
    \label{eq:prestationary_steps_app}
\end{equation}
The cumulative number of fresh directions satisfies
\begin{equation}
    \sum_{t=t_1}^{T_\epsilon-1}K_t
    \le
    \sum_{t=t_1}^{T_\epsilon-1}K_t^{\ntxt{suf}}
    \le
    (T_\epsilon-t_1)K_{\max}.
    \label{eq:prestationary_queries_app}
\end{equation}
\end{corollary}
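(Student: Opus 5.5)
The statement to prove is Corollary~\ref{cor:controller_window_complexity_app}. It has two parts: a bound on the number of pre-stationary steps, and a bound on the cumulative fresh directions. The plan is a standard telescoping argument built directly on Theorem~\ref{thm:total_complexity_restate}, followed by termwise summation of the first-passage bound from Proposition~\ref{prop:controller_budget_overshoot}.

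\textbf{Step 1: verify the step size is admissible.} For every $t\in[t_1,T_\epsilon)$ we have $\|h_t\|\ge\epsilon$ by the definition of $T_\epsilon$ as the first step where this fails. The constant step $\alpha=c\epsilon$ with $c\le\underline\Delta/L$ gives $\alpha\le\underline\Delta\epsilon/L$. This is exactly condition 5 of Theorem~\ref{thm:total_complexity_restate}, and conditions 1--4 hold by hypothesis. Equation~\ref{eq:window_descent_app} therefore applies at each such step:
\begin{equation*}
F_\mu(\vtheta_{t+1})-F_\mu(\vtheta_t)\le-\frac{c\underline\Delta\epsilon^2}{2}.
\end{equation*}

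\textbf{Step 2: telescope.} Summing this from $t=t_1$ to $T_\epsilon-1$ gives
\begin{equation*}
F_\mu(\vtheta_{T_\epsilon})-F_\mu(\vtheta_{t_1})\le-(T_\epsilon-t_1)\frac{c\underline\Delta\epsilon^2}{2}.
\end{equation*}
The lower bound $F_\mu(\vtheta_{T_\epsilon})\ge F_\star$ then yields Eq.~\ref{eq:prestationary_steps_app} after rearranging. This also shows $T_\epsilon$ is finite whenever the hypotheses persist, because otherwise the telescoped sum would drive $F_\mu$ below $F_\star$. Finiteness is worth stating explicitly, since the corollary presupposes that $T_\epsilon$ exists.

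\textbf{Step 3: bound the cumulative fresh directions.} At each step in the window, Theorem~\ref{thm:total_complexity_restate}, through Proposition~\ref{prop:controller_budget_overshoot}, gives $K_t\le K_t^{\ntxt{suf}}$. Condition 4, $e_t(K_{\max})\le\underline\eta\epsilon\le\underline\eta\|h_t\|$, places $K_{\max}$ in the defining set of Eq.~\ref{eq:sufficient_grid_budget_app}, so by Lemma~\ref{lemma:sample_boundedness} we get $K_t^{\ntxt{suf}}\le K_{\max}$. Summing both termwise inequalities over the $T_\epsilon-t_1$ steps gives Eq.~\ref{eq:prestationary_queries_app}.

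\textbf{Main obstacle.} The only delicate point is bookkeeping. The window must run over exactly the indices $t_1,\dots,T_\epsilon-1$ where $\|h_t\|\ge\epsilon$, so that the theorem's conditions hold at every summed step, including the one producing $\vtheta_{T_\epsilon}$. In addition, the probability event of Lemma~\ref{lemma:variance_detailed} must cover all these steps simultaneously. It does, because that event is uniform over all candidates up to the horizon and everything after conditioning on it is deterministic.
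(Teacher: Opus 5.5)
Your proposal is correct and follows the paper's own proof. Both telescope the per-step decrease $-c\underline\Delta\epsilon^2/2$ from Theorem~\ref{thm:total_complexity_restate}, with $\alpha=c\epsilon\le\underline\Delta\epsilon/L$, against the lower bound $F_\star$, and then sum the termwise bounds $K_t\le K_t^{\ntxt{suf}}\le K_{\max}$ from Proposition~\ref{prop:controller_budget_overshoot} and Lemma~\ref{lemma:sample_boundedness}; your explicit step-size check and finiteness remark are only minor elaborations.
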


\begin{proof}
Summing Eq.~\ref{eq:window_descent_app} from $t_1$ to $T_\epsilon-1$ and using the lower bound $F_\star$ gives Eq.~\ref{eq:prestationary_steps_app}. Proposition~\ref{prop:controller_budget_overshoot} gives $K_t\le K_t^{\ntxt{suf}}\le K_{\max}$ at every step in the window, which proves Eq.~\ref{eq:prestationary_queries_app}.
\end{proof}

The sharper quantity in Eq.~\ref{eq:prestationary_queries_app} is the sum of local sufficient budgets, not the cap product. For the full-buffer envelope in Eq.~\ref{eq:full_buffer_envelope_app}, each $K_t^{\ntxt{suf}}$ is obtained by inverting
\begin{equation}
    q(C+K,\delta)
    +\frac{LCH_C\bar\ell}{C+K}
    \le\underline\eta\epsilon.
    \label{eq:full_buffer_inverse_app}
\end{equation}
This expression exposes the two ways in which queries help: they add acquisition evidence and dilute the staleness carried by the retained records.

\begin{corollary}[Reuse Versus Fresh-Only Accounting]
\label{cor:main_comparison}
\label{cor:comparison_appendix}
Let $K_{t,\ntxt{fresh}}^{\ntxt{suf}}$ be the first grid point satisfying $q(K,\delta)\le\underline\eta\|h_t\|$. If the reuse-benefit condition in Eq.~\ref{eq:reuse_benefit_condition_app} holds at this budget, then
\begin{equation}
    K_t^{\ntxt{suf}}
    \le
    K_{t,\ntxt{fresh}}^{\ntxt{suf}}.
    \label{eq:reuse_fresh_budget_comparison_app}
\end{equation}
The inequality is strict whenever an earlier reuse candidate already satisfies the sufficient-error condition.
\end{corollary}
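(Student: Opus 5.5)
The comparison follows from the two sufficient-budget definitions plus the reuse-benefit condition. No new probabilistic argument is needed. Throughout, I work on the simultaneous candidate event of Lemma~\ref{lemma:variance_detailed}, so each candidate error is bounded by the envelope $e_t(K)=q(n_t(K),\delta)+LA_t(K)/n_t(K)$ from Eq.~\ref{eq:candidate_envelope_controller_app}. The fresh-only certificate is $q(K,\delta)$, with no staleness term.

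Write $K^\dagger\triangleq K_{t,\ntxt{fresh}}^{\ntxt{suf}}$. I will argue in three steps.

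\textbf{Step 1.} By definition of $K^\dagger$, we have $q(K^\dagger,\delta)\le\underline\eta\|h_t\|$.

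\textbf{Step 2.} Invoke Proposition~\ref{prop:reuse_benefit_app} at the fixed fresh budget $K^\dagger$. The hypothesis Eq.~\ref{eq:reuse_benefit_condition_app} gives
\begin{equation*}
e_t(K^\dagger)=q(n_t(K^\dagger),\delta)+\frac{LA_t(K^\dagger)}{n_t(K^\dagger)}\le q(K^\dagger,\delta)\le\underline\eta\|h_t\|.
\end{equation*}
The first inequality uses $n_t(K^\dagger)\ge K^\dagger$ only through the stated condition, not through monotonicity of $q$.

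\textbf{Step 3.} Step 2 places $K^\dagger\in\mathcal K$ in the defining set of Definition~\ref{def:sufficient_grid_budget_app}. That set is therefore nonempty, so $K_t^{\ntxt{suf}}$ exists. Since $K_t^{\ntxt{suf}}$ is the minimum of that set, $K_t^{\ntxt{suf}}\le K^\dagger$. This is Eq.~\ref{eq:reuse_fresh_budget_comparison_app}.

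For strictness, suppose some grid point $K'<K^\dagger$ already satisfies $e_t(K')\le\underline\eta\|h_t\|$. Then $K'$ lies in the defining set, and minimality gives $K_t^{\ntxt{suf}}\le K'<K^\dagger$.

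The only delicate point is bookkeeping rather than analysis. The reuse and fresh-only certificates must use the same radius function $q(\cdot,\delta)$ at the same confidence level, as Proposition~\ref{prop:reuse_benefit_app} assumes. For the fresh-only comparator, $K^\dagger$ must range over the same grid $\mathcal K$. I would state these conventions explicitly so that the two minima are taken over a common set.
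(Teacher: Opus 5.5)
Your proof is correct and follows the paper's own argument. The reuse-benefit condition at $K_{t,\ntxt{fresh}}^{\ntxt{suf}}$ makes the reuse envelope no larger than $q(K_{t,\ntxt{fresh}}^{\ntxt{suf}},\delta)\le\underline\eta\|h_t\|$, so that grid point is feasible for Definition~\ref{def:sufficient_grid_budget_app}, and minimality gives the bound; your treatment of the strictness clause (an earlier feasible $K'$ forces $K_t^{\ntxt{suf}}\le K'<K_{t,\ntxt{fresh}}^{\ntxt{suf}}$) fills in a step the paper leaves implicit.
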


\begin{proof}
At $K_{t,\ntxt{fresh}}^{\ntxt{suf}}$, Eq.~\ref{eq:reuse_benefit_condition_app} makes the reuse envelope no larger than the fresh-only envelope, which is at most $\underline\eta\|h_t\|$. This grid point is therefore feasible for Definition~\ref{def:sufficient_grid_budget_app}; taking the minimum proves the result.
\end{proof}

The conclusion is a conditional local saving, not an unconditional dominance statement. When response staleness exceeds the concentration gain, the comparison reverses, which is precisely why Appendix~\ref{app:proof_stat} predicts an interior useful history capacity.

\section{Diagnostics for Variance-Based Adaptation}
\label{app:statzo_analysis}

The main text compares strategies for query allocation under a common stabilized backbone. This section explains why adaptation based on variance remains costly in the tested regimes after separating instability in the optimization backbone from the sampling rule itself.

\subsection{StatZO in Ultra-High-Dimensional Fine-Tuning}

The original implementation of StatZO \citep{bollapragada2024adaptive} relies on standard zeroth-order optimizers (e.g., ZO-SGD or ZO-AdaMM). While theoretically sound for lower-dimensional problems, this configuration exhibits severe instability in ultra-high-dimensional non-convex landscapes ($d \approx 10^5$).

As illustrated in Figure~\ref{fig:statzo_failure} (referencing the ablation in Section~\ref{sec:llm_finetuning}), the original StatZO configuration fails to descend effectively on SST-2, stagnating at high loss values ($>3.5$). In this regime, estimating the population variance $\sigma^2$ from a mini-batch in such high dimensions produces unstable SNR signals, which in turn destabilize the optimizer's second-moment estimates.

\begin{figure}[t]
\begin{center}
\centerline{\includegraphics[width=0.56\textwidth]{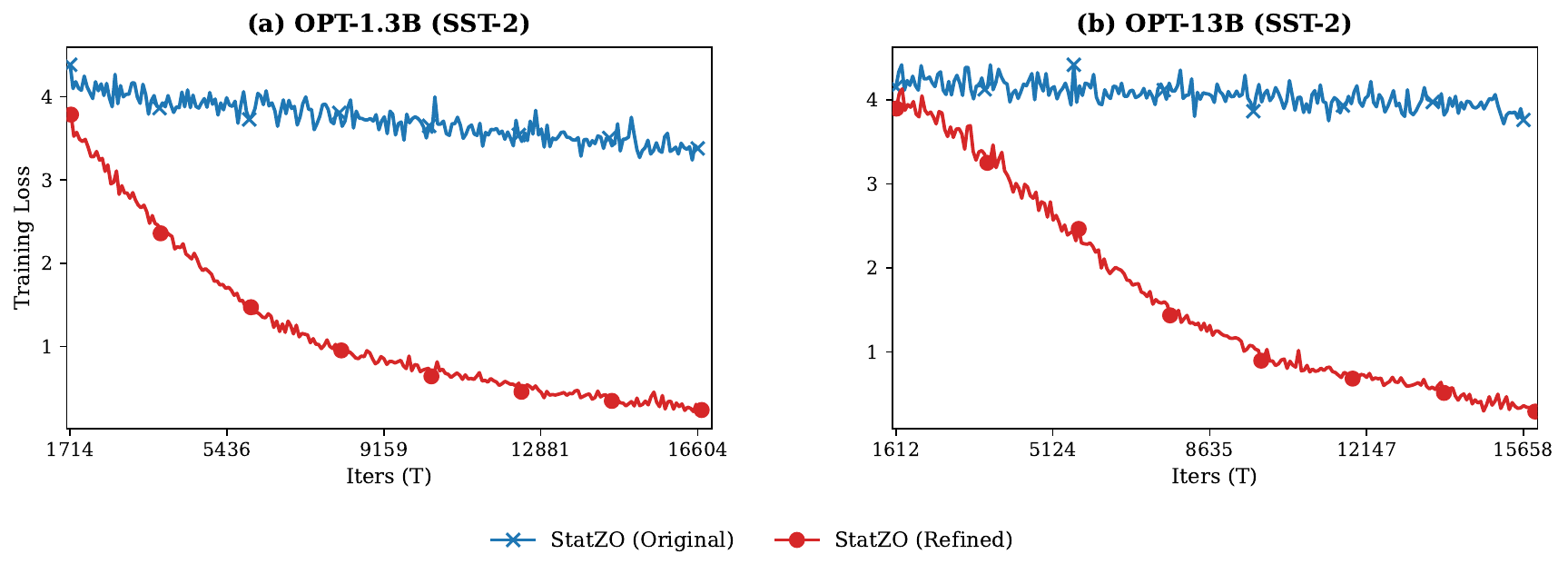}}
\caption{Behavior of the original StatZO configuration on SST-2. Without the stabilized update rule used in our main experiments, the original StatZO configuration stagnates at a high loss, motivating the standardized optimizer backbone used in the main comparison.}
\label{fig:statzo_failure}
\end{center}
\end{figure}

\subsection{Stabilized StatZO in Black-Box Attacks}

To evaluate the adaptive sampling strategy more cleanly, we examine StatZO in two stages: its original formulation and a stabilized version using the same backbone as the main comparisons. This two-step analysis helps separate backbone effects from sampling effects.

First, we evaluate the original StatZO equipped with the standard ZO-AdaMM backbone. As shown in Table~\ref{tab:statzo_vanilla_sweep}, the configuration requiring the fewest queries among the tested caps is $K_{\max}=4$, with 4,382 queries. However, the overall efficiency is hampered by the instability of the standard update rule in high-dimensional settings ($d=784$).

\begin{table}[ht]
\caption{Original StatZO hyperparameter sweep. All configurations use $K_{\min}=2$ and variance-to-norm threshold 0.75. Among the tested settings for this version, $K_{\max}=4$ gives the lowest query cost, but the absolute cost remains high in the reported setting.}
\label{tab:statzo_vanilla_sweep}
\vskip 0.15in
\begin{center}
\begin{footnotesize}
\begin{sc}
\setlength{\tabcolsep}{3.5pt}
\renewcommand{\arraystretch}{0.95}
\resizebox{0.88\textwidth}{!}{
\begin{tabular}{cccccc}
\toprule
$K_{\max}$ & Success & Avg & Std & Speedup vs ZoAQ \\
\midrule
2 & 100\% & 6,212 & 675 & 0.05$\times$ \\
4 & 100\% & 4,382 & 511 & 0.07$\times$ \\
6 & 100\% & 7,349 & 1,714 & 0.04$\times$ \\
8 & 100\% & 7,424 & 588 & 0.04$\times$ \\
10 & 100\% & 10,022 & 1,316 & 0.03$\times$ \\
\bottomrule
\end{tabular}
}
\end{sc}
\end{footnotesize}
\end{center}
\vskip -0.1in
\end{table}

To ensure that the comparison targets the sampling strategy rather than optimizer defects, we also evaluate a stabilized StatZO variant with the R-AdaZO backbone. Table~\ref{tab:statzo_refined_sweep} shows the effect of this standardization: the query cost drops from roughly 4,382 to 1,996. The tested cap requiring the fewest queries is $K_{\max}=2$, while increasing the adaptive budget ($K_{\max} \ge 4$) degrades performance. Even this strongest tested stabilized configuration remains well above the reported ZoAQ cost of 320 queries, suggesting that the query overhead required to validate geometry can outweigh the benefit of variance reduction in this setting.

\begin{table}[ht]
\caption{Stabilized StatZO hyperparameter sweep. Equipping StatZO with the R-AdaZO backbone improves performance relative to Table~\ref{tab:statzo_vanilla_sweep}. Among the tested caps, the lowest query count occurs at $K_{\max}=2$, and the method remains less query-efficient than ZoAQ in this setting.}
\label{tab:statzo_refined_sweep}
\vskip 0.15in
\begin{center}
\begin{footnotesize}
\begin{sc}
\setlength{\tabcolsep}{3.5pt}
\renewcommand{\arraystretch}{0.95}
\resizebox{0.88\textwidth}{!}{
\begin{tabular}{cccccc}
\toprule
$K_{\max}$ & Success & Avg & Std & Speedup vs ZoAQ \\
\midrule
2 & 100\% & 1,996 & 1,231 & 0.16$\times$ \\
4 & 100\% & 4,089 & 953 & 0.08$\times$ \\
6 & 100\% & 5,137 & 2,004 & 0.06$\times$ \\
8 & 100\% & 3,937 & 1,802 & 0.08$\times$ \\
10 & 100\% & 4,572 & 2,189 & 0.07$\times$ \\
\bottomrule
\end{tabular}
}
\end{sc}
\end{footnotesize}
\end{center}
\vskip -0.1in
\end{table}

\paragraph{Cross-dataset pattern: CIFAR-10.}
We replicate the StatZO sweep on CIFAR-10 ($d=3072$) to examine whether the same pattern persists across dimensions. As shown in Table~\ref{tab:statzo_cifar_sweep}, the tested configuration requiring the fewest queries remains $K_{\max}=2$, consistent with MNIST. Even with this tuned setting, StatZO achieves 964 queries, still 1.54$\times$ slower than ZoAQ (625 queries with $N_{\ntxt{hist}}=8$).

This analysis across datasets suggests that the same overhead from validation persists across problem scales: StatZO's variance estimation overhead prevents it from matching ZoAQ's efficiency in the reported settings.

\begin{table}[ht]
\caption{StatZO hyperparameter sweep on CIFAR-10 ($d=3072$). Consistent with MNIST, the tested configuration requiring the fewest queries is $K_{\max}=2$, yet StatZO still lags behind ZoAQ (625 queries) in the reported setting.}
\label{tab:statzo_cifar_sweep}
\vskip 0.15in
\begin{center}
\begin{footnotesize}
\begin{sc}
\setlength{\tabcolsep}{3.5pt}
\renewcommand{\arraystretch}{0.95}
\resizebox{0.84\textwidth}{!}{
\begin{tabular}{ccccc}
\toprule
$K_{\max}$ & Success & Avg Queries & Std & Speedup vs ZoAQ \\
\midrule
2 & 100\% & 964 & 28 & 0.65$\times$ \\
4 & 100\% & 1,134 & 73 & 0.55$\times$ \\
6 & 100\% & 1,148 & 37 & 0.54$\times$ \\
8 & 100\% & 1,229 & 68 & 0.51$\times$ \\
10 & 100\% & 1,361 & 41 & 0.46$\times$ \\
\bottomrule
\end{tabular}
}
\end{sc}
\end{footnotesize}
\end{center}
\vskip -0.1in
\end{table}

\subsection{Validation Overhead}

\paragraph{Remark: Why history reuse is not folded into StatZO.}
A natural question is whether history reuse could also be added to StatZO. That modification is not theoretically neutral. As formalized by \citet{bollapragada2024adaptive}, the adaptive sampling analysis behind the Norm Condition is built around fresh i.i.d. samples for estimating local accuracy and variance quantities.

Incorporating historical queries $\{ (\vu, y) \in \mathcal{H}_{t-1} \}$ into the current estimate $\vg_t$ introduces cross-step correlations and transport bias because $\vtheta_t \neq \vtheta_{t-\tau}$. It would therefore require a different variance-estimation analysis rather than being a drop-in improvement to StatZO's sample-variance estimator. ZoAQ's momentum-consistency condition is instead built around this correlation structure.

The empirical gap is therefore not only an implementation issue; it also reflects a structural bottleneck from validation overhead.

To estimate the variance $\mathbb{V}[\vg]$, StatZO requires a reliable estimate of the function value $F(\vtheta_t)$ at the current center point. This incurs a fixed cost of at least one extra query per iteration purely for statistical estimation.
\begin{itemize}
    \item Overhead ratio: In a low-budget one-sided regime with two perturbed directional calls, an additional center-value query changes the accounting from $2$ directional calls to $2 \ntxt{ directions} + 1 \ntxt{ baseline} = 3$ oracle calls, a 50\% overhead relative to the directional calls alone.
    \item ZoAQ: The momentum-consistency test uses $\vm_{t-1}$, a historical variable stored in memory, and requires no additional queries to validate the search direction.
\end{itemize}

This analysis makes the practical implication explicit: when only a few queries are used, the cost of validating geometry can itself become a meaningful part of the total budget.

\subsection{Reference Stability}
\label{app:stability_gap}

Beyond query overhead, there is a second statistical reason for StatZO's weaker performance in this regime: the instability of the reference signal itself. Both methods rely on a reference vector to judge the quality of the current estimate $\vg_t$.
\begin{itemize}
    \item StatZO uses the sample variance $V_k$ (or effectively, the batch mean $\bar\vg_k$ relative to variance) estimated from a small batch of size $k$.
    \item ZoAQ uses the historical momentum $\vm_{t-1}$ as the reference anchor.
\end{itemize}

The comparisons above are empirical diagnostics of the tested configurations. The pathwise treatment of the EMA under overlapping accepted histories is given in Appendix~\ref{app:proof_geo}.

\section{Experimental Protocols}
\label{app:experimental}

This appendix collects the implementation details, protocol standardization choices, and additional result tables referenced in the main text. Its role is to make the experimental comparison reproducible without expanding the main paper's evidence budget.

\begin{table}[ht]
\caption{Experimental evidence map. Each regime tests a distinct part of the mechanism chain, with claims stated at the level supported by the source ledger.}
\label{tab:experiment_claim_map}
\vskip 0.05in
\begin{center}
\begin{footnotesize}
\begin{tabular}{P{0.23\textwidth}P{0.38\textwidth}P{0.29\textwidth}}
\toprule
Regime & Evidence used & Claim boundary \\
\midrule
Synthetic mechanisms & Convergence traces, sweep over fixed budgets, ablation of the norm trigger, and cosine diagnostics. & Query reductions are measured relative to the fixed protocol using 1.2M queries; cosine diagnostics measure observable consistency with the anchor. \\
Black-box attacks & MNIST and CIFAR-10 attacks under a common stabilized R-AdaZO backbone. & The result concerns black-box optimization under query constraints, with success and query counts reported under the stated protocol. \\
LLM fine-tuning & OPT-1.3B and OPT-13B LoRA fine-tuning measured by cumulative forward evaluations and task metrics. & Savings are FE reductions with loss and accuracy trade-offs that vary by task. \\
Stress checks & Controlled objective switch, history window, reset policy, threshold, and StatZO diagnostics. & These checks show how the mechanism behaves across the tested settings and parameter choices. \\
\bottomrule
\end{tabular}
\end{footnotesize}
\end{center}
\vskip -0.1in
\end{table}

\subsection{Algorithm and Cost Accounting}
\label{app:algorithmic_summary}

Algorithm~\ref{alg:zoaq_main} in the main paper is the authoritative pseudocode. This appendix section records the implementation-level accounting assumptions used by that algorithm in our experiments, together with the protocol choices needed for reproducibility.

\paragraph{Query accounting.}
The momentum-consistency test itself introduces no additional oracle queries beyond those already used to construct $\vg_t$. In attacks, a one-sided forward-difference step with $K_t$ sampled directions requires one baseline evaluation plus $K_t$ perturbed evaluations. In synthetic benchmarks and LLM fine-tuning, the two-sided protocol requires two evaluations per sampled direction. The practical saving provided by ZoAQ is therefore a reduction in oracle calls rather than auxiliary computation: the extra cost is primarily memory, since the method retains recent direction-response pairs in order to induce query-reuse coupling. In seed-based implementations, this footprint can be reduced because directions need not be stored explicitly.

These accounting rules vary by protocol. Within each regime, all methods are compared under the same estimator protocol and the same cost definition; across regimes, the reported synthetic query savings, attack query counts, and LLM savings in forward evaluations measure efficiency for their respective tasks rather than one absolute cost scale.

\subsection{Compute, Assets, and Scope}
\label{app:reporting_notes}

\paragraph{Compute resources.}
Synthetic and attack experiments were run on standard CPU-accessible hardware or a single GPU. The main LLM fine-tuning runs were executed on a server with four NVIDIA RTX 4090 GPUs (24GB each), which was sufficient for the OPT-1.3B and OPT-13B LoRA runs using forward passes only. We report oracle calls or forward evaluations as the primary cost metric because the paper studies query allocation rather than systems throughput; elapsed time also depends on model loading, the data pipeline, and scheduling across runs.

\paragraph{Existing assets and licenses.}
The experimental study uses standard public datasets and model checkpoints, including MNIST, CIFAR-10, GLUE/SST-2, COPA, and OPT checkpoints, under their respective public terms of use. We rely on these assets only for benchmarking and do not redistribute modified versions in the paper artifact.

\begin{table}[ht]
\caption{Existing assets used in the experiments. When the original source or benchmark metadata does not state a single explicit license, we report that status rather than assigning a license name.}
\label{tab:asset_licenses}
\vskip 0.15in
\begin{center}
\begin{small}
\setlength{\tabcolsep}{3pt}
\renewcommand{\arraystretch}{1.08}
\begin{tabular}{P{0.13\textwidth}P{0.14\textwidth}P{0.24\textwidth}P{0.18\textwidth}P{0.21\textwidth}}
\toprule
Asset & Version & Source & License & Terms-of-use note \\
\midrule
MNIST & Original train/test split & MNIST database \citep{lecun1998mnist} & Source terms; no explicit single license & Public benchmark use; no data redistribution \\
CIFAR-10 & CIFAR-10 Python version & CIFAR page and report \citep{krizhevsky2009learning} & Source terms; no explicit single license & Public benchmark use; no data redistribution \\
GLUE/SST-2 & GLUE SST-2 task & GLUE benchmark \citep{wang2018glue} & GLUE/SST-2 source terms & Downstream evaluation only \\
COPA & COPA benchmark task & COPA release \citep{roemmele2011choice} & COPA source terms & Downstream evaluation only \\
OPT checkpoints & OPT-1.3B and OPT-13B & Meta OPT release \citep{zhang2022opt}; HF cards & OPT model license & LoRA fine-tuning; no checkpoint redistribution \\
\bottomrule
\end{tabular}
\end{small}
\end{center}
\vskip -0.1in
\end{table}

\paragraph{Broader impact.}
The attack experiments are included to study query efficiency in a controlled black-box setting, not to advocate misuse. The same techniques are also relevant to robustness evaluation and to black-box optimization problems where gradients are unavailable for benign reasons. We therefore view the main broader-impact consideration as responsible disclosure and evaluation practice rather than a new capability claim.

\subsection{Baseline Standardization}

\paragraph{Unified Optimizer Backbone.}
As noted in Section~\ref{sec:experiments}, standard zeroth-order optimizers often suffer from diverging second-moment estimates in high dimensions. To reduce this confounding factor in the main attack and stabilized adaptive comparisons, we use the R-AdaZO rectified update rule as the common backbone for R-AdaZO, stabilized StatZO, ZoAR-style reuse diagnostics, and ZoAQ. Legacy ZO-AdaMM and MeZO appear only in the protocol roles listed in Table~\ref{tab:baseline_protocol_map}. This standardization makes the comparison focus on sampling strategy rather than claiming uniform dominance over every original adaptive ZOO implementation.

\paragraph{Baselines Configuration.}
We compare ZoAQ against the following representative methods:
\begin{itemize}
    \item R-AdaZO (Fixed-Budget SOTA): Uses a static query budget $K$ per iteration. We set $K=10$ for synthetic benchmarks and $K=2$ or $4$ for high-dimensional tasks, following the optimal settings reported in \citep{shu2025refining}.
    \item Refined StatZO (Norm-Based SOTA): A stabilized version of StatZO \citep{bollapragada2024adaptive}. We integrate it with the R-AdaZO backbone to prevent divergence. It adapts $K_t \in [K_{\min}, K_{\max}]$ based on the norm condition $\|\vg_t-\vm_{t-1}\|/\|\vm_{t-1}\| \le c_{\ntxt{norm}}$, with $c_{\ntxt{norm}}=0.5$ as the default tolerance.
    \item ZoAQ (Ours): Utilizes the momentum-consistency condition $\cos(\vg_t, \vm_{t-1}) \ge \tau_{t-1}$ to trigger budget expansion. The effective threshold $\tau_t$ is updated via EMA.
\end{itemize}

Table~\ref{tab:baseline_protocol_map} lists every baseline family that appears in the figures or tables and records its protocol role. When a baseline has a tunable cap, we report the sweep or selection rule rather than comparing against an unreported hand-picked setting.

\begin{table}[ht]
\caption{Baseline and protocol map. The table separates method role, update backbone, query rule, and selection rule so that the experiments compare strategies for query allocation under the intended protocol rather than mixing incompatible baselines.}
\label{tab:baseline_protocol_map}
\vskip 0.15in
\begin{center}
\begin{scriptsize}
\setlength{\tabcolsep}{3pt}
\renewcommand{\arraystretch}{1.08}
\resizebox{\textwidth}{!}{
\begin{tabular}{P{0.12\textwidth}P{0.20\textwidth}P{0.20\textwidth}P{0.23\textwidth}P{0.19\textwidth}}
\toprule
Method & Role in experiments & Backbone/update rule & Query rule & Selection/reporting rule \\
\midrule
ZO-AdaMM & Legacy synthetic reference & Adam-style ZOO & Fixed $K$ in source run & Reference curve only \\
R-AdaZO & Fixed-budget baseline & Rectified adaptive ZOO & Fixed $K$: synthetic 10, attack 2, LLM 4 & Protocol-fixed \\
ZoAR & Reuse diagnostic & History reuse without ZoAQ gate & Source-recorded reuse/fixed budget & Separates reuse from control \\
StatZO & Statistics-based adaptive baseline & Stabilized R-AdaZO; original diagnostic only & Synthetic $[1,10]$; attack cap sweep; LLM $[2,4]$ & Best cap in sweep \\
MeZO & LLM fixed-budget baseline & Forward-only MeZO/LoRA & Fixed $K=4$ & LLM FE reference \\
ZoAQ & Ours & Stabilized adaptive ZO + query-reuse gate & Synthetic $[1,10]$; attack $[1,2]$; LLM $[1,4]$ or $[1,2]$ & Matched seeds; no extra validation \\
\bottomrule
\end{tabular}
}
\end{scriptsize}
\end{center}
\vskip -0.1in
\end{table}

\subsection{Hyperparameter Settings}
\label{app:hyperparams}

Synthetic experiments were conducted over $T=60,000$ iterations in a $d=100$ dimensional space, with results averaged over 5 independent runs. Random seeds are matched across methods within each synthetic run; figures plot mean trajectories without uncertainty bands for readability, while tables report means and standard deviations where the corresponding columns are shown. Attack and LLM configurations are specified in their dedicated subsections below.

\paragraph{Gradient Estimator Configuration.} 
Consistent with the protocol defined in Sec.~\ref{sec:preliminaries}:
\begin{itemize}
    \item Synthetic Benchmarks ($d=100$): We use the two-sided symmetric-difference estimator to minimize gradient approximation error. Thus, R-AdaZO with $K=10$ directions corresponds to 20 queries/step (Total $\approx$ 1.2M queries over 60k steps).
    \item High-Dimensional Attacks ($d \ge 784$): Following the R-AdaZO protocol \citep{shu2025refining}, we use the one-sided forward-difference estimator. For example, R-AdaZO with $K=2$ directions corresponds to 3 queries/step ($1 \ntxt{ baseline} + 2 \ntxt{ perturbed evaluations}$).
    \item LLM Fine-Tuning: We use the two-sided protocol with forward passes only and report forward evaluations (FE). A step with $K_t$ directions costs $2K_t$ FE, as detailed in Appendix~\ref{app:llm_configurations}.
\end{itemize}

\paragraph{Common Hyperparameters.}
A set of common hyperparameters was shared across most algorithms to ensure a fair comparison. These are summarized in Table~\ref{tab:common_hyperparams}.

\begin{table}[ht]
\caption{Common hyperparameters used across all algorithms.}
\label{tab:common_hyperparams}
\vskip 0.15in
\begin{center}
\begin{small}
\begin{sc}
\begin{tabular}{llc}
\toprule
Parameter & Description & Value \\
\midrule
$T$ & Total iterations & 60,000 \\
$d$ & Dimension & 100 \\
num\_runs & Number of independent runs & 5 \\
$\eta$ (lr) & Learning rate & 0.001 \\
$(\beta_1, \beta_2)$ & Momentum coefficients & (0.9, 0.99) \\
$\mu$ & Smoothing parameter & 0.005 \\
$\epsilon$ (eps) & Term for numerical stability & 1e-8 \\
\bottomrule
\end{tabular}
\end{sc}
\end{small}
\end{center}
\vskip -0.1in
\end{table}

\paragraph{Algorithm-Specific Configurations.}
The specific settings for each algorithm, including both non-adaptive and adaptive methods, are detailed in Table~\ref{tab:algo_specific_hyperparams}. The query budget per iteration is denoted by $K_t$. For non-adaptive methods, this is a fixed value. For adaptive methods, it varies within a specified range.

\begin{table}[ht]
\caption{Algorithm-specific hyperparameter settings.}
\label{tab:algo_specific_hyperparams}
\vskip 0.15in
\begin{center}
\begin{small}
\begin{sc}
\begin{tabular}{llc}
\toprule
Algorithm & Parameter & Value \\
\midrule
\multicolumn{3}{l}{Non-Adaptive Baselines} \\
\midrule
R-AdaZO  & Fixed queries per iteration, $K_t$ & 10 \\
\midrule
\multicolumn{3}{l}{Statistics-Based Adaptive SOTA} \\
\midrule
StatZO   & Query range, $K_t \in [K_{\min}, K_{\max}]$ & [1, 10] \\
         & Variance-to-norm threshold & 0.5 \\
\midrule
\multicolumn{3}{l}{Our Proposal} \\
\midrule
ZoAQ   & Query range, $K_t \in [K_{\min}, K_{\max}]$ & [1, 10] \\
         & Initial EMA threshold, $\tau_0$ & 1.0 \\
         & EMA factor, $\beta_\tau$ & 0.9 \\
         & History Window Size, $N_{\ntxt{hist}}$ & 8 \\
         & Budget reset policy & 'min' (Greedy Reset) \\
\bottomrule
\end{tabular}
\end{sc}
\end{small}
\end{center}
\vskip -0.1in
\end{table}

\subsection{Attack Protocol}
\label{app:experimental_setup}

\paragraph{Experimental Protocol.}
For the black-box adversarial attack experiments (Section~\ref{sec:exp_attack}), we follow the experimental protocol established in \citep{shu2025refining}. We evaluate on two standard vision benchmarks:
\begin{itemize}
    \item MNIST ($d=784$) \citep{lecun1998mnist}: A CNN with two convolutional layers (32 and 64 filters, kernel size 3$\times$3) followed by max-pooling and two fully connected layers (128 and 10 units), achieving 98.5\% test accuracy. Perturbation constraint $\epsilon=0.2$.
    \item CIFAR-10 ($d=3072$) \citep{krizhevsky2009learning}: A deeper CNN achieving 85\% test accuracy. Perturbation constraint $\epsilon=0.03$.
\end{itemize}
The attack objective is to find a minimal perturbation $\vdelta$ such that the model misclassifies $\vx+\vdelta$ while satisfying the $\ell_\infty$ constraint.

\paragraph{MNIST Hyperparameter Settings.}
All methods use identical common hyperparameters: learning rate $\eta=0.01$, momentum coefficients $(\beta_1, \beta_2)=(0.9, 0.99)$, perturbation scale $\mu=0.5$, and Adam's numerical stability constant $\varepsilon_{\ntxt{Adam}}=10^{-8}$ \citep{kingma2014adam}. We evaluate over 5 independent runs with different random seeds (101, 202, 303, 404, 505), with a maximum of 20,000 iterations per attack.

For specific algorithms:
\begin{itemize}
    \item Non-adaptive methods (R-AdaZO, ZO-AdaMM) use a fixed query budget of $K=2$ per iteration.
    \item Methods with history reuse (ZoAQ and ZoAR) use a history buffer size $N_{\ntxt{hist}}=8$ (the tested value requiring the fewest queries in Table~\ref{tab:ablation_nhist}).
    \item ZoAQ is configured with $K_{\min}=1$, $K_{\max}=2$, initial threshold $\tau_0=1.0$, EMA factor $\beta_\tau=0.9$, and the `min' (greedy) reset policy.
\end{itemize}

\paragraph{CIFAR-10 Hyperparameter Settings.}
For the CIFAR-10 dataset ($d=3072$), we use the following configurations:
\begin{itemize}
    \item Learning rate: $\eta=0.0001$ (lower than MNIST due to higher sensitivity)
    \item Perturbation scale: $\mu=0.005$ (smaller due to higher sensitivity)
    \item History window: $N_{\ntxt{hist}}=8$ for ZoAQ (unified with MNIST for simplicity)
    \item StatZO: tested setting $K_{\max}=2$ (the cap requiring the fewest queries in Table~\ref{tab:statzo_cifar_sweep})
    \item All other parameters remain identical to MNIST
\end{itemize}
The CIFAR-10 table uses the same five-run reporting protocol as MNIST, with matched random seeds across methods within each run.

\paragraph{Fixed-Budget History-Reuse Control.}
History reuse improves query efficiency, but it does not recover the full benefit of adaptive allocation. Table~\ref{tab:attack_history_reuse_control} compares fixed-budget ZoAR with R-AdaZO and ZoAQ under the common stabilized backbone. All methods achieve 100\% success. Relative to ZoAR at $K=2$, ZoAQ uses 56.9\% fewer queries on MNIST and 17.3\% fewer on CIFAR-10.

\begin{table}[ht]
\caption{Fixed-budget history-reuse control for black-box attacks. Values are average queries $\pm$ standard deviation; all methods achieve 100\% success.}
\label{tab:attack_history_reuse_control}
\vskip 0.15in
\begin{center}
\begin{small}
\begin{sc}
\setlength{\tabcolsep}{4.0pt}
\begin{tabular}{lcccc}
\toprule
Dataset & R-AdaZO ($K=2$) & ZoAR ($K=1$) & ZoAR ($K=2$) & ZoAQ \\
\midrule
MNIST & $1527 \pm 701$ & $1089.2 \pm 92.3$ & $742.8 \pm 99.9$ & $320 \pm 45$ \\
CIFAR-10 & $1006 \pm 100$ & $802.0 \pm 38.0$ & $755.4 \pm 39.1$ & $625 \pm 15$ \\
\bottomrule
\end{tabular}
\end{sc}
\end{small}
\end{center}
\vskip -0.1in
\end{table}

\subsection{LLM Fine-Tuning Protocol}
\label{app:llm_configurations}

For the LLM fine-tuning experiments described in Section~\ref{sec:llm_finetuning}, we use the hyperparameter settings listed in Table~\ref{tab:llm_hyperparams}. The base models are OPT checkpoints \citep{zhang2022opt}, and the downstream tasks include SST-2 from GLUE \citep{wang2018glue} and COPA.

The main ZoAQ configuration uses the moderate cap $K_t \in [1,4]$, which shares the same maximum direction budget as the fixed $K=4$ baselines while allowing adaptive reductions. We also report $K_t \in [1,2]$ as an aggressive endpoint with a smaller budget to show the trade-off between budget and quality.

All LLM metrics are reported as mean$\pm$std over three seeds. Total FE is the cumulative number of forward evaluations per run of 5,000 steps, not the sum across seeds. Best Eval Loss is the lowest held-out evaluation loss over checkpoints, Last Eval Loss is the final checkpoint loss, Accuracy is the primary task accuracy summary for the selected run/checkpoint protocol, and Dev Acc is the development-set accuracy reported by the training pipeline. We report both loss and accuracy because lower FE can affect them differently, especially on COPA.

\begin{table}[ht]
\caption{Hyperparameter configurations for LLM fine-tuning tasks. All LLM runs use the two-sided zeroth-order protocol with forward passes only and report forward evaluations (FE) as the cost metric.}
\label{tab:llm_hyperparams}
\vskip 0.15in
\begin{center}
\begin{small}
\begin{sc}
\begin{tabular}{llc}
\toprule
Category & Parameter & Value \\
\midrule
Common & Models & OPT-1.3B, OPT-13B \\
       & Tasks & SST-2, COPA \\
       & Seeds & 3 \\
       & Training Steps & 5,000 \\
       & Evaluation Interval & 250 steps \\
       & Batch Size & 16 \\
       & Learning Rate $\eta$ & $5 \times 10^{-5}$ \\
       & Perturbation Scale & $10^{-2}$ \\
       & Precision & fp16 \\
       & Adaptation & LoRA \\
\midrule
MeZO / R-AdaZO & Fixed Direction Budget & $K=4$ \\
\midrule
StatZO & Direction-Budget Range & $K_t \in [2,4]$ \\
\midrule
ZoAQ & Main Direction-Budget Range & $K_t \in [1,4]$ \\
       & Low-Budget Variant & $K_t \in [1,2]$ \\
       & Reset Policy & Min (Greedy) \\
       & Initial EMA Threshold $\tau_0$ & 1.0 \\
       & EMA Factor $\beta_\tau$ & 0.9 \\
       & History Window Size, $N_{\ntxt{hist}}$ & 15 \\
\bottomrule
\end{tabular}
\end{sc}
\end{small}
\end{center}
\vskip -0.1in
\end{table}

\begin{table}[ht]
\caption{Complete LLM fine-tuning results over three seeds. Total FE is the cumulative number of forward evaluations under the two-sided protocol with forward passes only; FE savings are relative to fixed $K=4$, which costs 40,000 FE over 5,000 steps.}
\label{tab:llm_results}
\vskip 0.15in
\begin{center}
\begin{scriptsize}
\begin{sc}
\setlength{\tabcolsep}{2.3pt}
\renewcommand{\arraystretch}{0.95}
\resizebox{\textwidth}{!}{
\begin{tabular}{lllcccccccc}
\toprule
Model & Task & Method & Total FE & Avg $K$ & FE Savings & Best Eval Loss & Last Eval Loss & Accuracy & Dev Acc & Train Loss \\
\midrule
\multirow{10}{*}{OPT-1.3B}
& \multirow{5}{*}{SST-2}
& MeZO $K=4$ & 40000 $\pm$ 0 & 4.00 $\pm$ 0.00 & 0.00 $\pm$ 0.00\% & 0.240 $\pm$ 0.003 & 0.240 $\pm$ 0.003 & 0.910 $\pm$ 0.005 & 0.910 $\pm$ 0.005 & 0.476 $\pm$ 0.012 \\
& & R-AdaZO $K=4$ & 40000 $\pm$ 0 & 4.00 $\pm$ 0.00 & 0.00 $\pm$ 0.00\% & 0.198 $\pm$ 0.002 & 0.199 $\pm$ 0.003 & 0.920 $\pm$ 0.004 & 0.910 $\pm$ 0.006 & 0.288 $\pm$ 0.008 \\
& & StatZO $K_t \in [2,4]$ & 30746 $\pm$ 412 & 3.07 $\pm$ 0.04 & 23.14 $\pm$ 1.03\% & 0.206 $\pm$ 0.005 & 0.240 $\pm$ 0.006 & 0.910 $\pm$ 0.007 & 0.880 $\pm$ 0.010 & 0.302 $\pm$ 0.015 \\
& & ZoAQ $K_t \in [1,2]$ & 12662 $\pm$ 235 & 1.27 $\pm$ 0.02 & 68.35 $\pm$ 0.59\% & 0.209 $\pm$ 0.004 & 0.237 $\pm$ 0.005 & 0.870 $\pm$ 0.011 & 0.910 $\pm$ 0.008 & 0.335 $\pm$ 0.014 \\
& & ZoAQ $K_t \in [1,4]$ & 21714 $\pm$ 318 & 2.17 $\pm$ 0.03 & 45.72 $\pm$ 0.80\% & 0.195 $\pm$ 0.003 & 0.198 $\pm$ 0.004 & 0.920 $\pm$ 0.005 & 0.910 $\pm$ 0.005 & 0.285 $\pm$ 0.010 \\
\cmidrule(lr){2-11}
& \multirow{5}{*}{COPA}
& MeZO $K=4$ & 40000 $\pm$ 0 & 4.00 $\pm$ 0.00 & 0.00 $\pm$ 0.00\% & 0.532 $\pm$ 0.006 & 0.532 $\pm$ 0.006 & 0.770 $\pm$ 0.012 & 0.750 $\pm$ 0.015 & 0.537 $\pm$ 0.018 \\
& & R-AdaZO $K=4$ & 40000 $\pm$ 0 & 4.00 $\pm$ 0.00 & 0.00 $\pm$ 0.00\% & 0.431 $\pm$ 0.004 & 0.476 $\pm$ 0.005 & 0.780 $\pm$ 0.010 & 0.780 $\pm$ 0.012 & 0.423 $\pm$ 0.010 \\
& & StatZO $K_t \in [2,4]$ & 30498 $\pm$ 505 & 3.05 $\pm$ 0.05 & 23.76 $\pm$ 1.26\% & 0.461 $\pm$ 0.007 & 0.461 $\pm$ 0.007 & 0.790 $\pm$ 0.014 & 0.780 $\pm$ 0.018 & 0.459 $\pm$ 0.015 \\
& & ZoAQ $K_t \in [1,2]$ & 12740 $\pm$ 288 & 1.27 $\pm$ 0.03 & 68.15 $\pm$ 0.72\% & 0.490 $\pm$ 0.008 & 0.490 $\pm$ 0.008 & 0.790 $\pm$ 0.016 & 0.770 $\pm$ 0.015 & 0.499 $\pm$ 0.017 \\
& & ZoAQ $K_t \in [1,4]$ & 21946 $\pm$ 345 & 2.19 $\pm$ 0.03 & 45.14 $\pm$ 0.86\% & 0.485 $\pm$ 0.005 & 0.492 $\pm$ 0.006 & 0.790 $\pm$ 0.012 & 0.770 $\pm$ 0.010 & 0.460 $\pm$ 0.014 \\
\midrule
\multirow{10}{*}{OPT-13B}
& \multirow{5}{*}{SST-2}
& MeZO $K=4$ & 40000 $\pm$ 0 & 4.00 $\pm$ 0.00 & 0.00 $\pm$ 0.00\% & 0.230 $\pm$ 0.003 & 0.231 $\pm$ 0.003 & 0.915 $\pm$ 0.004 & 0.912 $\pm$ 0.004 & 0.450 $\pm$ 0.010 \\
& & R-AdaZO $K=4$ & 40000 $\pm$ 0 & 4.00 $\pm$ 0.00 & 0.00 $\pm$ 0.00\% & 0.185 $\pm$ 0.002 & 0.188 $\pm$ 0.002 & 0.938 $\pm$ 0.003 & 0.925 $\pm$ 0.005 & 0.260 $\pm$ 0.007 \\
& & StatZO $K_t \in [2,4]$ & 31350 $\pm$ 390 & 3.14 $\pm$ 0.04 & 21.63 $\pm$ 0.98\% & 0.202 $\pm$ 0.004 & 0.238 $\pm$ 0.005 & 0.908 $\pm$ 0.006 & 0.885 $\pm$ 0.008 & 0.295 $\pm$ 0.012 \\
& & ZoAQ $K_t \in [1,2]$ & 12920 $\pm$ 260 & 1.29 $\pm$ 0.03 & 67.70 $\pm$ 0.65\% & 0.208 $\pm$ 0.004 & 0.230 $\pm$ 0.005 & 0.875 $\pm$ 0.009 & 0.910 $\pm$ 0.007 & 0.325 $\pm$ 0.011 \\
& & ZoAQ $K_t \in [1,4]$ & 22245 $\pm$ 290 & 2.22 $\pm$ 0.03 & 44.39 $\pm$ 0.73\% & 0.192 $\pm$ 0.003 & 0.198 $\pm$ 0.004 & 0.920 $\pm$ 0.005 & 0.918 $\pm$ 0.004 & 0.278 $\pm$ 0.009 \\
\cmidrule(lr){2-11}
& \multirow{5}{*}{COPA}
& MeZO $K=4$ & 40000 $\pm$ 0 & 4.00 $\pm$ 0.00 & 0.00 $\pm$ 0.00\% & 0.495 $\pm$ 0.008 & 0.498 $\pm$ 0.008 & 0.775 $\pm$ 0.015 & 0.760 $\pm$ 0.018 & 0.505 $\pm$ 0.016 \\
& & R-AdaZO $K=4$ & 40000 $\pm$ 0 & 4.00 $\pm$ 0.00 & 0.00 $\pm$ 0.00\% & 0.405 $\pm$ 0.005 & 0.425 $\pm$ 0.006 & 0.798 $\pm$ 0.012 & 0.785 $\pm$ 0.014 & 0.395 $\pm$ 0.011 \\
& & StatZO $K_t \in [2,4]$ & 31480 $\pm$ 480 & 3.15 $\pm$ 0.05 & 21.30 $\pm$ 1.20\% & 0.452 $\pm$ 0.009 & 0.458 $\pm$ 0.009 & 0.785 $\pm$ 0.016 & 0.790 $\pm$ 0.015 & 0.445 $\pm$ 0.014 \\
& & ZoAQ $K_t \in [1,2]$ & 13250 $\pm$ 310 & 1.33 $\pm$ 0.03 & 66.88 $\pm$ 0.78\% & 0.482 $\pm$ 0.010 & 0.488 $\pm$ 0.010 & 0.770 $\pm$ 0.014 & 0.765 $\pm$ 0.016 & 0.490 $\pm$ 0.018 \\
& & ZoAQ $K_t \in [1,4]$ & 22780 $\pm$ 360 & 2.28 $\pm$ 0.04 & 43.05 $\pm$ 0.90\% & 0.428 $\pm$ 0.006 & 0.435 $\pm$ 0.007 & 0.792 $\pm$ 0.011 & 0.788 $\pm$ 0.012 & 0.420 $\pm$ 0.012 \\
\bottomrule
\end{tabular}
}
\end{sc}
\end{scriptsize}
\end{center}
\vskip -0.1in
\end{table}

\paragraph{Accounting for Forward Evaluations.}
We detail the exact FE accounting used to derive the total costs reported in the LLM experiments. All LLM fine-tuning tasks run for 5,000 training steps and use the two-sided protocol with forward passes only. A step with local direction budget $K_t$ costs
\begin{equation}
\FE_t = 2K_t,
\end{equation}
where each direction uses two perturbed forward evaluations. Thus fixed $K=4$ costs 8 FE per step and 40,000 FE over 5,000 steps.
\begin{itemize}
    \item Fixed baselines: MeZO and R-AdaZO use $K=4$, so each complete run uses 40,000 FE.
    \item StatZO: The direction budget varies in $K_t \in [2,4]$; total FE is the cumulative FE per 5,000-step run, reported as mean$\pm$std over three seeds.
    \item ZoAQ: The main setting uses $K_t \in [1,4]$, while the endpoint with a smaller budget uses $K_t \in [1,2]$. Total FE is the cumulative FE per run of 5,000 steps, reported as mean$\pm$std over three seeds, and the reported average $K$ is computed as $\FE_{\ntxt{tot}}/(2\cdot5000)$.
\end{itemize}

\paragraph{Sensitivity to the LLM History Window.}
The main OPT configuration uses $N_{\ntxt{hist}}=15$. Table~\ref{tab:llm_history_sensitivity} reports the changes obtained with windows of 8 and 24 on OPT-1.3B. Increasing the window to 24 further reduces FE on both tasks, with changes in loss and accuracy that vary by task. This comparison measures how the reported outcomes vary with the history window.

\begin{table}[ht]
\caption{Sensitivity to the history window on OPT-1.3B relative to the main setting $N_{\ntxt{hist}}=15$. Negative $\Delta$FE denotes fewer forward evaluations.}
\label{tab:llm_history_sensitivity}
\vskip 0.15in
\begin{center}
\begin{small}
\begin{sc}
\setlength{\tabcolsep}{3.0pt}
\resizebox{\textwidth}{!}{
\begin{tabular}{llccccc}
\toprule
Task & $N_{\ntxt{hist}}$ & $\Delta$FE vs. 15 & $\Delta$ Best Loss vs. 15 $\downarrow$ & $\Delta$ Accuracy vs. 15 & Immediate-Accept Rate \\
\midrule
SST-2 & 8  & $+2326$ ($+10.78\%$) & $-0.0056$ & $+0.02$ & $69.8\%$ \\
SST-2 & 24 & $-648$ ($-3.00\%$) & $-0.0060$ & $+0.04$ & $74.7\%$ \\
COPA  & 8  & $+1366$ ($+6.14\%$) & $+0.0078$ & $+0.03$ & $69.7\%$ \\
COPA  & 24 & $-818$ ($-3.67\%$) & $+0.0522$ & $+0.01$ & $73.1\%$ \\
\bottomrule
\end{tabular}
}
\end{sc}
\end{small}
\end{center}
\vskip -0.1in
\end{table}

\subsection{Budget Sweeps}
\label{app:additional_mechanism_budget_sweeps}

This subsection collects supplementary sweeps that isolate the role of query budget choice and controller design. They complement the evaluation in the main text without changing the main experimental protocol.

\begin{table}[ht]
\caption{Sweep over fixed budgets on synthetic benchmarks. The comparison shows how ZoAQ combines the efficiency of smaller budgets with the performance of larger budgets without manual tuning of the query count.}
\label{tab:app_synthetic_sweep}
\vskip 0.15in
\begin{center}
\begin{small}
\begin{sc}
\begin{tabular}{llcc}
\toprule
Function & Method & Final Gap & Queries \\
\midrule
\multirow{4}{*}{Quadratic}
& ZoAQ (Adaptive) & 6.35E-04 & 590,024 \\
& R-AdaZO ($K=2$) & 7.52E-02 & 240,000 \\
& R-AdaZO ($K=4$) & 7.35E-03 & 480,000 \\
& R-AdaZO ($K=8$) & 6.15E-03 & 960,000 \\
\midrule
\multirow{4}{*}{Levy}
& ZoAQ (Adaptive) & 1.50E+02 & 588,591 \\
& R-AdaZO ($K=2$) & 3.45E+02 & 240,000 \\
& R-AdaZO ($K=4$) & 2.15E+02 & 480,000 \\
& R-AdaZO ($K=8$) & 1.62E+02 & 960,000 \\
\bottomrule
\end{tabular}
\end{sc}
\end{small}
\end{center}
\vskip -0.1in
\end{table}

\subsection{Controller Variants}
\label{app:update_mechanism_ablation}

This diagnostic checks whether the query savings come from the full accept/expand/reset controller rather than from disabling one side of its decision rule. These are ablation-only controllers, not additional branches used by the final ZoAQ algorithm. All variants share the same history-reuse estimator and update backbone; they differ only in how the momentum-consistency test is acted upon.
\begin{itemize}
    \item ZoAQ-Adaptive: the reported controller, which accepts the first budget whose history-reuse estimate passes the momentum-consistency gate and otherwise expands the local budget up to the cap.
    \item ZoAQ-AlwaysMomentum: a diagnostic that bypasses rejection by the gate and therefore keeps the behavior with few queries driven by history reuse even when the consistency score would have triggered expansion.
    \item ZoAQ-AlwaysGradient: a diagnostic that suppresses the history-reuse acceptance shortcut and relies on the expanded current-budget estimate instead.
\end{itemize}

Table~\ref{tab:update_mechanism_ablation} shows the results under the neutral threshold setting $\tau_0=0.0$ used for this diagnostic. The full controller achieves the best balance: the ablation using fewer queries can save queries but reaches worse final gaps, while the ablation using the expanded estimate spends more queries without matching the adaptive controller. Figure~\ref{fig:update_mechanism_comparison} shows the corresponding convergence dynamics.

\begin{table}[ht]
\caption{Diagnostic of controller actions. The full accept/expand/reset controller balances query efficiency and final accuracy across diverse landscapes.}
\label{tab:update_mechanism_ablation}
\vskip 0.15in
\begin{center}
\begin{small}
\begin{sc}
\begin{tabular}{llcc}
\toprule
Function & Variant & Final Gap & Total Queries \\
\midrule
Quadratic  & Adaptive        & 3.16e-03 & 623,893 \\
Quadratic  & AlwaysMomentum  & 6.90e-02 & 573,652 \\
Quadratic  & AlwaysGradient  & 6.49e-03 & 605,827 \\
\addlinespace
Rosenbrock & Adaptive        & 66.89    & 632,483 \\
Rosenbrock & AlwaysMomentum  & 94.50    & 568,096 \\
Rosenbrock & AlwaysGradient  & 71.21    & 615,656 \\
\bottomrule
\end{tabular}
\end{sc}
\end{small}
\end{center}
\vskip -0.1in
\end{table}

\begin{figure}[ht]
\begin{center}
\subfigure[Quadratic]{\includegraphics[width=0.78\textwidth]{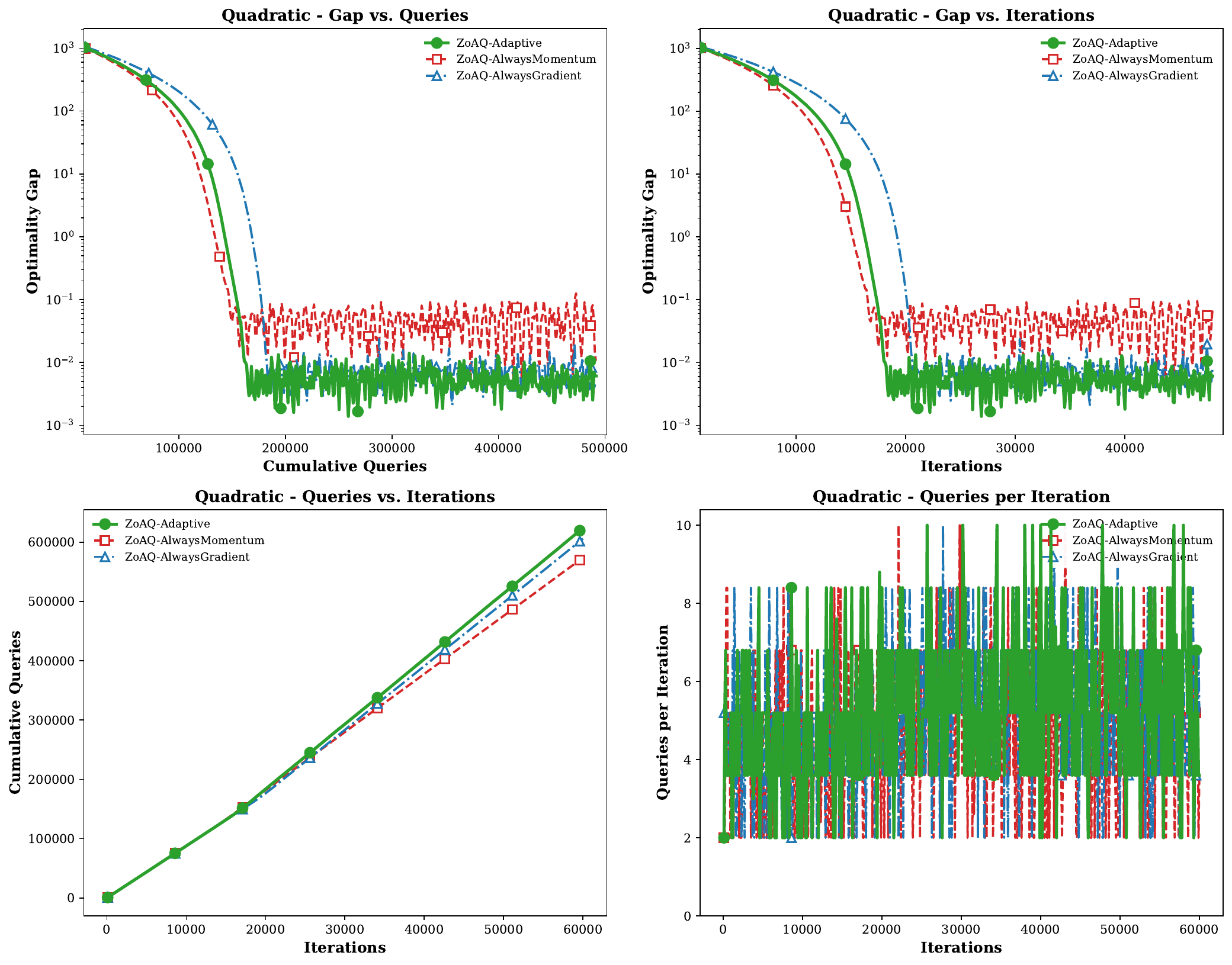}}\\[0.5ex]
\subfigure[Rosenbrock]{\includegraphics[width=0.78\textwidth]{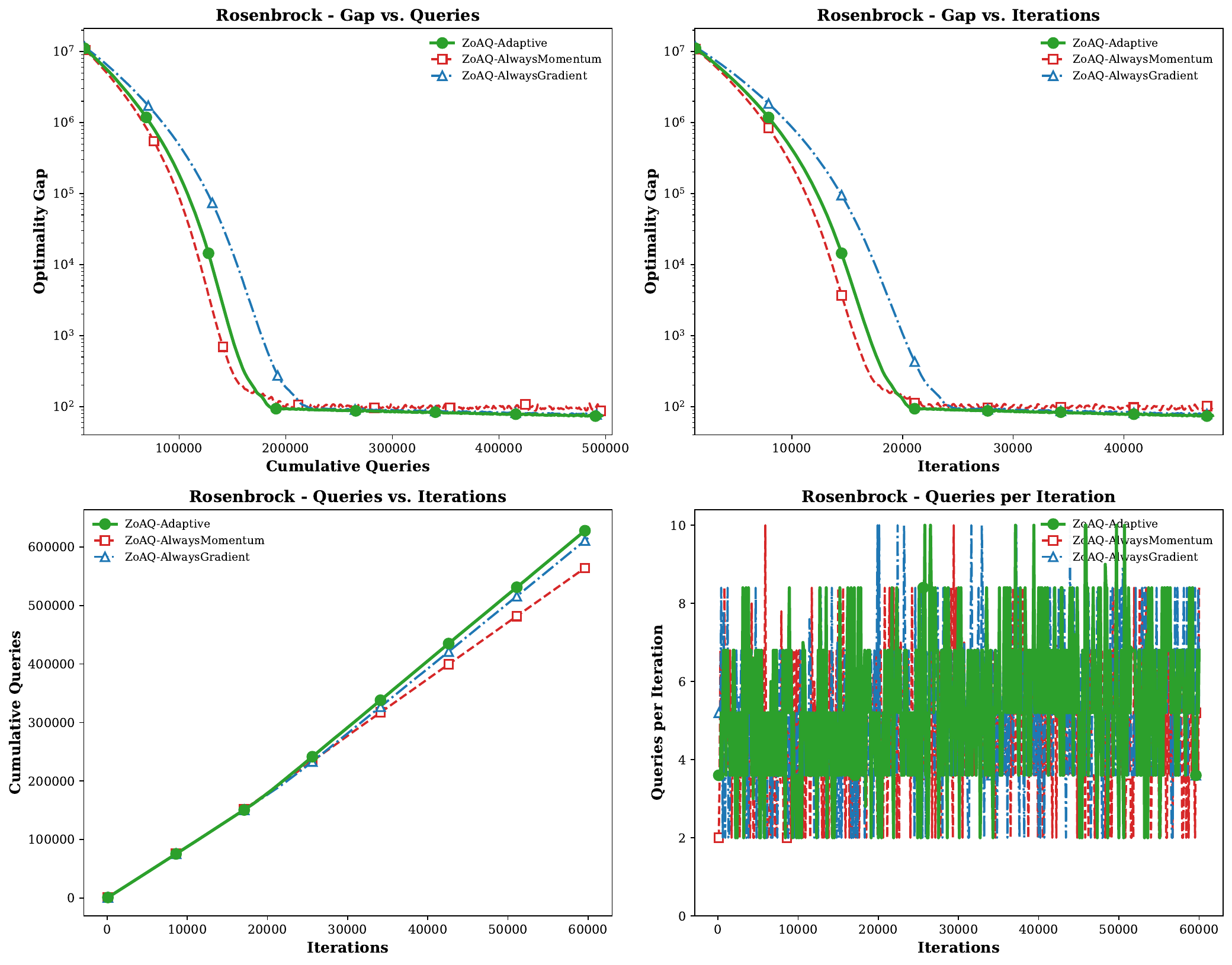}}
\caption{Convergence comparison across variants of the controller actions. The full controller (ZoAQ-Adaptive) provides the best balance between query efficiency and final accuracy.}
\label{fig:update_mechanism_comparison}
\end{center}
\end{figure}

\subsection{Threshold Sensitivity}
\label{app:tau_sensitivity}

To assess the sensitivity of ZoAQ to the initial consistency threshold $\tau_0$, we evaluated the algorithm under different threshold settings. We use a conservative initialization of $\tau_0=1.0$ for the main benchmarks (to enforce strict directional consistency at the start), and include that default alongside a neutral-range sweep of $\tau_0 \in \{-0.5, -0.3, -0.1, 0.0, 0.1, 0.3, 0.5\}$.

Table~\ref{tab:tau_sensitivity} shows representative numerical results, and Figure~\ref{fig:tau_sensitivity_grid} visualizes the convergence dynamics across all four benchmark functions. The main stable quantity in this sweep is query allocation: total queries vary little across the listed thresholds. Final gaps remain in the same qualitative regime but are not threshold-invariant, especially for stricter positive thresholds on Quadratic. This is the intended conclusion of the diagnostic: the EMA mechanism reduces sensitivity to initialization, but it does not make the threshold irrelevant.

\begin{table}[ht]
\caption{Numerical sensitivity analysis of the initial threshold $\tau_0$. Query counts are stable across the tested values, while final gaps can vary; the diagnostic supports reduced sensitivity from EMA calibration rather than threshold invariance.}
\label{tab:tau_sensitivity}
\vskip 0.15in
\begin{center}
\begin{small}
\begin{sc}
\begin{tabular}{llcc}
\toprule
Function & $\tau_0$ & Final Gap & Total Queries \\
\midrule
Quadratic  & -0.5 & 3.16e-03 & 623,893 \\
Quadratic  & -0.1 & 3.16e-03 & 623,893 \\
Quadratic  &  0.0 & 3.16e-03 & 623,893 \\
Quadratic  &  0.3 & 3.53e-03 & 623,749 \\
Quadratic  &  0.5 & 7.33e-03 & 622,775 \\
Quadratic  &  1.0 (default) & 6.35e-04 & 590,024 \\
\addlinespace
Rosenbrock & -0.5 & 64.48    & 631,996 \\
Rosenbrock & -0.1 & 67.66    & 632,016 \\
Rosenbrock &  0.0 & 66.89    & 632,483 \\
Rosenbrock &  0.3 & 67.66    & 632,602 \\
Rosenbrock &  0.5 & 66.83    & 631,655 \\
Rosenbrock &  1.0 (default) & 58.75    & 569,556 \\
\bottomrule
\end{tabular}
\end{sc}
\end{small}
\end{center}
\vskip -0.1in
\end{table}

\begin{figure}[ht]
\vskip 0.1in
\begin{center}
\subfigure[Quadratic]{\includegraphics[width=0.42\textwidth]{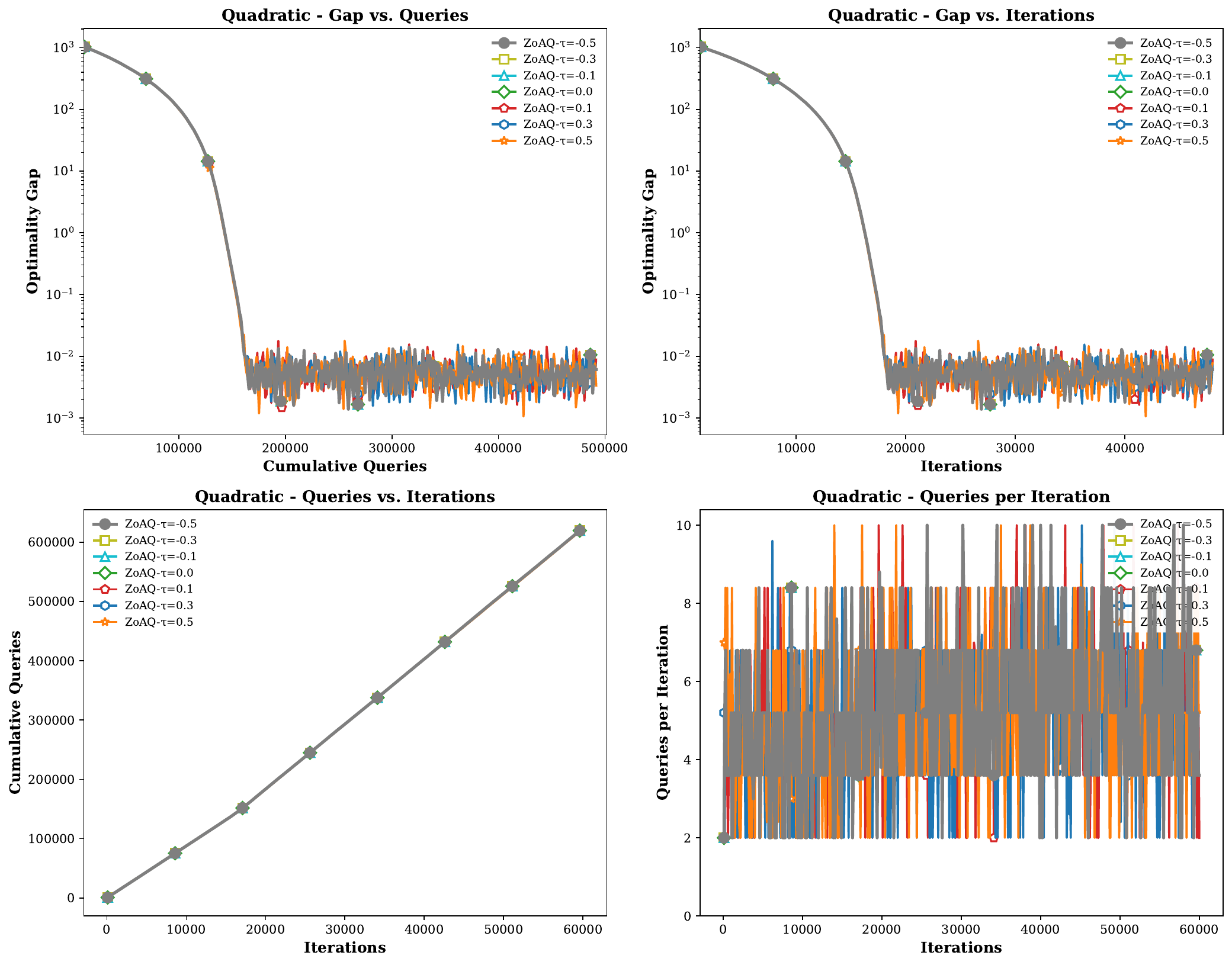}}
\subfigure[Cubic]{\includegraphics[width=0.42\textwidth]{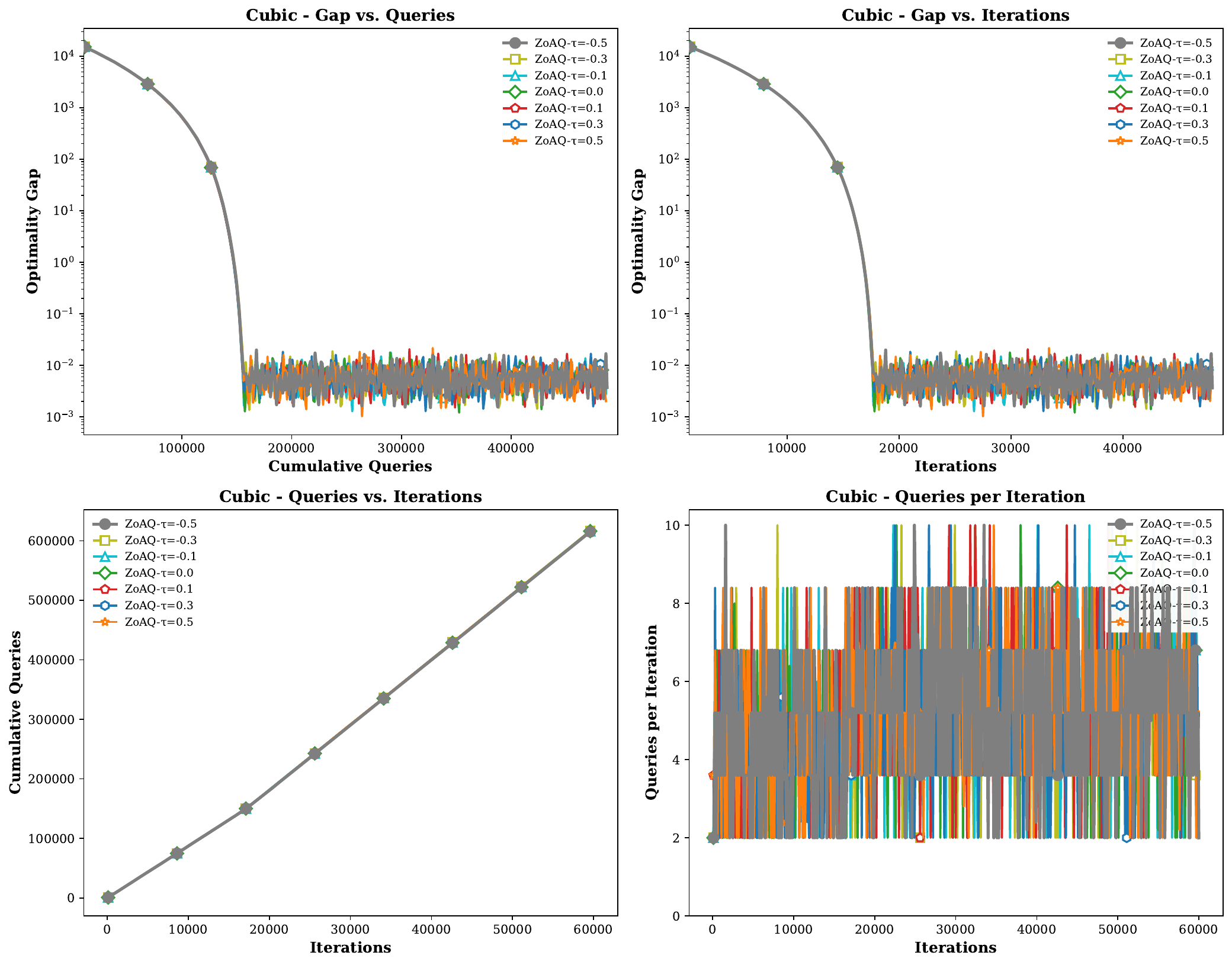}}
\\[0.5ex]
\subfigure[Levy]{\includegraphics[width=0.42\textwidth]{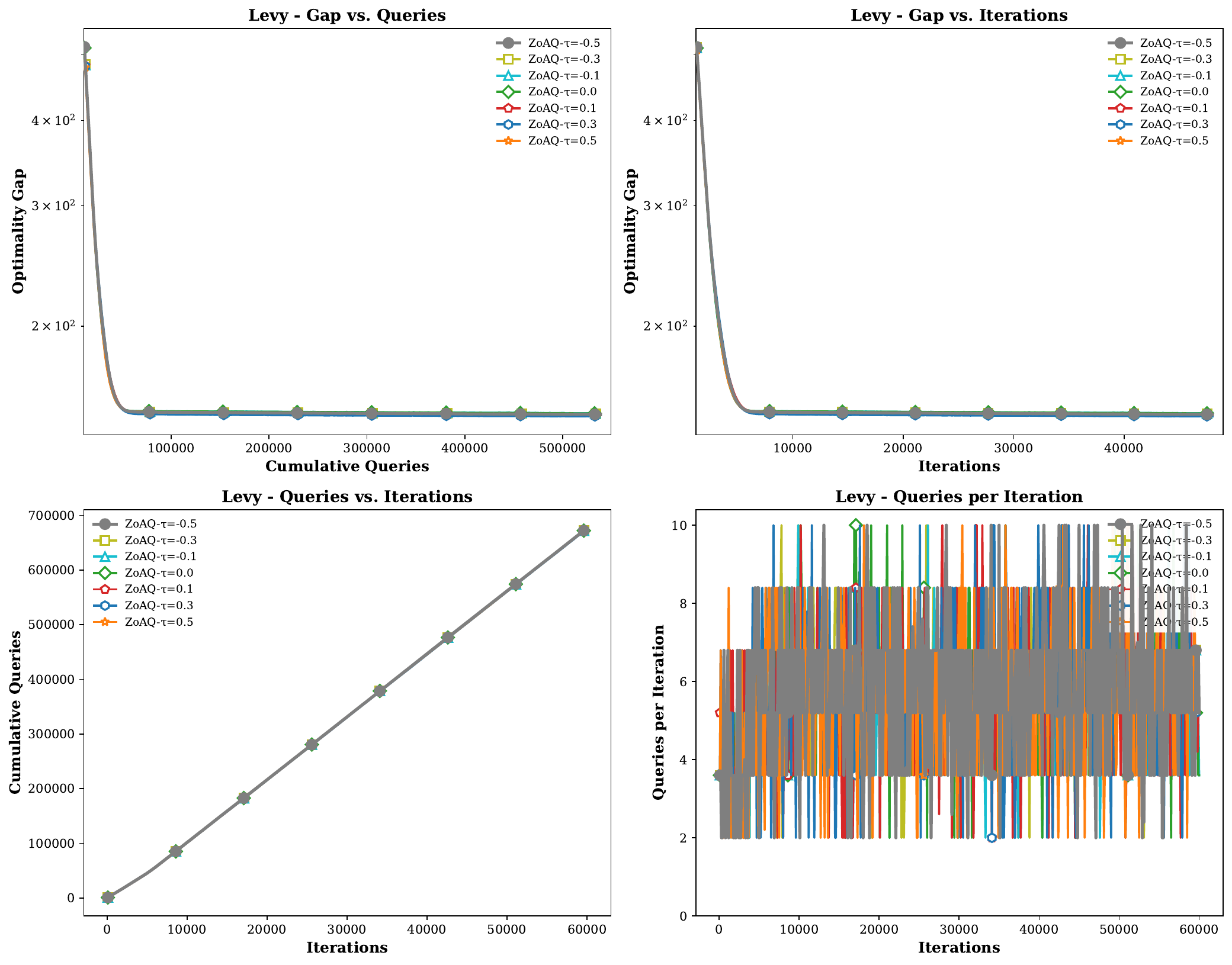}}
\subfigure[Rosenbrock]{\includegraphics[width=0.42\textwidth]{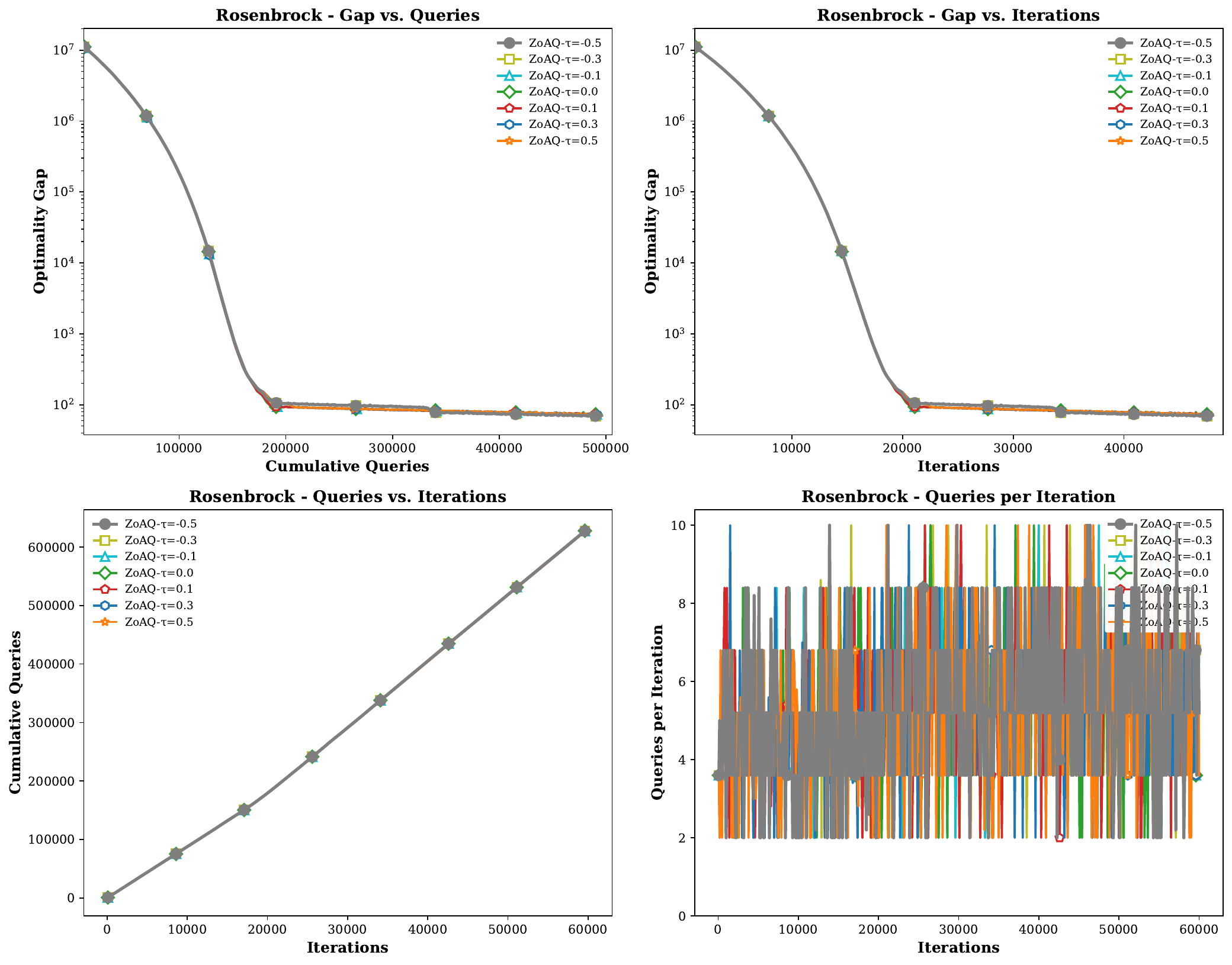}}

\caption{Sensitivity analysis of the initial threshold $\tau_0$ across four benchmarks. The trajectories are broadly stable in the tested range, while the table shows that stricter positive initial thresholds can still affect final gaps.}
\label{fig:tau_sensitivity_grid}
\end{center}
\vskip -0.1in
\end{figure}

\subsection{Sensitivity to the History Window}
\label{app:ablation_nhist}

The history window controls the expected trade-off between reuse and staleness. Very small windows are noisy, while overly long windows introduce stale search directions. On MNIST adversarial attacks, the empirical curve is convex and reaches its best point at the default $N_{\ntxt{hist}}=8$ used in the main attack experiments.

\begin{table}[ht]
\caption{Sensitivity analysis for history window size $N_{\ntxt{hist}}$ on MNIST adversarial attacks. The default $N_{\ntxt{hist}}=8$ gives the lowest average query count among the tested settings.}
\label{tab:ablation_nhist}
\vskip 0.15in
\begin{center}
\begin{small}
\begin{sc}
\begin{tabular}{lcc}
\toprule
$N_{\ntxt{hist}}$ & Avg Queries & Std Queries \\
\midrule
4 & 374.4 & 126.9 \\
6 & 362.0 & 57.6 \\
8 & 319.6 & 44.8 \\
10 & 335.6 & 65.7 \\
12 & 349.2 & 88.5 \\
\bottomrule
\end{tabular}
\end{sc}
\end{small}
\end{center}
\vskip -0.1in
\end{table}

\subsection{Synthetic Convergence Curves}
\label{app:full_plots}

Figures~\ref{fig:conv_smooth} and \ref{fig:conv_difficult} present the complete convergence dynamics for the Quadratic, Cubic, and Levy functions, complementing the Rosenbrock results shown in the main paper. The plots support three qualitative patterns in the tested suite:
\begin{itemize}
    \item Efficiency: ZoAQ (green) often reaches a low-error region with fewer queries than the fixed-budget baselines in the `Gap vs. Queries` plots.
    \item Overhead: StatZO (blue) shows an initial lag on the Quadratic and Cubic functions, consistent with the diagnosed overhead from validation.
    \item Cost: Methods with fixed budgets consume the query budget set by the protocol regardless of local difficulty.
\end{itemize}

\begin{figure}[t]
\vskip 0.1in
\begin{center}
\subfigure[Quadratic Function]{\includegraphics[width=0.78\textwidth]{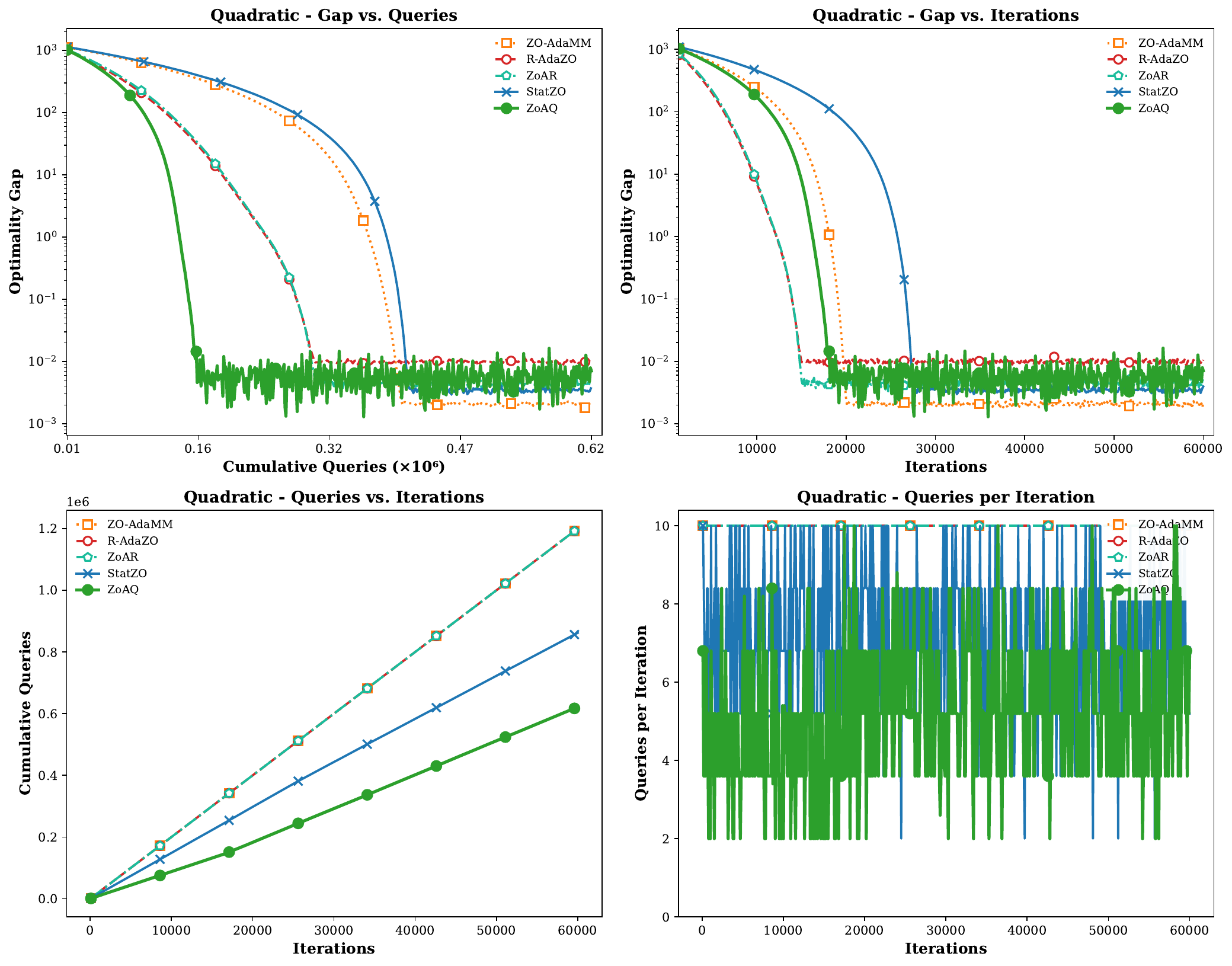}}\\[0.5ex]
\subfigure[Cubic Function]{\includegraphics[width=0.78\textwidth]{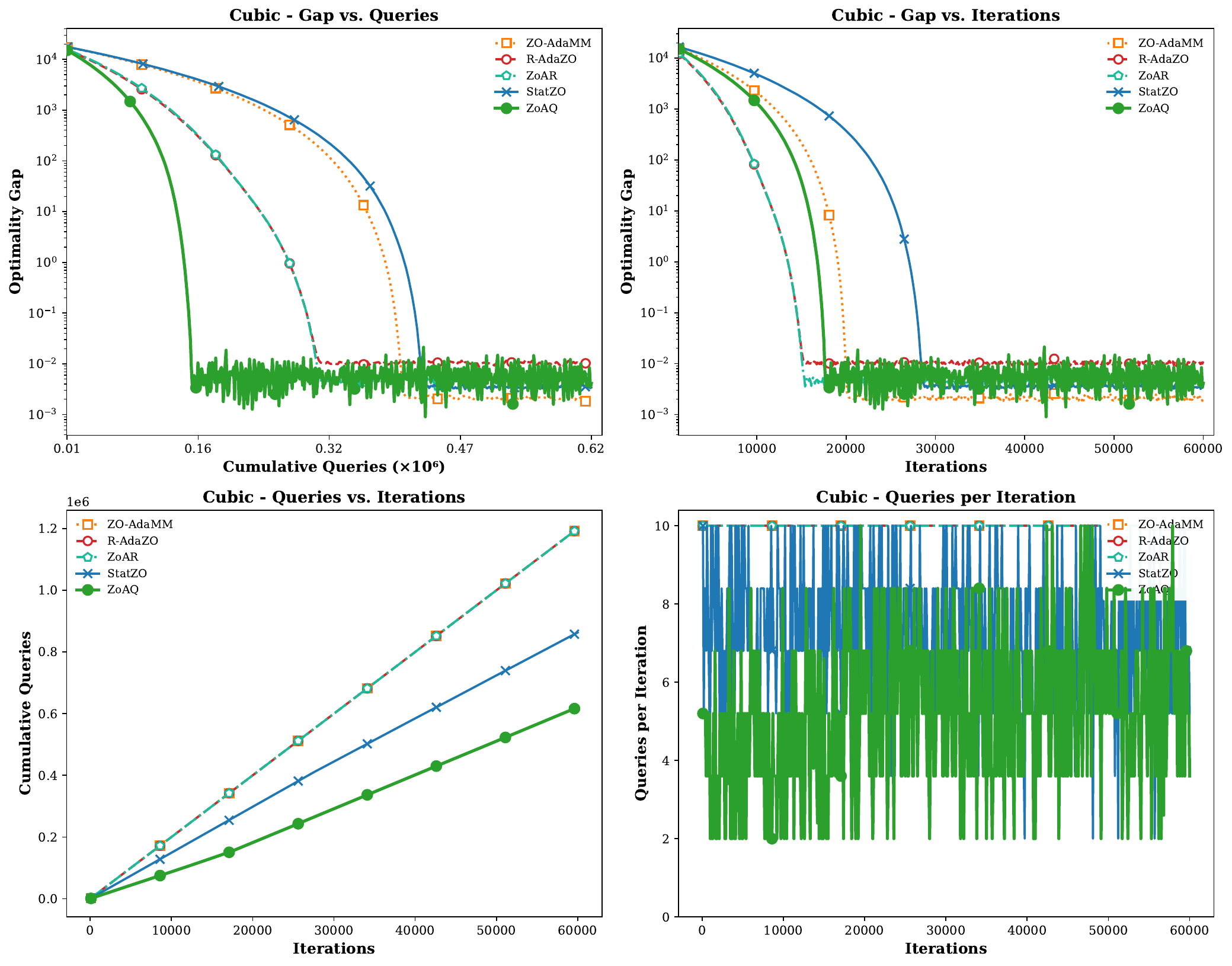}}
\caption{Full convergence dynamics on smooth landscapes (Quadratic and Cubic). The curves show ZoAQ (green) reaching low objective gaps with fewer queries in these tested settings.}
\label{fig:conv_smooth}
\end{center}
\vskip -0.1in
\end{figure}

\begin{figure}[t]
\vskip 0.1in
\begin{center}
\subfigure[Levy Function]{\includegraphics[width=0.78\textwidth]{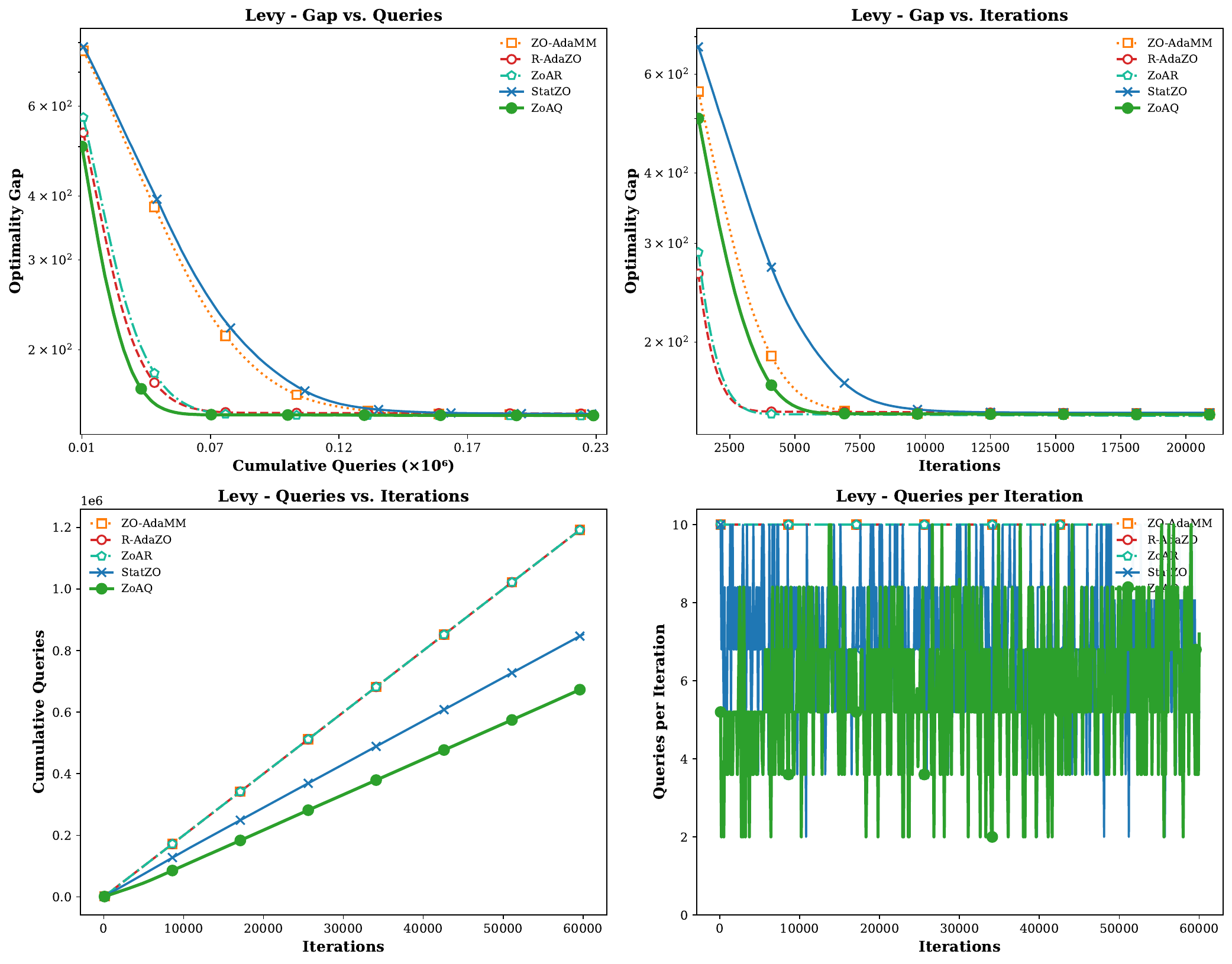}}\\[0.5ex]
\subfigure[Rosenbrock Function]{\includegraphics[width=0.78\textwidth]{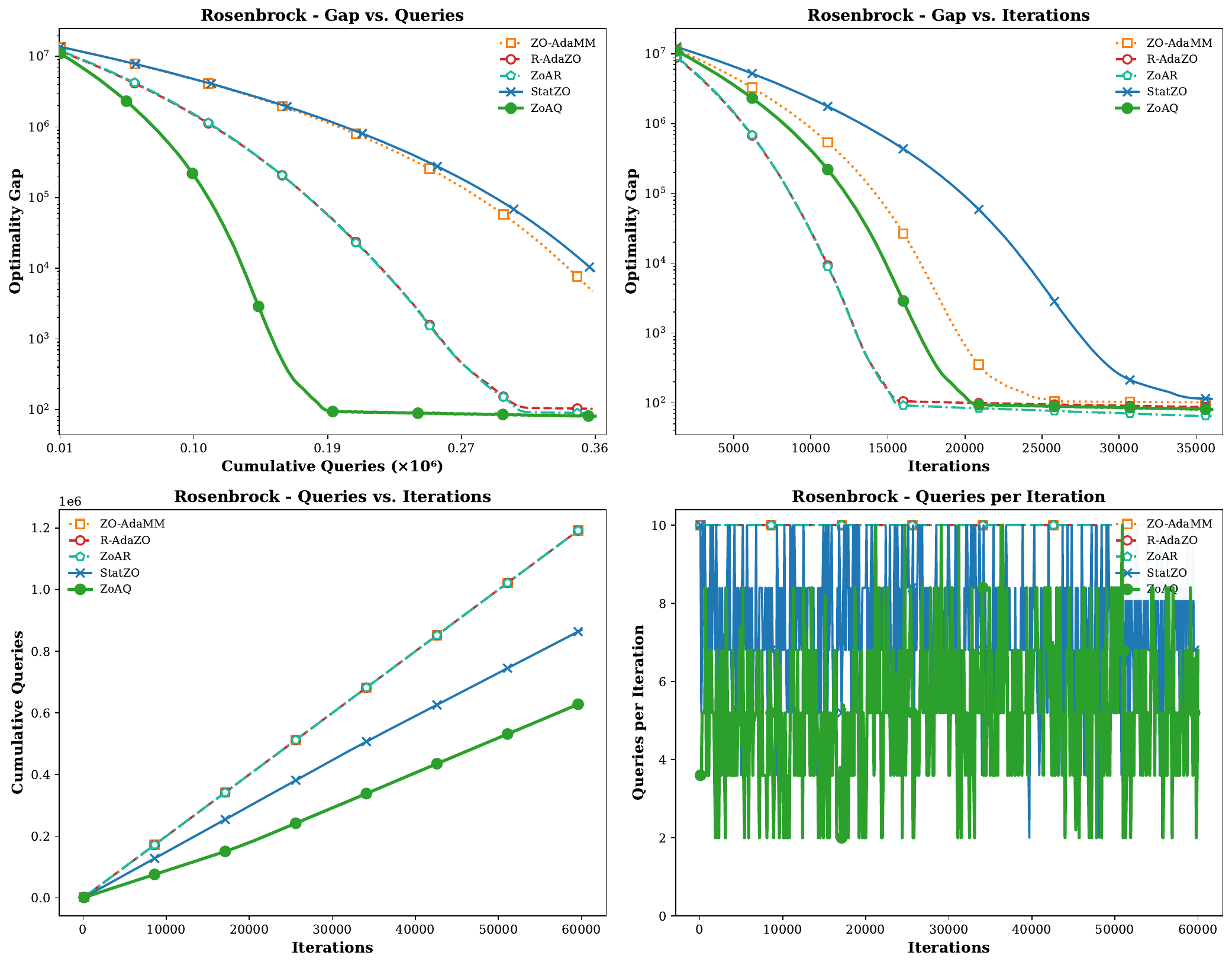}}
\caption{Full convergence dynamics on difficult landscapes (Levy and Rosenbrock). The plots test whether the same adaptive-query behavior persists beyond smooth objectives.}
\label{fig:conv_difficult}
\end{center}
\vskip -0.1in
\end{figure}

\section{Additional Mechanism Diagnostics}
\label{app:ablation}

This section gathers the secondary mechanism diagnostics referenced from the main synthetic discussion. The subsections below separate the effects of reset policy, threshold dynamics, stress tests under changing objectives, and the failure mode of the norm trigger so that each diagnostic supports a distinct question.

\subsection{Reset Ablation}
Table~\ref{tab:ablation_summary} presents the numerical comparison of different budget reset strategies. After an accepted step, `min' resets the next search to $K_{\min}$, `half' restarts from the midpoint between $K_{\min}$ and the accepted budget, and `linear' continues from the previous accepted budget and therefore reduces reset aggressiveness. The `Min' policy (Greedy Reset) is the most query-efficient among the tested reset strategies across the benchmark functions.

\begin{table}[ht]
\caption{Ablation study: performance across reset strategies (summarized). `Min` minimizes total queries across all functions; `Linear` often yields the smallest final gap but at \textasciitilde1.06M queries.}
\label{tab:ablation_summary}
\vskip 0.15in
\begin{center}
\begin{small}
\begin{sc}
\begin{tabular}{llcc}
\toprule
Function & Strategy & Final Gap & Total Queries \\
\midrule
Quadratic  & min    & 6.345e-04 & 590,024 \\
Quadratic  & half   & 6.194e-04 & 676,337 \\
Quadratic  & linear & 3.131e-04 & 1,064,733 \\
\addlinespace
Cubic      & min    & 6.460e-04 & 590,098 \\
Cubic      & half   & 6.100e-04 & 674,920 \\
Cubic      & linear & 3.588e-04 & 1,063,694 \\
\addlinespace
Levy       & min    & 1.505e+02 & 588,591 \\
Levy       & half   & 1.510e+02 & 683,240 \\
Levy       & linear & 1.506e+02 & 1,064,713 \\
\addlinespace
Rosenbrock & min    & 5.875e+01 & 569,556 \\
Rosenbrock & half   & 5.523e+01 & 657,151 \\
Rosenbrock & linear & 3.465e+01 & 1,066,003 \\
\bottomrule
\end{tabular}
\end{sc}
\end{small}
\end{center}
\vskip -0.1in
\end{table}

\subsection{Threshold Dynamics}
Figure~\ref{fig:tau_dynamics} illustrates the dynamic evolution of $\tau_t$. The EMA threshold records the recent scale of the momentum-consistency score: lower values relax the gate after inconsistent evidence, while higher values require stronger agreement before accepting a low-budget step.

\begin{figure}[ht]
\vskip 0.15in
\begin{center}
\subfigure[Quadratic]{\includegraphics[width=0.42\textwidth]{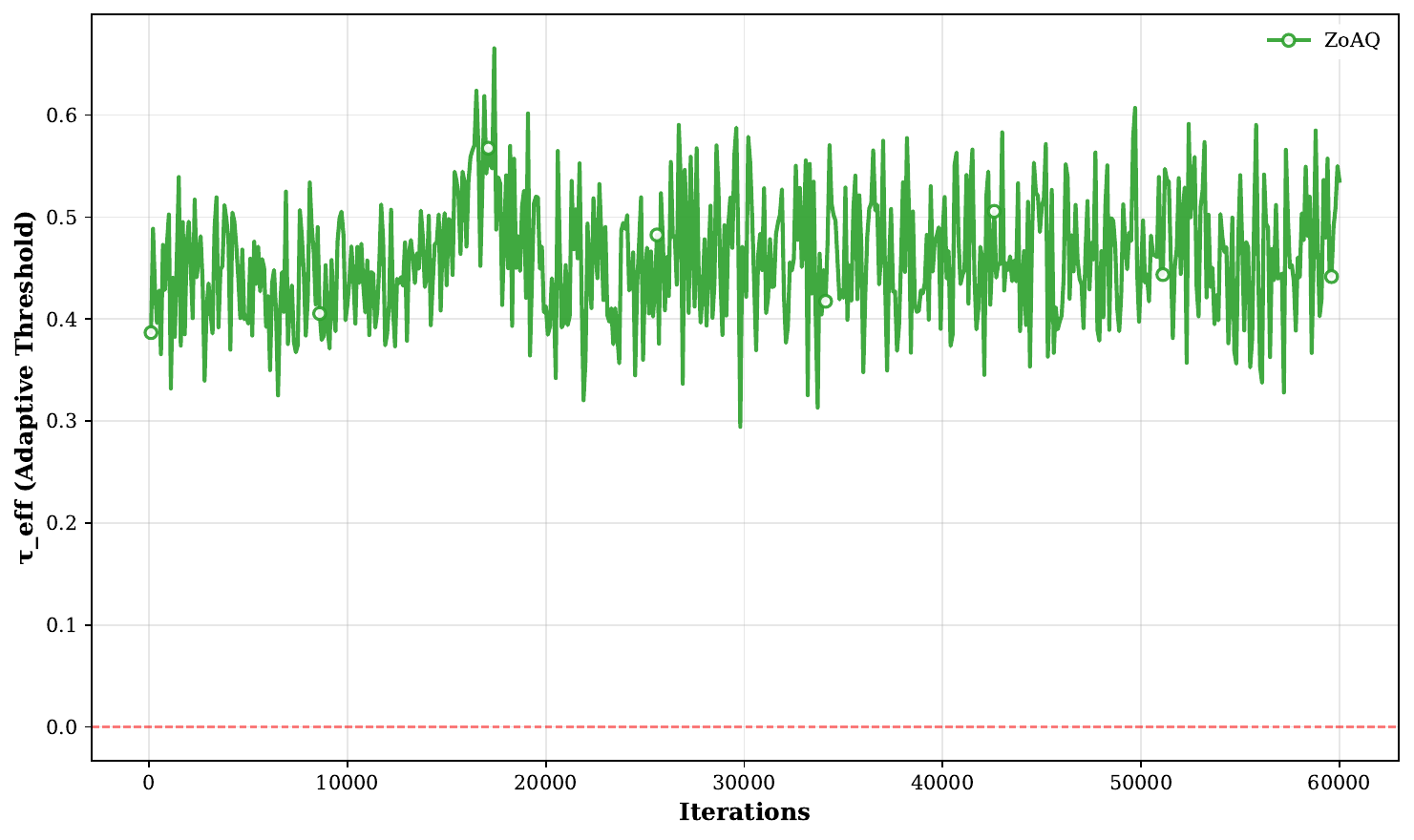}}
\subfigure[Cubic]{\includegraphics[width=0.42\textwidth]{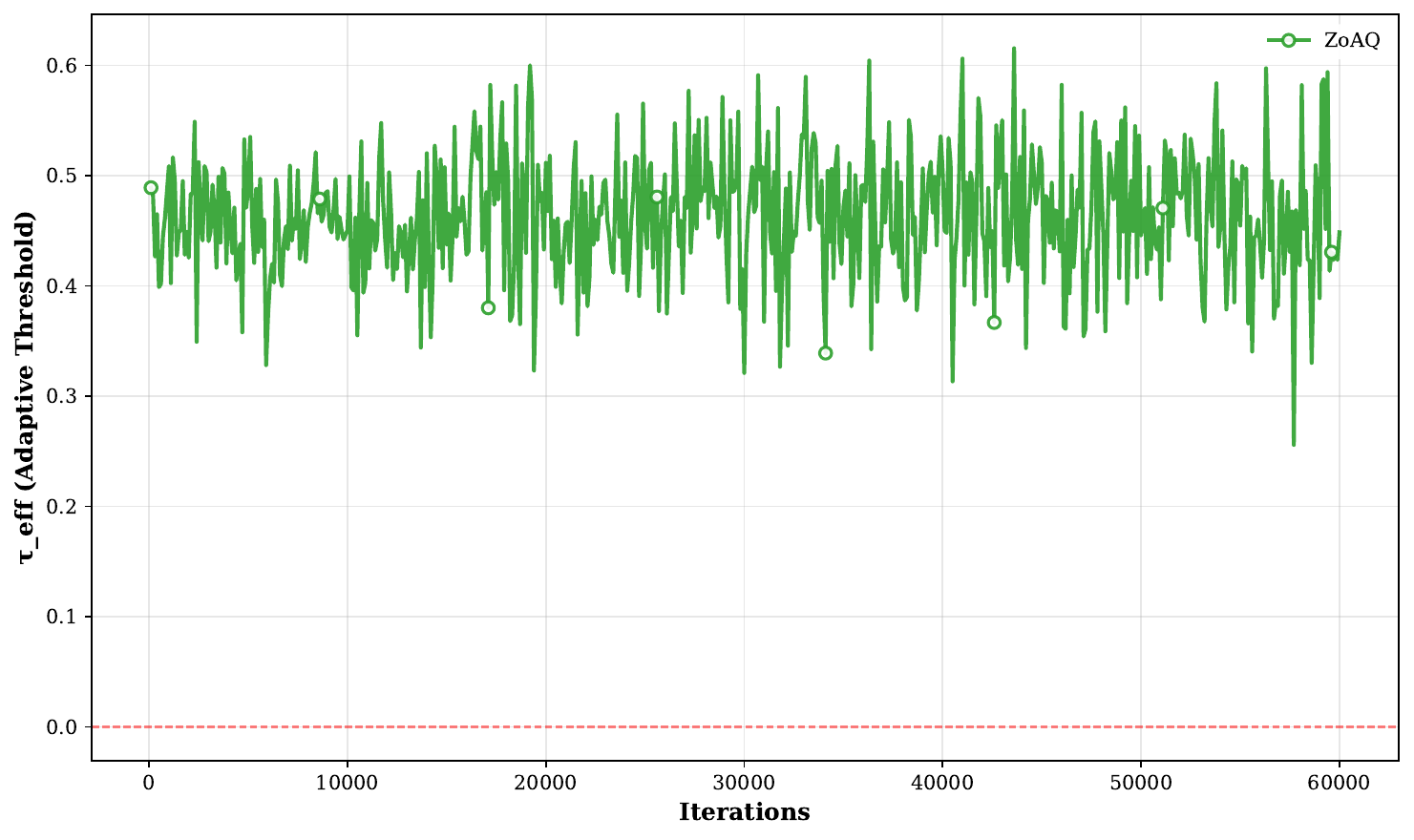}}\\[0.5ex]
\subfigure[Levy]{\includegraphics[width=0.42\textwidth]{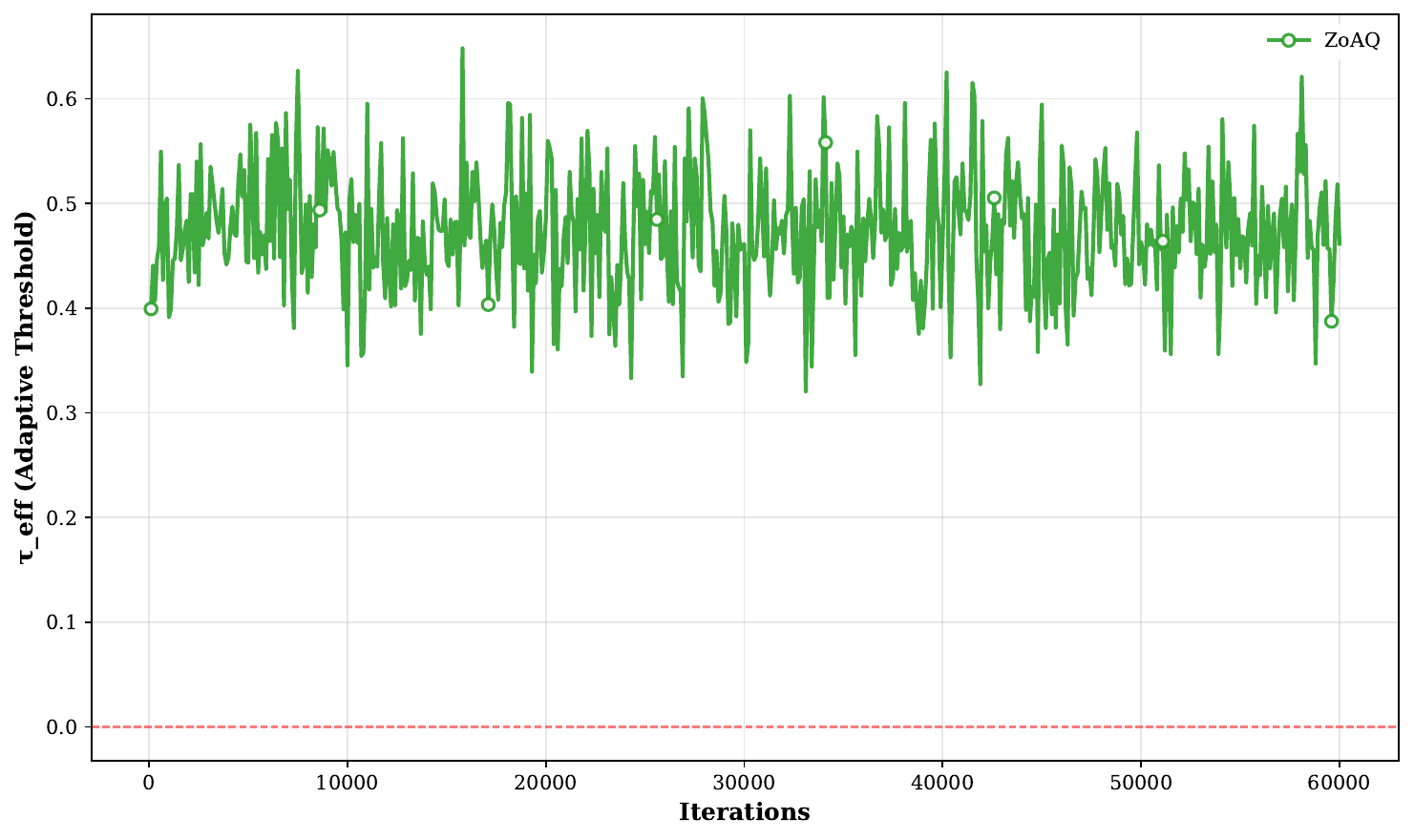}}
\subfigure[Rosenbrock]{\includegraphics[width=0.42\textwidth]{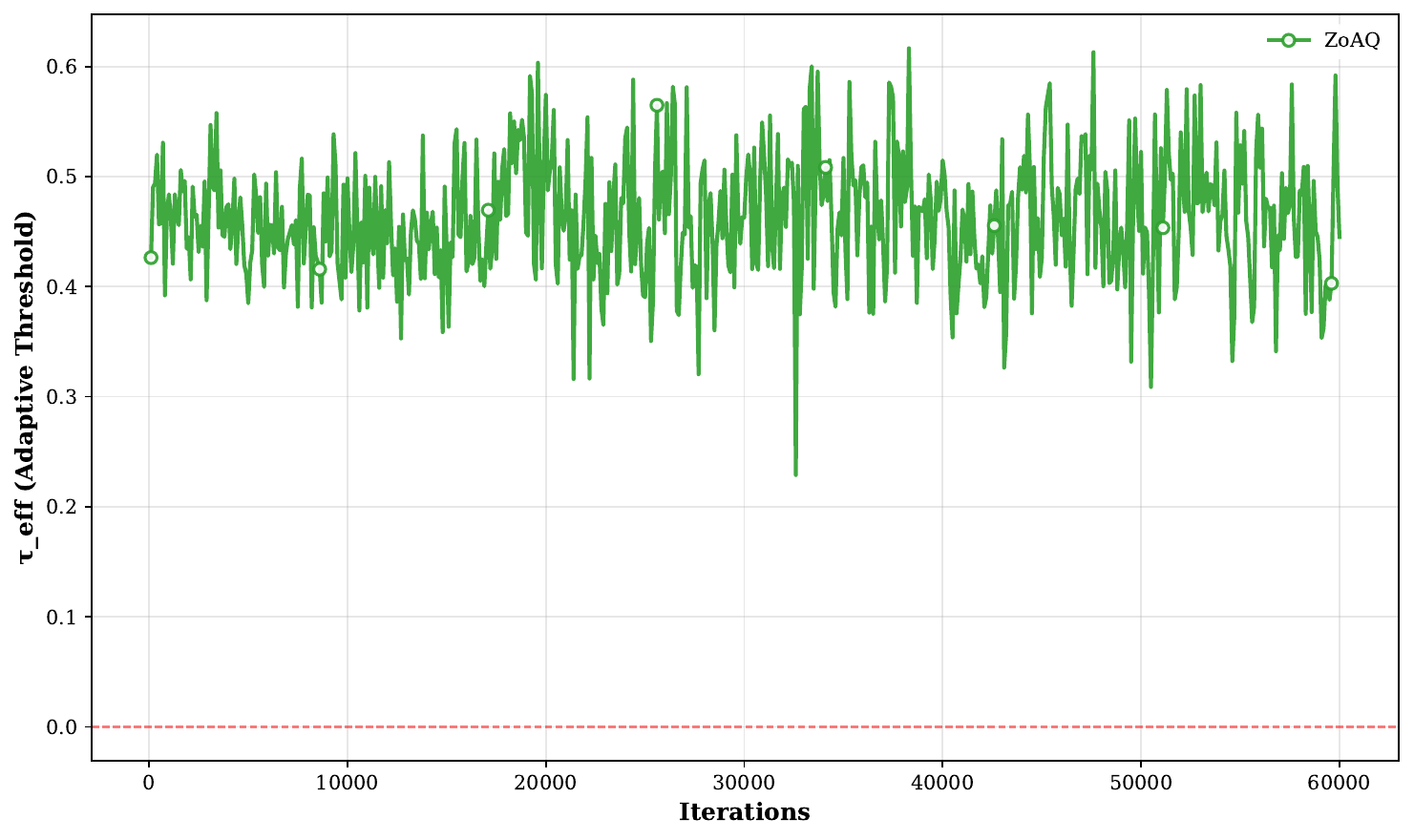}}
\caption{Dynamics of the effective threshold $\tau_t$ across functions. The EMA adapts the gate to the observed momentum-consistency scale, lowering it after inconsistent evidence and keeping it higher when recent accepted scores remain strong.}
\label{fig:tau_dynamics}
\end{center}
\vskip -0.1in
\end{figure}

\subsection{Controlled Objective Switch}
\label{app:sharp_turn}

A key concern with history reuse is whether stale gradients could trap the optimizer when the landscape changes abruptly. We design a Sharp Turn stress test to directly evaluate ZoAQ's drift-detection capability.

\paragraph{Protocol.} We optimize the Rosenbrock function ($d=100$) for 15{,}000 iterations, then instantly switch the objective to the Levy function (same $d$) and continue for another 15{,}000 iterations. This simulates an abrupt distribution shift where accumulated momentum points in a fundamentally wrong direction.

\paragraph{Results.} Figure~\ref{fig:stress_gap} shows the optimality gap. The apparent drop at step 15{,}000 is expected because Rosenbrock has naturally larger objective values than Levy. After the switch, ZoAQ reaches a lower final loss than R-AdaZO in this test, indicating that the retained buffer does not trap the method in the old landscape under this protocol.

\begin{figure}[ht]
\begin{center}
\centerline{\includegraphics[width=0.9\columnwidth]{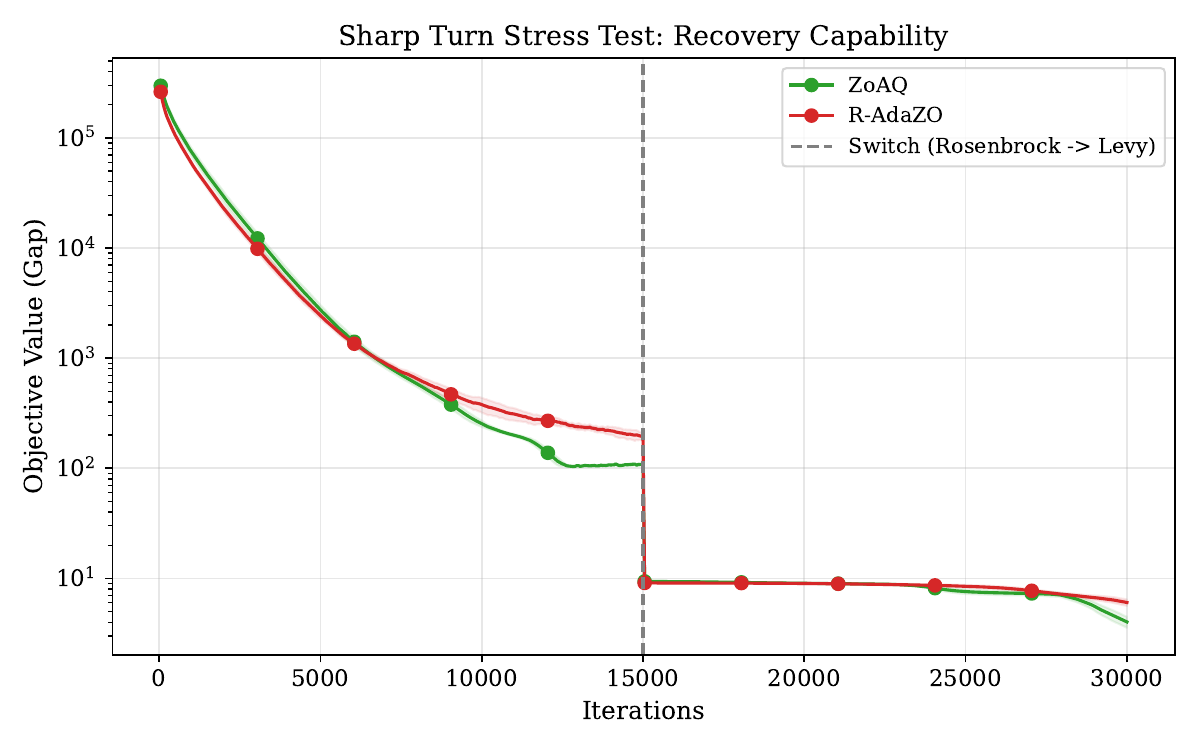}}
\caption{Optimality Gap during the Sharp Turn test. After the switch at $T=15$k, ZoAQ adapts faster to the new Levy landscape, reaching a lower final objective than R-AdaZO.}
\label{fig:stress_gap}
\end{center}
\vskip -0.1in
\end{figure}

\paragraph{Mechanism: $\tau_t$ as a drift-sensitive trigger.} Figure~\ref{fig:stress_tau} shows that, at the switch point, $\tau_t$ drops sharply toward zero, indicating a mismatch between the momentum accumulated on Rosenbrock and the new Levy geometry. The failed-consistency phase refreshes the buffer through added queries, while the lower EMA threshold records the new local score scale rather than treating the old high-consistency regime as still valid.

\begin{figure}[ht]
\begin{center}
\centerline{\includegraphics[width=0.9\columnwidth]{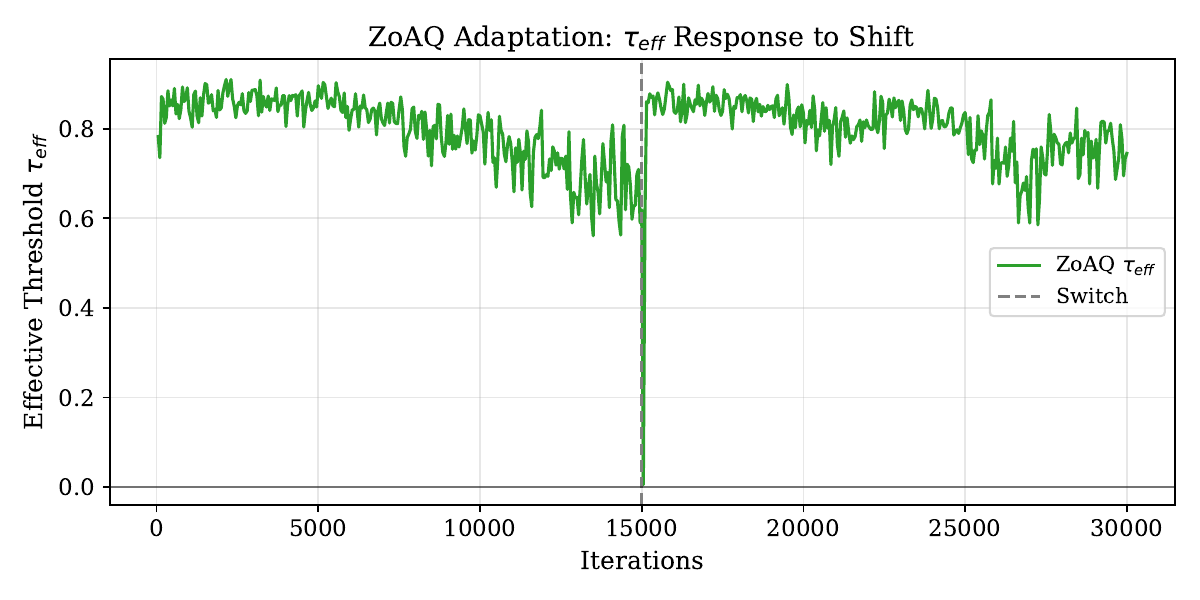}}
\caption{ZoAQ's effective threshold $\tau_t$ during the Sharp Turn. The drop at $T=15$k shows that the momentum-consistency trigger reacts to the abrupt landscape change.}
\label{fig:stress_tau}
\end{center}
\vskip -0.1in
\end{figure}

\paragraph{Takeaway.} This experiment shows that ZoAQ refreshes its history after the objective changes. The momentum-consistency trigger detects the mismatch, adds fresh evidence during failed tests, and recalibrates the acceptance scale.

\subsection{Trigger Metric Ablation}
\label{app:ablation_norm_theory}

The main text compares the momentum-consistency trigger with a relative norm trigger,
\begin{equation}
    \frac{\|\vg_t-\vm_{t-1}\|}{\|\vm_{t-1}\|}
    \le c_{\ntxt{norm}}.
    \label{eq:norm_trigger_ablation_app}
\end{equation}
This ablation asks whether matching vector magnitude is a useful allocation signal in the reported high-dimensional setting. It is not used as a theoretical lower bound on norm-based adaptive sampling.

Two effects can make Eq.~\ref{eq:norm_trigger_ablation_app} more demanding than a cosine trigger. First, the paired Gaussian contribution has the local second-moment scale in Eq.~\ref{eq:paired_gaussian_noise_scale_app}, so finite-direction estimates can fluctuate substantially in magnitude even when their directions are useful. Second, the denominator becomes sensitive when the momentum norm is small. A relative norm trigger may therefore expand the budget in response to radial fluctuation as well as directional disagreement.

The formal descent result in Appendix~\ref{app:proof_complexity} isolates this distinction with a globally normalized proxy update: its one-step bound depends on angular alignment and not on the candidate norm. This statement should not be extended automatically to the experimental R-AdaZO update. Coordinatewise second-moment preconditioning need not preserve Euclidean angles; the ablation below is empirical evidence about the tested controller and backbone, not a proof of scale invariance for Adam-style updates.

Figure~\ref{fig:ablation_norm_main} reports the comparison. On Rosenbrock with $d=10{,}000$, the relative-norm trigger uses 28.1\% more queries to reach the same target loss, while ZoAQ obtains a larger loss drop per 1k queries (1.9 versus 1.3). The result supports the use of directional consistency in this operating regime without implying that norm-based criteria are universally inferior.

\begin{figure}[ht]
\begin{center}
\centerline{\includegraphics[width=0.70\textwidth]{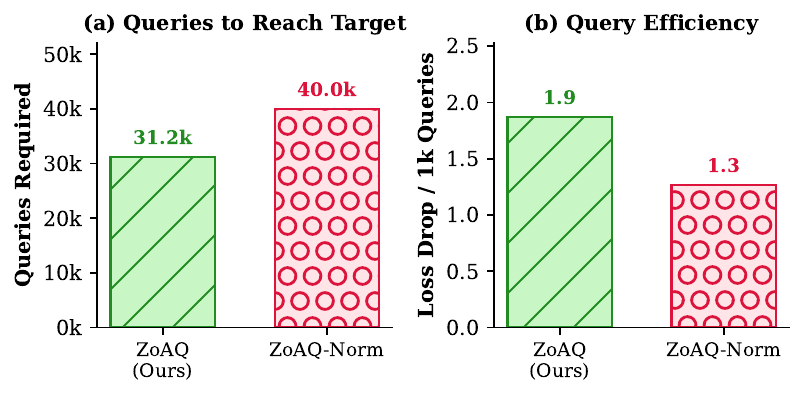}}
\caption{Trigger-metric ablation on Rosenbrock ($d=10{,}000$). In this run, the relative-norm trigger requires 28.1\% more queries to reach the same target, and its loss drop per 1k queries is 1.3 versus 1.9 for momentum consistency.}
\label{fig:ablation_norm_main}
\end{center}
\end{figure}

\section{Query-Reuse Diagnostics}
\label{app:aggregation_analysis}

This section connects the stored-response estimator analyzed in Appendix~\ref{app:proof_stat} to the empirical momentum-consistency traces. Its purpose is diagnostic: the formal guarantees come from candidate accuracy and debiased-EMA tracking, not from treating overlap-induced agreement as a certificate by itself.

\subsection{Fresh and Stored-Response Estimation}

A fresh spatial estimator uses only directions acquired at the current step,
\begin{equation}
    \vg_t^{\ntxt{fresh}}
    =
    \frac{1}{K_t}\sum_{k=1}^{K_t}
    \vg_\mu(\vtheta_t,\vu_{t,k}).
    \label{eq:spatial_agg}
\end{equation}
ZoAQ instead averages the contributions observed at acquisition in its candidate record set,
\begin{equation}
    \vg_t^{\ntxt{hist}}(K)
    =
    \frac{1}{n_t(K)}
    \sum_{j\in\mathcal J_t(K)}X_j,
    \qquad
    X_j=r_\mu(\vtheta_{a_j},\vu_j)\vu_j.
    \label{eq:history_agg}
\end{equation}
Retained records therefore increase the effective sample count without new function evaluations, but they target gradients at their acquisition iterates. The exact trade-off is the error envelope
\begin{equation}
    \|\vg_t^{\ntxt{hist}}(K)-h_t\|
    \le
    q(n_t(K),\delta)
    +\frac{L A_t(K)}{n_t(K)}.
    \label{eq:diagnostic_error_envelope_app}
\end{equation}
The first term decreases with accumulated evidence; the second prices movement since acquisition. Their balance is the mechanism used by the theory.

\begin{proposition}[When Momentum Consistency Is Interpretable]
\label{prop:directional_stability}
Suppose the simultaneous candidate event holds, and let accepted estimates satisfy $\|\vg_r-h_r\|\le\varepsilon_r$. Then Proposition~\ref{prop:tracking_anchor_main} gives
\begin{equation}
    \|\widetilde{\vm}_{t-1}-h_t\|
    \le
    \sum_{r<t}\bar w_{r,t}\varepsilon_r
    +LD_t^{\ntxt{ema}}.
    \label{eq:diagnostic_tracking_app}
\end{equation}
If the right-hand side is at most $\kappa_t\|h_t\|$, a passing score obeys the alignment bound in Eq.~\ref{eq:gate_soundness_app}. Thus history reuse can stabilize the score through improved candidate accuracy, while FIFO staleness and EMA lag determine whether that score still represents current geometry.
\end{proposition}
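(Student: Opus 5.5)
The statement to prove is Proposition~\ref{prop:directional_stability}. It is essentially a composition of two results already established, so I would prove it by chaining them rather than by new estimation.

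First, I fix the simultaneous candidate event of Lemma~\ref{lemma:variance_detailed}. On this event every inspected candidate, and in particular every accepted $\vg_r$, satisfies $\|\vg_r-h_r\|\le\varepsilon_r$ with $\varepsilon_r$ given by Eq.~\ref{eq:accepted_certificate_app}. This holds because the event is uniform over all admissible acquisition intervals, so adaptive first-passing selection does not invalidate it. Everything after this point is pathwise and deterministic, which is what allows overlapping FIFO buffers to enter without any independence argument.

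Second, I apply Proposition~\ref{prop:tracking_anchor_main} verbatim. I write $\widetilde{\vm}_{t-1}-h_t$ as the debiased-weight combination $\sum_{r<t}\bar w_{r,t}(\vg_r-h_r)+\sum_{r<t}\bar w_{r,t}(h_r-h_t)$. The first sum is bounded by $\sum_{r<t}\bar w_{r,t}\varepsilon_r$. For the second, $L$-smoothness and the triangle inequality along the path give $\|h_r-h_t\|\le L\sum_{s=r}^{t-1}\ell_s$, which produces $LD_t^{\ntxt{ema}}$. This establishes Eq.~\ref{eq:diagnostic_tracking_app}. I note in passing that the displayed tracking bound in Proposition~\ref{prop:tracking_anchor_main} is missing a ``$+$'' sign, and I would cite it in its corrected form.

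Third, I assume the right-hand side is at most $\kappa_t\|h_t\|$ with $\kappa_t<1$, so that $\widetilde{\vm}_{t-1}\neq0$. Since $z_{t-1}>0$, the score $\cos(\vg_t(K),\vm_{t-1})$ equals $\cos(\vg_t(K),\widetilde{\vm}_{t-1})$. I then invoke the soundness half of Theorem~\ref{thm:implicit_norm_condition_proof} with $m=\widetilde{\vm}_{t-1}$, $h=h_t$, and $g=\vg_t(K)$, which yields Eq.~\ref{eq:gate_soundness_app}.

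The closing interpretive sentence of the proposition only reads off the two terms of the bound, so it needs no separate proof. The only step needing care is the nondegeneracy bookkeeping: $h_t\neq0$ must hold for $\kappa_t$ to be defined, and $\kappa_t<1$ must hold so the anchor is nonzero. Neither requires anything beyond what has already been stated.
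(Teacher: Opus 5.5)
Your proposal is correct and matches the paper's proof, which likewise substitutes the accepted-candidate certificates into Proposition~\ref{prop:tracking_anchor_main} and then applies the soundness half of Theorem~\ref{thm:implicit_norm_condition_proof} to the debiased anchor. Your added details are also right: $\kappa_t<1$ forces $\widetilde{\vm}_{t-1}\neq 0$, debiasing leaves the cosine unchanged, and Eq.~\ref{eq:debiased_tracking_app} is missing a ``$+$''.
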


\begin{proof}
Equation~\ref{eq:diagnostic_tracking_app} is Eq.~\ref{eq:debiased_tracking_app} with the certificates for accepted candidates substituted. The final statement is the soundness part of Theorem~\ref{thm:implicit_norm_condition_proof}.
\end{proof}

The proposition also explains why raw agreement alone is insufficient. A large stale buffer may make consecutive estimates visually stable while tracking an outdated direction. The path-length terms in Eqs.~\ref{eq:diagnostic_error_envelope_app}--\ref{eq:diagnostic_tracking_app} exclude precisely this failure mode.

\subsection{Cosine Stability}

We record the cosine similarity between the candidate estimate and the previous momentum throughout optimization. Figure~\ref{fig:cosine_main} reports the synthetic diagnostics.

\begin{figure}[ht]
\begin{center}
\subfigure[Quadratic]{\includegraphics[width=0.45\textwidth]{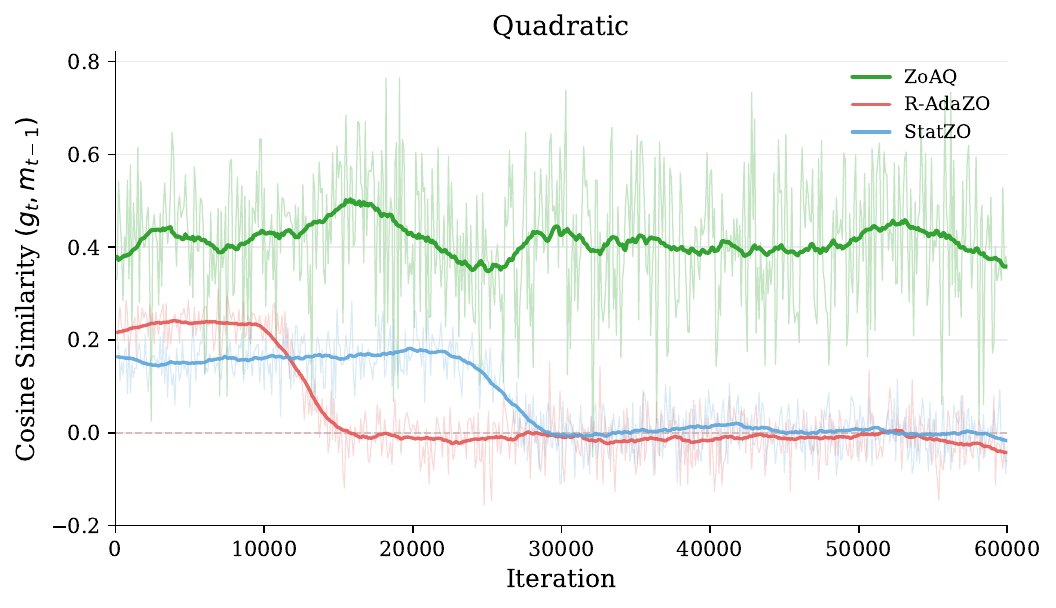}}
\subfigure[Cubic]{\includegraphics[width=0.45\textwidth]{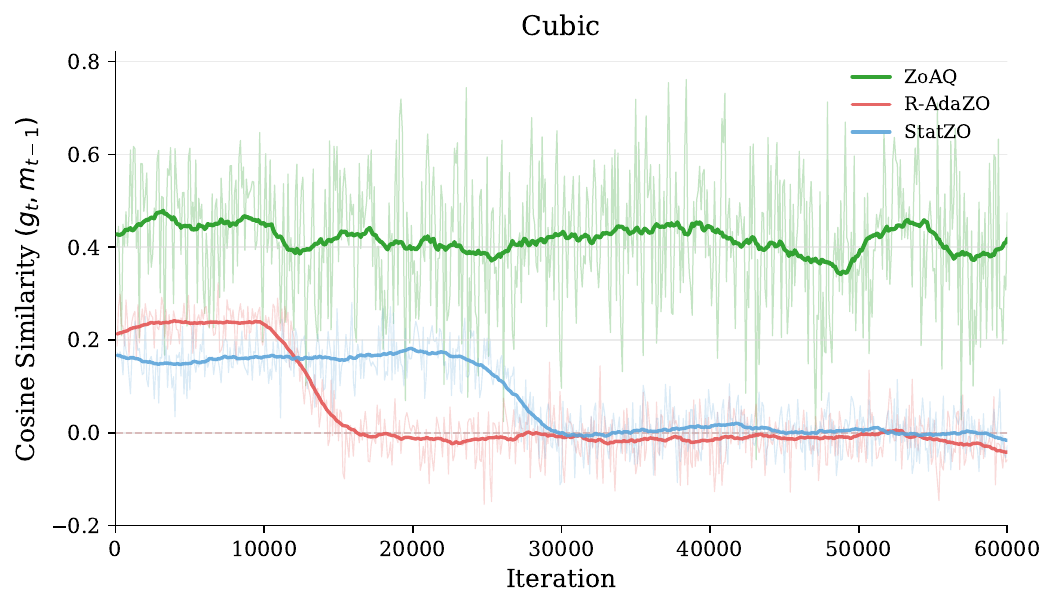}}

\subfigure[Levy]{\includegraphics[width=0.45\textwidth]{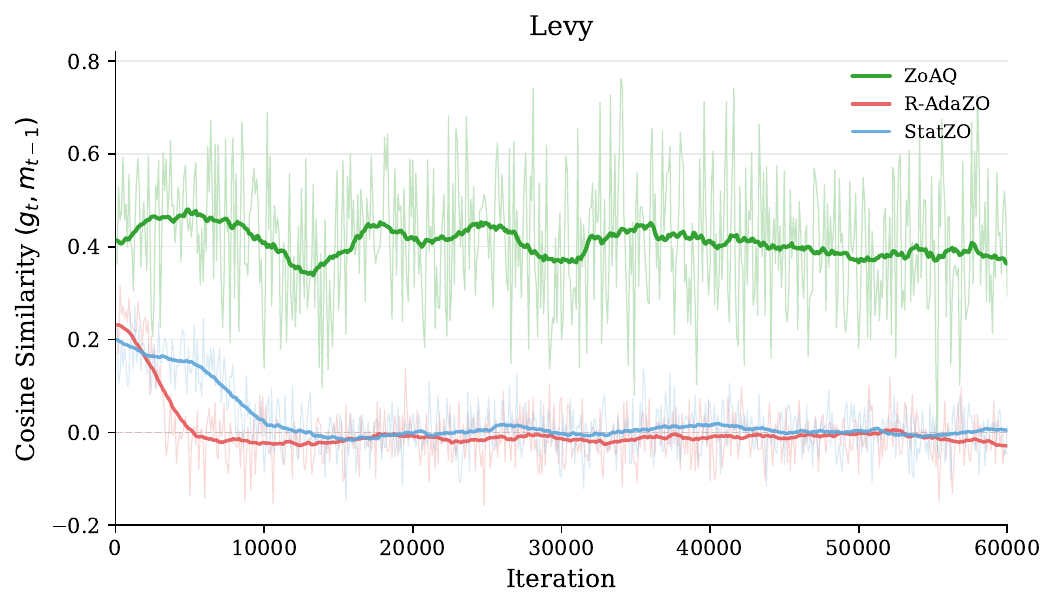}}
\subfigure[Rosenbrock]{\includegraphics[width=0.45\textwidth]{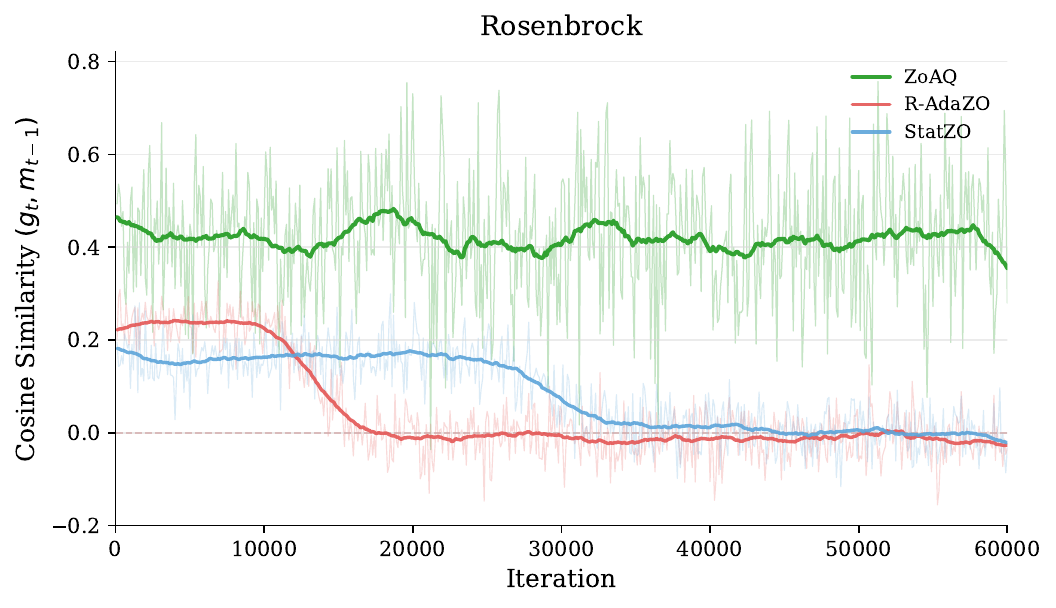}}
\caption{Synthetic cosine similarity between the candidate estimate and previous momentum under the 60k-step protocol. Curves are recorded every 100 iterations and averaged over five runs; faint lines show the recorded means and dark lines show smoothed trends. ZoAQ remains more persistently positive than the displayed baselines in these runs.}
\label{fig:cosine_main}
\end{center}
\end{figure}

The comparison is descriptive rather than a direct measurement of the theorem's tracking ratio. A positive trace is consistent with a stable reference, but the guarantee additionally requires the candidate and path-length bounds above.

Figure~\ref{fig:attack_cosine} gives the corresponding black-box attack diagnostic. The same qualitative pattern appears in this setting: the history-reuse variants maintain a more persistent positive similarity signal, whereas the spatial variants fluctuate more strongly.

\begin{figure}[ht]
\begin{center}
\subfigure[R-AdaZO (Spatial)]{\includegraphics[width=0.42\textwidth]{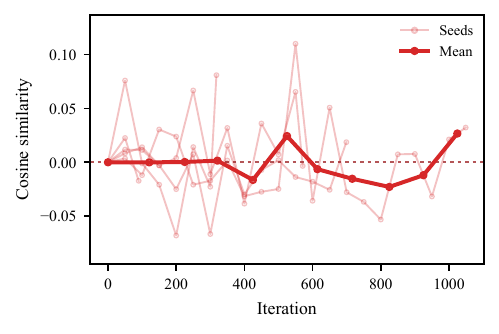}}
\subfigure[ZO-AdaMM (Spatial)]{\includegraphics[width=0.42\textwidth]{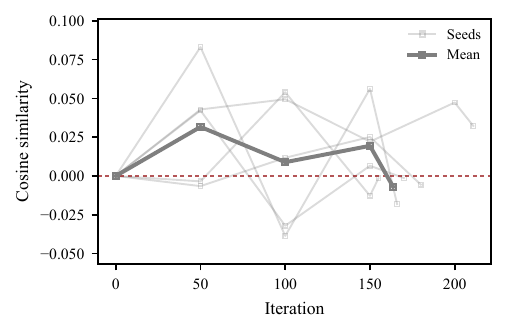}}

\subfigure[ZoAR (History Reuse)]{\includegraphics[width=0.42\textwidth]{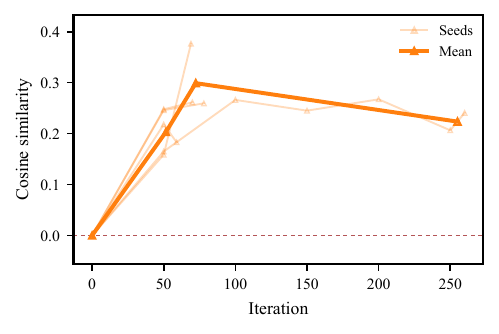}}
\subfigure[ZoAQ (History Reuse)]{\includegraphics[width=0.42\textwidth]{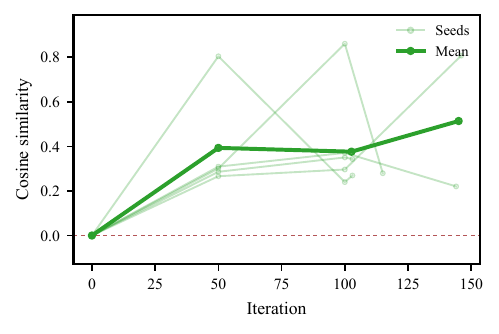}}
\caption{Momentum-consistency traces during black-box adversarial attacks on MNIST. History-reuse variants show a more persistently positive signal than the spatial variants in the reported runs.}
\label{fig:attack_cosine}
\end{center}
\end{figure}

\subsection{Query-Space and Update-Space Averaging}

History reuse and momentum average at different stages. Equation~\ref{eq:history_agg} pools acquisition records before a candidate is accepted. Momentum then averages the sequence of accepted estimates:
\begin{equation}
    \vm_t
    =
    (1-\beta_1)\sum_{r=1}^{t}\beta_1^{t-r}\vg_r.
    \label{eq:momentum_expansion}
\end{equation}
The first operation trades sampling concentration against response staleness; the second trades estimate noise against trajectory lag. Their effects are complementary but not interchangeable, and both costs appear explicitly in the proofs.

This view suggests a practical design rule. A longer history is useful in noisy, slowly moving regions, whereas faster movement favors a shorter history. The capacity scale in Eq.~\ref{eq:capacity_scale_app} makes this dependence explicit. Similarly, the momentum parameter should be interpreted through the lag factor $(1-\beta_1)^{-1}$ in Eq.~\ref{eq:uniform_debiased_tracking_app}, rather than as cost-free smoothing.

\subsection{Diagnostic Scope}

The figures support two empirical observations: candidates built from stored responses yield a more stable score in the reported regimes, and failed tests add fresh evidence before the candidate is accepted. They do not show that overlap guarantees correctness. Formal interpretability requires the same ingredients as the main theorem. These are candidate concentration valid under selection, bounded acquisition staleness, debiased EMA tracking, and a feasible threshold band.

\section{Limitations and Future Directions}
\label{app:limitations}
\label{app:extended_discussion}

We currently retain a fixed number $N_{\ntxt{hist}}$ of recent records and initialize the EMA threshold with a warm start. Our window sweep and controlled objective switch show when reused responses remain helpful and when they become stale. Adapting or reweighting the retained history as the trajectory changes is therefore a natural next step. The present analysis assumes explicit bounds on iterate movement, EMA tracking, thresholds, and local query budgets; extending it across initialization, resets, and record pruning would give a fuller account of transient behavior. We evaluate forward-pass fine-tuning on OPT-1.3B and OPT-13B with SST-2 and COPA. Broader model families, tasks, and training horizons remain to be studied.

\end{document}